\def\eqref#1{equation~\ref{#1}}
\def\1{\bm{1}}
\DeclareMathAlphabet{\mathsfit}{\encodingdefault}{\sfdefault}{m}{sl}
\SetMathAlphabet{\mathsfit}{bold}{\encodingdefault}{\sfdefault}{bx}{n}
\newcommand{\R}{\mathbb{R}}
\newtheorem{assumption}{Assumption}
\newtheorem{remark}{Remark}
\newtheorem{property}{Property}
\newcommand{\successprob}{p}
\newcommand{\byzratio}{\beta}
\newcommand{\sampleconst}[3][ ]{D_{#1}\left(#2, #3\right)}
\newcommand{\bin}[2]{\operatorname{Bin}\left(#1,#2\right)}
\newcommand{\hg}[3]{\operatorname{HG}\left(#1, #2, #3\right)}
\newcommand{\algname}{$\mathsf{FedRo}$}
\newcommand{\samplenum}{\Hat{n}}
\newcommand{\sampleth}{\Hat{n}_{th}}
\newcommand{\sampleopt}{\Hat{n}_{opt}}
\renewcommand{\eqref}[1]{(\ref{#1})}
\newcommand{\honestsamplenum}[1][]{\Hat{h}_{#1}} 
\newcommand{\byzsamplenum}[1][]{\indexvar{}{#1}{\Hat{b}}} 
\newcommand{\honestsampleset}[1][]{\indexvar{}{#1}{\mathcal{\Hat{H}}}} 
\newcommand{\byzsampleset}[1]{\indexvar{}{#1}{\mathcal{\Hat{B}}}} 
\newcommand{\setR}{\mathbb{R}}
\newcommand{\expect}[1]{\mathop{{}\mathbb{E}}\left[{#1}\right]}
\newcommand{\condexpect}[2]{\mathbb{E}_{#1}\left[{#2}\right]}
\newcommand{\probability}{\mathop{{}\mathbb{P}}}
\providecommand{\iprod}[2]{\ensuremath{\left\langle #1,\,#2  \right\rangle}}
\newcommand{\card}[1]{\left\lvert{#1}\right\rvert}
\newcommand{\norm}[1]{\left\lVert{#1}\right\rVert}
\newcommand{\indexvar}[3]{{#3}^{\ifthenelse{\equal{#1}{}}{}{\left({#1}\right)}}_{#2}}
\newcommand{\loss}{ F}
\newcommand{\honestset}{\mathcal{H}}
\newcommand{\byzset}{\mathcal{B}}
\newcommand{\honestnum}{h}
\newcommand{\byznum}{b}
\newcommand{\modelp}{y}
\newcommand{\lossperpoint}{f}
\newcommand{\outputmodel}{\hat{\model{}{}}}
\newcommand{\heterparam}{\zeta}
\newcommand{\selectedsubset}[1]{\mathcal{S}_{#1}}
\newcommand{\gradient}[2]{\indexvar{#1}{#2}{g}}
\newcommand{\model}[2]{\indexvar{#1}{#2}{x}}
\newcommand{\avghonestmodel}[1]{\hat{x}_{#1}}
\newcommand{\localstep}{\gamma_{c}}
\newcommand{\globalstep}{\gamma_{s}}
\newcommand{\realstep}{\gamma}
\newcommand{\update}[2]{\indexvar{#1}{#2}{u}}
\newcommand{\normalupdate}[2]{\indexvar{#1}{#2}{G}} 
\newcommand{\normalgrad}[2]{\indexvar{#1}{#2}{\Delta}} 
\def\P{\mathcal{P}}
\newcommand{\algo}{$\mathsf{FedRo}$}
\newcommand{\localloss}[1]{\indexvar{#1}{}{f}}
\newcommand{\dist}[1]{\indexvar{#1}{}{\mathcal{D}}}
\newcommand{\datapoint}[2]{\indexvar{#1}{#2}{\xi}}
\newcommand{\dev}[1]{\indexvar{}{#1}{e}}
\newcommand{\aggr}{\mathcal A}
\newcommand{\aggrfunc}[1]{\aggr\left({#1}\right)}
\renewcommand{\paragraph}[1]{\textbf{#1}~}
\newtheorem{definition}{Definition}
\newtheorem{theorem}{Theorem}
\newtheorem{lemma}{Lemma}
\newtheorem{corollary}{Corollary}
\renewcommand{\paragraph}[1]{\textbf{#1}~}
\def\R{\mathbb{R}}
\newcommand{\dataspace}{\mathcal{Z}}
\icmltitlerunning{Byzantine Federated Learning}
\begin{document}

\twocolumn[
\icmltitle{Byzantine-Robust Federated Learning:\\ Impact of Client Subsampling and Local Updates}

\begin{icmlauthorlist}
\icmlauthor{Youssef Allouah$^*$}{comp}
\icmlauthor{Sadegh Farhadkhani$^*$}{comp}
\icmlauthor{Rachid Guerraoui$^*$}{comp}\\
\icmlauthor{Nirupam Gupta$^*$}{comp}
\icmlauthor{Rafael Pinot$^*$}{sch}
\icmlauthor{Geovani Rizk$^*$}{comp}
\icmlauthor{Sasha Voitovych$^*$}{sss}

\vspace{.5cm}
{\small $^*$\textit{Alphabetical order}}

\end{icmlauthorlist}

\icmlaffiliation{comp}{EPFL,}
\icmlaffiliation{sch}{Sorbonne Université, LPSM}
\icmlaffiliation{sss}{University of Toronto}

\icmlcorrespondingauthor{Sadegh Farhadkhani\\ \hspace*{23.3mm}}{sadegh.farhadkhani@epfl.ch}

\icmlkeywords{Machine Learning, ICML}

\vskip 0.3in
]

\printAffiliationsAndNotice{}  %

\begin{abstract}  
    The possibility of adversarial (a.k.a., {\em Byzantine}) clients makes federated learning (FL) prone to arbitrary manipulation. The natural approach to robustify FL against adversarial clients is to replace the simple averaging operation at the server in the standard $\mathsf{FedAvg}$ algorithm by a \emph{robust averaging rule}. While a significant amount of work has been devoted to studying the convergence of federated {\em robust averaging} (which we denote by $\mathsf{FedRo}$), prior work has largely ignored the impact of {\em client subsampling} and {\em local steps}, two fundamental FL characteristics. While client subsampling increases the effective fraction of Byzantine clients, local steps increase the drift between the local updates computed by honest (i.e., non-Byzantine) clients. Consequently, a careless deployment of $\mathsf{FedRo}$ could yield poor performance. We validate this observation by presenting an in-depth analysis of $\mathsf{FedRo}$ 
    tightly analyzing the impact of client subsampling and local steps. Specifically, we present a sufficient condition on client subsampling for nearly-optimal convergence of $\mathsf{FedRo}$ (for smooth non-convex loss). Also, we show that the rate of improvement in learning accuracy {\em diminishes} with respect to the number of clients subsampled, as soon as the sample size exceeds a threshold value.  
    Interestingly, we also observe that under a careful choice of step-sizes, the learning error due to Byzantine clients decreases with the number of local steps. We validate our theory by experiments on the FEMNIST and CIFAR-$10$ image classification tasks. 
\end{abstract}

\section{Introduction}
\label{sec:intro}

Federated Learning (FL) has emerged as a prominent machine-learning scheme since its inception by~\citet{mcmahan17fedavg}, thanks to its ability to train a model without centrally storing the training data~\citep{kairouz2021advances}. 
In FL, the training data is 
distributed across multiple machines, referred to as {\em clients}. The training procedure is coordinated by a central server that iteratively queries the clients on their local training data.
The clients do not share their raw data with the server, hence reducing the risk of privacy infringement of training data usually present in traditional centralized schemes. 
The most common algorithm to train machine learning models in a federated manner is \emph{federated averaging} ($\mathsf{FedAvg}$)~\citep{mcmahan17fedavg}. In short, at every round, the server holds a model that is broadcasted to a subset of clients, selected at random. Then, each selected client executes several steps of local updates using their local data and sends the resulting model back to the server. Upon receiving local models from the selected clients, the server averages them to update the global model.

However, training over decentralized data makes FL algorithms, such as $\mathsf{FedAvg}$, quite vulnerable to adversarial,  a.k.a., {\em Byzantine}~\citep{lamport82}, clients that could sabotage the learning by sending arbitrarily bad local model updates.
The problem of robustness to Byzantine clients in FL (and distributed learning at large) has received significant attention in the past~\citep{blanchard2017machine,data2021byzantine,farhadkhani2022byzantine,karimireddy2022byzantinerobust,gupta2023byzantine,gorbunov2023variance,zhu2023byzantine,nnm, guerraoui2023byzantine}. The general idea for imparting robustness to an FL algorithm consists in replacing the averaging operation of the algorithm with a {\em robust aggregation rule}, basically seeking to filter out outliers. 
However, most previous works study the simplified FL setting where the server queries {\em all} clients, in every round, and each client only performs a {\em single} local update step. As of now, it remains unclear whether these results could readily apply to the standard FL setting \citep{mcmahan17fedavg} where the server only queries a small subset of clients in each round, and each client performs multiple local updates. We address this shortcoming of prior work by presenting an in-depth analysis of robust FL accounting for both {\em client subsampling} and {\em multiple local steps}. Our main findings are as follows.

{\bf Key results.} 
We consider $\mathsf{FedRo}$, a robust variant of $\mathsf{FedAvg}$ obtained by replacing the averaging operation at the server with {\em robust averaging}. We observe that
 \textcolor{black}{$\mathsf{FedRo}$ might not converge}
in the presence of Byzantine clients, when the server samples a small number of clients, since the subset of clients sampled might contain a majority of Byzantine clients in some learning rounds with high probability. To circumvent this  \textcolor{black}{challenge}, we analyze a sufficient condition on the client subsampling size. Specifically, let $n$ and $b$ be the total number of clients and an upper bound on the number of Byzantine clients, 
respectively.\footnote{We assume that $b < \nicefrac{n}{2}$, otherwise the goal of robustness is rendered vacuous~\citep{liu2021approximate}.} Consider $T$ learning rounds in which the server samples $\samplenum$ clients. We prove that the number of Byzantine clients sampled in each round is smaller than $\byzsamplenum{}$, with probability at least $p$, if
\begin{align}
    \samplenum \ge \min{\left\{n,\sampleconst{\frac{\byzsamplenum}{\samplenum}}{\frac{\byznum}{n}}^{-1}\ln\left(\frac{T}{1-p}\right)\right\}} \enspace, \label{eqn:intro_sample}
\end{align}
where $\sampleconst{\alpha}{\beta}$ denotes the Kullback–Leibler divergence between two Bernoulli distributions of respective parameters $\alpha$ and $\beta$. Under the above condition on the client subsampling size $\samplenum$, 
we show that, ignoring higher order error terms in $T$, $\mathsf{FedRo}$ converges to an $\varepsilon$-stationary point with (refer to Corollary~\ref{cor:fix_learningrate} for the complete expression)

{\small
\begin{align}
\label{eq:intro:rate}
    \varepsilon  \in \mathcal{O} \left( {\sqrt{\frac{1}{\samplenum T}\left[\frac{\sigma^2}{K} + \left( 1 - r\right)\heterparam^2 \right]}} + {\frac{\byzsamplenum}{\samplenum} \left( \frac{\sigma^2}{K} +   \heterparam^2\right)} \right),
\end{align}
}where $K$ is the number of local steps performed by each honest (i.e., non-Byzantine) client, $\sigma^2$ is the variance of the stochastic noise of the computed gradients, $\heterparam^2$ measures the heterogeneity of the losses across the honest clients, and {$r:=\nicefrac{\hat{n}-\hat{b}}{n-b} \in (0,1]$ is the upper bound on the ratio between the sampled and the total number of honest clients.} The {first term} of the convergence rate~\eqref{eq:intro:rate} is similar to the one obtained for $\mathsf{FedAvg}$ with two-sided step-sizes and without the presence of Byzantine clients~\citep{karimireddy2020scaffold}, and it vanishes when $T \to \infty$. 
The {second term} is the additional non-vanishing error we incur due to Byzantine clients.
Notably, our convergence analysis shows that this asymptotic error term might improve upon increasing the number of local update steps executed by honest clients.  This arises from the fact that an honest client's update is approximately the average of $K$ stochastic gradients, effectively reducing the variance of local updates while we maintain control over the deviation of local models through a careful choice of step-sizes.  It is important to note that this observation does not contradict the findings of previous works \citep{karimireddy2020scaffold, yang2021achieving} regarding the additional error due to multiple local steps, as this error term exhibits a more favorable dependence on $T$ and is omitted in~\eqref{eq:intro:rate} (see Corollary \ref{cor:fix_learningrate}). 
{
Similar observations have been made in the context of vanilla (non-Byzantine) federated learning~\cite{karimireddy2020scaffold}.}

{It is important to note that, for deriving our convergence guarantee, we do not make an assumption about the fraction of Byzantine clients {\em sampled} across all the learning rounds. Instead, we only consider a known upper bound on the {\em total} fraction of Byzantine clients in the system. The latter is a standard assumption and is essential for obtaining tight robustness guarantees, even without client subsampling and local steps~\citep{blanchard2017machine,karimireddy2022byzantinerobust, nnm}.}
We remark that condition~\eqref{eqn:intro_sample} is quite intricate since it intertwines two tunable parameters of the algorithm, i.e., $\samplenum$ and $\byzsamplenum$. We provide a practical approach to validate this condition. First, we show (by construction) that there exists a subsampling threshold $\sampleth$ such that, for all $\samplenum \geq \sampleth$, there exists $\byzsamplenum < \nicefrac{\samplenum}{2}$ such that~\eqref{eqn:intro_sample} is satisfied. Moreover, we show that increasing $\samplenum$ further leads to a provable reduction in the asymptotic error. However, for another threshold value $\sampleopt$, the rate of improvement in the error with the sample size $\samplenum$ {\em diminishes}, i.e., effectively saturates once $\samplenum$ exceeds $\sampleopt$. {This phenomenon, which we call the \emph{diminishing return} of \algname{}, does not exist in the vanilla (non-Byzantine) FL~\cite{karimireddy2020scaffold}.}

{\bf Comparison with prior work.} As noted earlier, most prior work on Byzantine robustness in FL considers a simplified FL setting. Specifically, the impact of client subsampling is often overlooked. Moreover, clients are often assumed to take only a single local step, with the exception of the work by~\cite{gupta2023byzantine}. The only existing work, to the best of our knowledge, that considers both client subsampling and multiple local steps is the work of~\cite{data2021byzantine}. However, we note that our convergence guarantee is significantly tighter on several fronts, expounded below. 

In the absence of Byzantine clients, we recover the standard convergence rate of $\mathsf{FedAvg}$ (see Section~\ref{sec:conv_fedro}).
Whereas, the convergence rate of~\cite{data2021byzantine} has a residual asymptotic error $\mathcal{O}(K\sigma^2 + K\zeta^2)$. Then, we show that the asymptotic error with Byzantine clients is inversely proportional to the number of local steps $K$ (see~\eqref{eq:intro:rate}), contrary to the aforementioned bound of~\cite{data2021byzantine} that is proportional to $K$. Moreover, while client subsampling is considered by~\cite{data2021byzantine}, it is assumed that Byzantine clients represent less than one-third of the subsampled clients in all learning rounds, which need not be true in general. Indeed, we prove that if the sample size $\hat{n}$ is small, then the fraction of subsampled Byzantine clients is larger than $1/2$ in some learning rounds with high probability  (see Section~\ref{sec:sampling_a}).
Lastly, our analysis applies to a broad spectrum of robustness schemes, unlike that of~\cite{data2021byzantine}, which only considers a specific scheme. In fact, the scheme considered in that paper has time complexity $\mathcal{O}(\min(d,\hat{n}) \cdot \hat{n}^2d)$, where $d$ is the model size.  Importantly, our findings eliminate the need for non-standard aggregation rules when incorporating local steps and client subsampling in Byzantine federated learning.

{A concurrent work \cite{malinovsky2023byzantine} proposes a clipping technique to limit the effect of Byzantine clients when they form a majority.
While this is an interesting approach, the analysis presented by~\citet{malinovsky2023byzantine} relies on full participation of clients with probability $p$ and provides an asymptotic error in $\mathcal{O}{(\nicefrac{\zeta^2}{p})}$ which is sub-optimal by a factor of $p$. 
As $p \in \mathcal{O}\left(\nicefrac{1}{m}\right)$, with $m$ being the number of local data points per client, for the gradient complexity to be comparable to SGD or \algname{}, the resulting asymptotic error 
can be very large in practice when clients have a lot of data points.
}

{\bf Paper outline.} The rest of the paper is organized as follows. In Section~\ref{sec:ModelSetting}, we formalize the problem of FL in the presence of Byzantine clients. Section~\ref{sec:algo_description} presents
the \algname{} algorithm and the property of %
robust averaging. In Section~\ref{sec:analysis}, we provide a theoretical analysis of the algorithm and give its convergence guarantees. Section~\ref{sec:sat} presents a practical approach for choosing the subsample size and shows the diminishing return of \algname{}. Lastly, in Section~\ref{sec:expe}, we validate our theoretical results through numerical experiments.

\section{Problem Statement}\label{sec:ModelSetting}
We consider the standard FL setting comprising a server and $n$ clients, represented by set $[n] := \{1,\ldots,n\}$ and a scenario where at most $b$ (out of $n$) clients may be adversarial,
 i.e., Byzantine. We define $\byzset \subset [n]$ as the subset of Byzantine clients and $\honestset \subset [n]$ the subset of honest clients such that $\honestset \cup \byzset = [n]$ and $\honestset \cap \byzset = \emptyset$. Consider a data space $\dataspace$ and a differentiable loss function $\lossperpoint: \R^d \times \dataspace \to \R$. Given a parameter $\model{}{}\in \R^d$, a data point $\datapoint{}{} \in \dataspace$ incurs a loss of $\lossperpoint(\model{}{}, \, \datapoint{}{})$. For each honest client $i\in \honestset$, we consider a local data distribution $\dist{i}$ and its associated local loss function $\localloss{i}(\model{}{}):= \condexpect{\datapoint{}{}\sim \dist{i} }{\lossperpoint(\model{}{},\datapoint{}{})}$.
The goal is to minimize the global loss function defined as the average of  the honest clients' loss functions \citep{karimireddy2022byzantinerobust}:
\begin{equation}
    \loss(\model{}{}) := \frac{1}{\card{\honestset}}\sum_{i \in \honestset} {\localloss{i}(\model{}{})}\enspace,
\end{equation}
More precisely, the goal, formally defined below, is to find an $\varepsilon$-stationary point of the global loss function in the presence of Byzantine clients.

\begin{definition}[{\bf $(n, \byznum, \varepsilon)$-Byzantine resilience}]
\label{def:resilience}
A federated learning algorithm is {\em $(n, \byznum, \, \varepsilon)$-Byzantine resilient} if, despite the presence of at most $\byznum$ Byzantine clients out of $n$ clients, it outputs $\outputmodel$ such that 
$$\expect{\norm{ \nabla \loss(\outputmodel) }^2} \leq \varepsilon \enspace,$$
where the expectation is on the randomness of the algorithm.
\end{definition}

{\bf Assumptions.} We make the following three assumptions. The first two assumptions are standard in first-order stochastic optimization~\citep{ghadimi2013stochastic,bottou2018optimization}. The third assumption is essential for obtaining a meaningful Byzantine robustness guarantee~\citep{nnm}. 

\begin{assumption}[Smoothness]
\label{ass:smoothness} 
There exists $L < \infty $ such that $\forall i \in \honestset$, and $\forall \model{}{}, \modelp{}{} \in \R^d$, we have
    \begin{equation*}
       \norm{\nabla\localloss{i}(\model{}{})- \nabla\localloss{i}(\modelp)} \leq L \norm{\model{}{}-\modelp}\enspace.
    \end{equation*}
    
\end{assumption}

Note that for all $i \in \honestset$, as $\lossperpoint$ is a differentiable function, by definition of $\localloss{i}(\model{}{})$, we have 
\begin{align*}
        \condexpect{\datapoint{}{}\sim \dist{i}}{\nabla\lossperpoint(\model{}{},\datapoint{}{})} = \nabla \localloss{i}(\model{}{})\enspace.
    \end{align*}

\begin{assumption}[Bounded local noise] There exists $\sigma < \infty$ such that $\forall i \in \honestset$, and $\forall \model{}{} \in \R^d$, we have
\label{ass:stochastic_noise}
    \begin{align*}
        \condexpect{\datapoint{}{} \sim \dist{i}}{\norm{\nabla\lossperpoint(\model{}{},\datapoint{}{})-{\nabla\localloss{i}}(\model{}{})}^2} \leq \sigma^2\enspace.
    \end{align*}
\end{assumption}

\begin{assumption}[Bounded heterogeneity]
    \label{ass:bounded_heterogeneity}
There exists $\heterparam < \infty$ such that for all $\model{}{} \in \R^d$, we have
    \begin{equation*}
        \frac{1}{\card{\honestset}}\sum_{i \in \honestset}{\norm{\nabla\localloss{i}(\model{}{})-\nabla\loss(\model{}{})}^2} \leq \heterparam^2.
    \end{equation*}
\end{assumption}

\section{Robust Federated Learning}
\label{sec:algo_description}

We present here the natural Byzantine-robust adaptation of $\mathsf{FedAvg}$. In $\mathsf{FedAvg}$, the server iteratively updates the global model using the \emph{average} of the partial updates sent by the clients. However, averaging can be manipulated by a single Byzantine client~\citep{blanchard2017machine}. Thus, in order to impart robustness to the algorithm, previous work often replaces averaging with a robust aggregation rule $\mathcal{A}$. We call this robust variant \emph{Federated Robust averaging} ($\mathsf{FedRo}$). 

{\bf Robust aggregation rule.}
At the core of $\mathsf{FedRo}$ is the robust aggregation function $\aggr$ that provides a good estimate of the honest clients' updates in the presence of Byzantine updates.  Notable robust aggregation functions are Krum~\citep{blanchard2017machine}, geometric median~\citep{pillutla2022robust,acharya2022robust, el2023strategyproofness}, mean-around-median (MeaMed)~\citep{xie2018generalized}, coordinate-wise median and trimmed mean~\citep{yin2018byzantine},
and minimum diameter averaging (MDA) \citep{el2021collaborative}. Considering a total number of $\hat{n}$ input vectors, most aggregation rules are designed under the assumption that the maximum number of Byzantine vectors is upper bounded by a given parameter $\hat{b} < \nicefrac{\hat{n}}{2}$. To formalize the robustness of an aggregation rule, we use the robustness criterion of $(\hat{n},\hat{b}, \kappa)$-robustness, introduced by~\citet{nnm}. This criterion is satisfied by most existing aggregation rules, and allows obtaining optimal convergence guarantees. 

\begin{definition}[{\bf $(\hat{n},\hat{b}, \kappa)$-robustness~\citep{nnm}}]
\label{def:resaveraging}
Let $\hat{b} < \nicefrac{\hat{n}}{2}$ and $\kappa \geq 0$. An aggregation rule $\aggr \colon \setR^{d \times \hat{n}} \to \setR^{d}$ is {\em $(\hat{n},\hat{b}, \kappa)$-robust}, if for any vectors $w_1, \ldots, \, w_{\hat{n}} \in \R^d$, and any subset $S \subseteq [\hat{n}]$ of size $\hat{n}-\hat{b}$, we have
\begin{align*}
    \norm{\aggrfunc{w_1, \ldots, \, w_{\hat{n}}} - \overline{w}_S}^2 \leq \frac{\kappa}{\card{S}} \sum_{i \in S} \norm{w_i - \overline{w}_S}^2, 
\end{align*}
 where $\overline{w}_S = \frac{1}{\card{S}} \sum_{i \in S} w_i.$
\end{definition}

{
\DecMargin{1em}
\begin{algorithm}[!htb]
    \SetKwInOut{Input}{Input}
    \SetKwInOut{Output}{Output}
    \DecMargin{1.5em}
    \Input{Initial model $\model{}{0}$, number of rounds $T$, number of local steps $K$, client step-size $\localstep$,  server step-size $\globalstep{}$, sample size $\samplenum$, and tolerable number of Byzantine clients $\byzsamplenum$.}
    \vspace{0.1cm}    
    
    \For{$t=0$ to $T-1$}{
    \textcolor{blue}{\bf Server} selects a subset $\selectedsubset{t}$ of $\samplenum$ clients, uniformly at random from the set of all clients, and  broadcasts $\model{}{t}$ to them\;
    \For{each \textcolor{blue}{\bf honest client} $i \in \selectedsubset{t}$ (in parallel)}{
    Set $\model{i}{t,0} = \model{}{t}$;\\
    \For{ $k \in \{0, \ldots, K-1 \}$}{
        Sample a data point $\datapoint{i}{t,k} \sim \dist{i}$, \\
        compute a stochastic gradient{ } { }  $\quad\quad\gradient{i}{t,k}:= \nabla\lossperpoint( \model{i}{t,k},\datapoint{i}{t,k})$, and\\ perform $\model{i}{t,k+1} := \model{i}{t,k} -\localstep \gradient{i}{t,k};$ }

        Send to the server  $\update{i}{t} : = \model{i}{t,K} - \model{}{t}$;
    }
    
    {\color{blue} (A Byzantine client $i$ 
    may send an arbitrary value for $\update{i}{t}$ to the server.)}
    
    \textcolor{blue}{\bf Server} updates the parameter vector:  $\model{}{t+1} = \model{}{t} + \globalstep \aggrfunc{\update{i}{t}, i \in \selectedsubset{t} }$ \;
    }
    \Output{$\hat{\model{}{}}$  selected uniformly at random from   $ \left\{ \model{}{0}, \ldots, \, \model{}{T-1}\right\}.$}
   \caption{\algname{}: $\mathsf{FedAvg}$ with a robust aggregation rule $\aggr$}
    \label{algorithm:dsgd}
\end{algorithm}
}

{\bf Description of $\mathsf{FedRo}$ as presented in Algorithm~\ref{algorithm:dsgd}.}
The server starts by initializing a model $\model{}{0}$. Then at each round $t \in \{0 , \dots, T-1\}$, it maintains a parameter vector $\model{}{t}$ which is broadcast to a subset $\selectedsubset{t}$ of $\samplenum$ clients selected uniformly at random from $[n]$ without replacement. Among those $\samplenum$ selected clients, some of them might be Byzantine. We assume that at most $\hat{b}$ out of the $\samplenum$ clients are Byzantine where $\hat{b}$ is a parameter given as an input to the algorithm. We show later in sections~\ref{sec:analysis} and~\ref{sec:sat} how to set $\samplenum$ and $\byzsamplenum$ such that this assumption holds with high probability. Each honest client $i \in \mathcal{S}_t$ updates its current local model as $\model{i,0}{t} = \model{}{t}$. The round proceeds in three phases:
\begin{itemize}[leftmargin=25pt]
    \item {\em Local computations.} Each honest client $i \in \selectedsubset{t}$ performs $K$ successive local updates where each local update $k \in [K]$ consists in sampling a data point $\datapoint{i}{t,k} \sim \dist{i}$, computing a stochastic gradient $\gradient{i}{t,k}:= \nabla\lossperpoint( \model{i}{t,k},\datapoint{i}{t,k})$ and making a local step: $$\model{i}{t,k+1} := \model{i}{t,k} - \localstep \gradient{i}{t,k} \enspace,$$
    where $\localstep$ is the client step-size. 
    \item {\em Communication phase.} Each honest client $i \in \selectedsubset{t}$ computes the difference between the global model and its own model after $K$ local updates as: $$\update{i}{t} : = \model{i}{t,K} - \model{}{t},$$ and sends it to the server.
    \item {\em Global model update.} Upon receiving the update vectors from all the selected clients (Byzantine included) in $\selectedsubset{t}$, the server updates the global model using an $(\samplenum,\byzsamplenum, \kappa)$-robust aggregation rule $\mathcal{A} : \R^{\samplenum \times d} \rightarrow \R ^d$. Specifically, the global update can be written as $$\model{}{t+1} = \model{}{t} + \globalstep \aggrfunc{\update{i}{t}, i \in \selectedsubset{t} } \enspace,$$
    where $\globalstep$ is called the server step-size. 
\end{itemize}

\section{Theoretical Analysis}
\label{sec:analysis}
This section presents a detailed theoretical analysis of Algorithm~\ref{algorithm:dsgd}. We first present a sufficient condition on the parameters $\byzsamplenum$ and $\samplenum$ for the convergence of $\mathsf{FedRo}$. Then, we present our main result demonstrating the convergence of \algname{} with at most $\byznum$ Byzantine clients. Finally, we explain how this result matches our original goal of designing a $(n, \byznum, \varepsilon)$-Byzantine resilient algorithm, and highlight the impact of local steps on the robustness of the scheme. 

\subsection{Sufficient Condition on $\samplenum$ and $\byzsamplenum$}

We first make a simple observation by considering the sampling of only one client at each round, i.e., $\samplenum = 1$. Then, convergence can only be ensured if this client is non-Byzantine in all $T$ rounds of training. Indeed, if a Byzantine client is sampled, it gets full control of the learning process, compromising any possibility for convergence. Accordingly, we can have convergence only with probability $(1- \nicefrac{\byznum}{n})^T$, which decays exponentially in $T$. This simple observation indicates that having an excessively small sample size can be dangerous in the presence of Byzantine clients. More generally, recall that the aggregation function requires a parameter $\byzsamplenum$, which is an upper bound on the number of Byzantine clients in the sample. If, at a given round, we have more than $\byzsamplenum$ in the sample, Byzantine clients can control the output of the aggregation and we cannot provide any learning guarantee. Specifically, {denoting} 
by $\byzsampleset{t}$ the set of Byzantine clients selected at round $t$, the learning process can only converge if the following event holds true:
\begin{align}
\label{eq:event}
    \mathcal{E}=\left\{\forall t \in \{0,\ldots, T-1 \}, \card{\byzsampleset{t}} \leq \byzsamplenum
    \right\}\enspace.
\end{align}
That is, in all rounds, the number of selected Byzantine clients is {at most} %
$\byzsamplenum$. This motivates us to devise a condition on $\samplenum$ and $\byzsamplenum$ in $\mathsf{FedRo}$ that will be sufficient for $\mathcal{E}$ to hold with high probability. This condition is provided in Lemma~\ref{lem:subsample} below, considering the non-trivial case where $b > 0$. %

\begin{restatable}{lemma}{SubsampleLemma}
\label{lem:subsample}
Let  $\successprob < 1$ and $\byznum$ be such that $0 < \nicefrac{\byznum}{n} < \nicefrac{1}{2}$. Consider \algname{} as defined in Algorithm~\ref{algorithm:dsgd}. Suppose that $\samplenum$ and $\byzsamplenum$ are such that $\nicefrac{\byznum}{n} < \nicefrac{\byzsamplenum}{\samplenum} < \nicefrac{1}{2}$ and
    \begin{equation}
    \label{eq:cond-samplenum}
    \samplenum \ge \min{\left\{n,\sampleconst{\frac{\byzsamplenum}{\samplenum}}{\frac{\byznum}{n}}^{-1}\ln\left(\frac{T}{1-p}\right)\right\}}\enspace, 
    \end{equation}
    with %
        $\sampleconst{\alpha}{\beta} := \alpha\ln\left(\nicefrac{\alpha}{\beta}\right) + (1 - \alpha)\ln\left(\nicefrac{1 - \alpha}{1 - \beta}\right)$, for $\alpha, \beta \in (0,1).$
    Then, Event $\mathcal{E}$ as defined in~\eqref{eq:event} holds true with probability at least $p$.
\end{restatable}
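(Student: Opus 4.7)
\textbf{Proof plan for Lemma~\ref{lem:subsample}.} The plan is to reduce the event $\mathcal{E}$ to a per-round tail bound on the hypergeometric distribution and then to apply a union bound over the $T$ rounds. First I would observe that, since $\selectedsubset{t}$ is drawn uniformly at random without replacement from $[n]$ and there are exactly $\byznum$ Byzantine clients in $[n]$, the number of Byzantine clients in round $t$, i.e., $\card{\byzsampleset{t}}$, follows the hypergeometric distribution $\hg{n}{\byznum}{\samplenum}$. Moreover, the draws across rounds are independent. Accordingly, by a union bound,
\begin{equation*}
\probability\!\left[\mathcal{E}^c\right] \;\le\; \sum_{t=0}^{T-1} \probability\!\left[\card{\byzsampleset{t}} > \byzsamplenum\right] \;=\; T\cdot \probability\!\left[\card{\byzsampleset{0}} > \byzsamplenum\right],
\end{equation*}
so it suffices to bound the per-round probability by $(1-\successprob)/T$.

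Next I would invoke the standard Chernoff/Hoeffding tail bound for the hypergeometric distribution (see, e.g., Chvátal or Hoeffding's inequality for sampling without replacement): for $X \sim \hg{n}{\byznum}{\samplenum}$ and any $\alpha$ with $\nicefrac{\byznum}{n} < \alpha < 1$, one has
\begin{equation*}
\probability\!\left[X \ge \samplenum\,\alpha\right] \;\le\; \exp\!\left(-\samplenum\cdot \sampleconst{\alpha}{\nicefrac{\byznum}{n}}\right),
\end{equation*}
where $\sampleconst{\cdot}{\cdot}$ is the KL divergence between two Bernoullis defined in the statement. Applying this with $\alpha = \nicefrac{\byzsamplenum}{\samplenum}$, which by assumption satisfies $\nicefrac{\byznum}{n} < \nicefrac{\byzsamplenum}{\samplenum} < \nicefrac{1}{2}$, yields
\begin{equation*}
\probability\!\left[\card{\byzsampleset{0}} > \byzsamplenum\right] \;\le\; \exp\!\left(-\samplenum\cdot \sampleconst{\nicefrac{\byzsamplenum}{\samplenum}}{\nicefrac{\byznum}{n}}\right).
\end{equation*}

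Then I would rearrange to obtain the condition on $\samplenum$: demanding the right-hand side to be at most $(1-\successprob)/T$ is equivalent to
\begin{equation*}
\samplenum \;\ge\; \sampleconst{\nicefrac{\byzsamplenum}{\samplenum}}{\nicefrac{\byznum}{n}}^{-1}\ln\!\left(\frac{T}{1-\successprob}\right),
\end{equation*}
which is exactly one branch of the condition~\eqref{eq:cond-samplenum}. The other branch $\samplenum \ge n$ forces $\samplenum = n$, in which case $\card{\byzsampleset{t}} = \byznum$ deterministically; combined with the standing hypothesis $\nicefrac{\byznum}{n} < \nicefrac{\byzsamplenum}{\samplenum} = \nicefrac{\byzsamplenum}{n}$, this gives $\byzsamplenum > \byznum$, so $\mathcal{E}$ holds with probability $1$. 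Combining both cases and the union bound gives $\probability[\mathcal{E}] \ge \successprob$.

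The main obstacle is establishing the hypergeometric tail bound in the form used above; this is a classical inequality (Hoeffding, 1963; Chvátal, 1979), but a self-contained derivation can be obtained via the standard Chernoff-moment-generating-function argument applied to the hypergeometric, using the fact that its MGF is dominated by that of $\bin{\samplenum}{\nicefrac{\byznum}{n}}$. Beyond this, the proof is just algebra and a union bound, so no further delicate estimates are needed.
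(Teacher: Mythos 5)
Your proposal is correct and follows essentially the same route as the paper's proof: handle the $\samplenum = n$ branch trivially, apply the Chvátal/Hoeffding Chernoff-type tail bound for the hypergeometric distribution at $\alpha = \nicefrac{\byzsamplenum}{\samplenum}$ to get a per-round failure probability of at most $\nicefrac{(1-p)}{T}$, and finish with a union bound over the $T$ rounds (which, incidentally, does not require independence across rounds).
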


Lemma~\ref{lem:subsample} presents a sufficient condition on $\samplenum$ and $\byzsamplenum$ to ensure that, with probability at least $p$, the server does not sample more Byzantine clients than the expected number $\byzsamplenum$. Note, however, that this condition is non-trivial to satisfy {due to the convoluted dependence between $\byzsamplenum$ and $\samplenum$}. %
We defer to Section~\ref{sec:sat} an explicit strategy to choose both $\samplenum$ and $\byzsamplenum$ so that~\eqref{eq:cond-samplenum} holds. In the next section, we will first demonstrate the convergence of $\mathsf{FedRo}$ given this condition. 

\subsection{Convergence of \algname{}}
\label{sec:conv_fedro}

We now present the convergence analysis of \algo{} in Theorem~\ref{th:main} below. Essentially, we consider Algorithm~\ref{algorithm:dsgd} with sufficiently small constant step-sizes $\localstep$ and $\globalstep$, and when assumptions~\ref{ass:smoothness},~\ref{ass:stochastic_noise}, and~\ref{ass:bounded_heterogeneity} hold true. 
We show that, with high probability, \algo{} achieves a training error similar to $\mathsf{FedAvg}$ \citep{karimireddy2020scaffold}, plus an additional error term due to Byzantine clients. 
\noindent \fcolorbox{black}{white}{
\parbox{0.47\textwidth}{\centering
\begin{theorem}
\label{th:main}
Consider Algorithm~\ref{algorithm:dsgd}. Suppose assumptions~\ref{ass:smoothness},~\ref{ass:stochastic_noise}, and~\ref{ass:bounded_heterogeneity} hold true. Let $\successprob < 1$, and assume $\samplenum$ and $\byzsamplenum$ are such that  $\nicefrac{\byznum}{n} < \nicefrac{\byzsamplenum}{\samplenum} < \nicefrac{1}{2}$ and \eqref{eq:cond-samplenum} hold. Suppose $\aggr \colon \setR^{d \times\samplenum} \to \setR^{d}$ is a $(\samplenum, \byzsamplenum, \kappa)$-robust aggregation rule and the step-sizes are such that $\localstep \leq \nicefrac{1}{16LK}$ and $\localstep\globalstep \leq \nicefrac{1}{36LK}$. Then,  with probability at least $p$ we have,
    \begin{align*}
    & \frac{1}{T} \sum_{t =0}^{T-1}\expect{\norm{\nabla\loss(\model{}{t})}^2} \leq  \frac{5}{TK \localstep \globalstep} \Delta_0\\ & + \frac{20L K \localstep \globalstep}{\samplenum}\left( \frac{\sigma^2}{ K}+6\left(1 - \frac{\samplenum -\byzsamplenum  }{n-\byznum}\right) \heterparam^2 \right) \\
    & + \frac{10}{3}\localstep^2 L^2 (K-1)\sigma^2 + \frac{40}{3}\localstep^2L^2K(K-1)\heterparam^2 \\&+ 165  \kappa \left( \frac{\sigma^2}{K} + 6  \heterparam^2\right),
\end{align*}
    where $\Delta_0$ satisfies ${\loss(\model{}{0})} - \min_{\model{}{} \in \R^d} \loss (\model{}{})\leq \Delta_0$.
\end{theorem}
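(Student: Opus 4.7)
The plan is to adapt the standard descent-lemma analysis of $\mathsf{FedAvg}$ with two-sided step-sizes (as in \cite{karimireddy2020scaffold}) to simultaneously absorb (i) the bias introduced by the $(\samplenum,\byzsamplenum,\kappa)$-robust aggregator, (ii) the sampling noise from drawing $\samplenum$ clients without replacement, and (iii) the drift induced by $K$ local SGD steps. Throughout, I would condition on the event $\eventfewbyz$ from \eqref{eq:event}; by Lemma~\ref{lem:subsample} this event holds with probability $\ge p$ under the stated hypotheses on $\samplenum$ and $\byzsamplenum$, so every bound derived conditionally on $\eventfewbyz$ yields a ``with probability at least $p$'' guarantee. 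On $\eventfewbyz$, the sampled honest set $\honestsampleset[t]:=\selectedsubset{t}\cap\honestset$ satisfies $\card{\honestsampleset[t]} \ge \samplenum-\byzsamplenum$, so one may invoke Definition~\ref{def:resaveraging} with some subset $S\subseteq\honestsampleset[t]$ of size $\samplenum-\byzsamplenum$.

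First I would write the descent inequality from Assumption~\ref{ass:smoothness} applied to $F(\model{}{t+1})$. Writing $\mathcal{A}_t:=\aggrfunc{\update{i}{t},\, i\in\selectedsubset{t}}$ and $\bar u_t:=\frac{1}{\card{\honestsampleset[t]}}\sum_{i\in\honestsampleset[t]}\update{i}{t}$, decompose $\mathcal{A}_t = \bar u_t + (\mathcal{A}_t - \bar u_t)$. The bias term $\norm{\mathcal{A}_t-\bar u_t}^2$ is bounded via $(\samplenum,\byzsamplenum,\kappa)$-robustness by $\tfrac{\kappa}{\card{\honestsampleset[t]}}\sum_{i\in\honestsampleset[t]}\norm{\update{i}{t}-\bar u_t}^2$, which reduces the analysis to two more tractable quantities: the \emph{intra-honest variance} of the local updates, and the \emph{``ideal'' inner product} $\iprod{\nabla F(\model{}{t})}{\bar u_t}$. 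To handle both, I would use the representation $\update{i}{t}=-\localstep\sum_{k=0}^{K-1}\gradient{i}{t,k}$, add and subtract $\nabla\localloss{i}(\model{}{t})$, and invoke Assumptions~\ref{ass:stochastic_noise}--\ref{ass:bounded_heterogeneity}. The \emph{client-drift lemma} --- standard in FedAvg analyses --- controls $\expect{\norm{\model{i}{t,k}-\model{}{t}}^2}$ by unrolling the local recursion; together with the step-size condition $\localstep \le \tfrac{1}{16LK}$, this yields the $\localstep^2 L^2(K-1)\sigma^2$ and $\localstep^2 L^2 K(K-1)\heterparam^2$ terms in the theorem.

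The next delicate step is the variance coming from subsampling honest clients. Conditioned on $\eventfewbyz$ and on the gradient randomness, $\honestsampleset[t]$ may be viewed (after symmetrization over the Byzantine slots) as a size-$\ge(\samplenum-\byzsamplenum)$ sample drawn uniformly without replacement from the $n-\byznum$ honest clients. A standard finite-population variance calculation gives an inequality of the form $\expect{\norm{\bar u_t - \bar u_{\honestset}}^2\mid\text{grads}}\le \tfrac{1}{\samplenum-\byzsamplenum}(1-\tfrac{\samplenum-\byzsamplenum}{n-\byznum})\cdot(\text{per-client variance})$; after combining with Assumption~\ref{ass:bounded_heterogeneity} and the drift bound, this produces precisely the factor $(1-\tfrac{\samplenum-\byzsamplenum}{n-\byznum})\heterparam^2$ appearing in the second term of the bound. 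In parallel, the intra-honest variance $\sum_{i\in\honestsampleset[t]}\norm{\update{i}{t}-\bar u_t}^2$ can be expanded as the sum of a per-step stochastic-noise contribution $\propto K\sigma^2$ and a heterogeneity contribution $\propto K^2\heterparam^2$; dividing by $\card{\honestsampleset[t]}$ and by the implicit factor $K^2\localstep^2$ encoded in $\update{i}{t}$ then yields the $\kappa(\sigma^2/K+6\heterparam^2)$ asymptotic Byzantine term --- and in particular explains the favourable $1/K$ scaling of the stochastic-noise part.

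Finally I would assemble the ingredients: plug the bias bound and the subsampling-variance bound into the descent inequality, use $\localstep\globalstep \le \tfrac{1}{36LK}$ to absorb the residual $\norm{\nabla F(\model{}{t})}^2$ terms arising from client drift into the left-hand side with a constant factor (hence the $\tfrac{1}{5}$ on the gradient norm, equivalently $5$ appearing in the first term), sum from $t=0$ to $T-1$, telescope $F(\model{}{0})-F(\model{}{T})\le\Delta_0$, and divide by $TK\localstep\globalstep$. I expect the main obstacle to be the careful bookkeeping in the subsampling step: cleanly extracting the factor $(1-\tfrac{\samplenum-\byzsamplenum}{n-\byznum})$ under the conditioning on $\eventfewbyz$ (where the sampling distribution of $\honestsampleset[t]$ is only approximately uniform among size-$(\samplenum-\byzsamplenum)$ subsets of $\honestset$) and ensuring that the variance-reduction benefit of averaging $K$ stochastic gradients is preserved through the robust aggregation bound --- rather than being washed out, as in prior analyses that yield a $K$-proportional rather than $1/K$-proportional Byzantine error.
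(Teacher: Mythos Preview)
Your high-level strategy---descent lemma plus decomposition of the aggregated update into an honest-mean part and a robustness error, combined with a client-drift bound and a finite-population subsampling-variance computation---is exactly the skeleton of the paper's proof. The intermediate lemmas you outline (drift bound yielding the $\localstep^2L^2(K{-}1)\sigma^2$ and $\localstep^2L^2K(K{-}1)\heterparam^2$ terms, intra-honest variance $\propto K\localstep^2\sigma^2+K^2\localstep^2\heterparam^2$ producing the $\kappa(\sigma^2/K+6\heterparam^2)$ Byzantine term) all match.

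The one concrete gap is precisely the point you flag as your ``main obstacle,'' and the paper resolves it by a trick you have not identified. You define $\honestsampleset[t]:=\selectedsubset{t}\cap\honestset$ of \emph{variable} size $\ge\samplenum-\byzsamplenum$, average $\bar u_t$ over it, but then note that Definition~\ref{def:resaveraging} only controls $\norm{\mathcal A_t-\bar w_S}$ for a subset $S$ of size \emph{exactly} $\samplenum-\byzsamplenum$. This is a genuine inconsistency: the set used for the robustness bound and the set used for $\bar u_t$ (and hence for the subsampling-variance calculation) must coincide. The paper fixes this by letting $\honestsampleset[t]$ be a \emph{uniformly random} subset of $\selectedsubset{t}\cap\honestset$ of size exactly $\honestsamplenum:=\samplenum-\byzsamplenum$. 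Two things then happen at once: (i) Definition~\ref{def:resaveraging} applies directly with $S=\honestsampleset[t]$, and (ii) by a simple symmetry/double-randomization argument---conditioning on $\eventfewbyz$ restricts only the \emph{number} of honest clients sampled, not their identities---the marginal law of $\honestsampleset[t]$ is \emph{exactly} uniform over all size-$\honestsamplenum$ subsets of $\honestset$. This makes the finite-population variance computation exact and delivers the factor $\frac{\honestnum-\honestsamplenum}{(\honestnum-1)\honestsamplenum}\heterparam^2$ (whence, after $\honestnum-1\ge\honestnum/2$ and $\honestsamplenum\ge\samplenum/2$, the stated $(1-\tfrac{\samplenum-\byzsamplenum}{n-\byznum})$ term) without any approximation. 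Once you adopt this fixed-size random subset, the ``approximately uniform'' worry you raised disappears, and the rest of your sketch goes through as in the paper.
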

}}

{
Note that in Theorem~\ref{th:main}, the event $\mathcal{E}$ defined in~\eqref{eq:event} holds with probability $p$. Then, the expectation is over the randomness of the algorithm,
    given the event $\mathcal{E}$.}
Setting $\localstep= 1$, $\samplenum = n$, $\byzsamplenum = \byznum$ and $K = 1$, \algname{} reduces to the robust implementation of distributed SGD studied by several previous works~\citep{blanchard2017machine,yin2018byzantine}. 
In this case, the last term of our convergence guarantee is in $O\left( \kappa (\sigma^2 + \zeta^2) \right)$, which is an additive factor $\mathcal{O}(\kappa \sigma^2)$ away from the optimal bound~\citep{nnm,karimireddy2022byzantinerobust}. However, the dependency on $\sigma$ for \algname{} is unavoidable because the algorithm does not use noise reduction techniques, as prescribed by~\cite{karimireddy2021learning,farhadkhani2022byzantine,gorbunov2023variance}. 
{These} techniques typically rely on the observation that, when a client computes stochastic gradients over several rounds, one can construct update vectors with decreasing variance by utilizing previously computed stochastic gradients. However, in a practical federated learning setting, when we typically have ${n} \gg \hat{n} $, individual clients may only be selected for a limited number of rounds, 
rendering it impractical to rely on previously computed gradients to reduce the variance. It thus remains open to close the $\mathcal{O}(\kappa \sigma^2)$ gap between the upper and lower bounds in the federated setting.

{\bf Byzantine resilience of $\mathsf{FedRo}$.} Using Theorem~\ref{th:main}, we can show that Algorithm~\ref{algorithm:dsgd} guarantees $(n,\byznum, \varepsilon)$-resilience. In doing so, we first need to choose step-sizes that both simplify the upper bound and respect the conditions of Theorem~\ref{th:main}. A suitable choice of step-sizes is $\globalstep = 1 $ and 
{
\begin{align*}
    \localstep := \min\left\{\frac{1}{36LK}, \frac{1}{2K}\sqrt{\frac{\samplenum\Delta_0}{LT\left( \frac{\sigma^2}{ K}+6 \left(1 - \frac{\samplenum -\byzsamplenum  }{n-\byznum} \right)\heterparam^2 \right)}},\right.\\  \left. \sqrt[3]{\frac{3 \Delta_0 }{2K(K-1)TL^2(\sigma^2 + 4K\heterparam^2)}}\right\}
\end{align*}
}Using the above step-sizes, we obtain the following corollary on the resilience of \algname{}.

\begin{corollary}
\label{cor:fix_learningrate}
        Under the same conditions as Theorem~\ref{th:main}, denoting $r :=  1 - \frac{\samplenum -\byzsamplenum  }{n-\byznum}$, Algorithm~\ref{algorithm:dsgd} with $\globalstep$ and $\localstep$ as defined above guarantees $(n, \byznum, \varepsilon)$-Byzantine resilience, with high probability, for $ \varepsilon$ in
        \begin{align*}
        \small
     \mathcal{O} &\left(\sqrt{\frac{L\Delta_0}{\samplenum T}\left(\frac{\sigma^2}{K} +r \heterparam^2 \right)}+\sqrt[3]{\frac{(1-\frac{1}{K}) L^2\Delta_0^2(\frac{\sigma^2}{K} + \heterparam^2)}{T^2 }} \right.\\
     &+ \left.\frac{L\Delta_0 }{T} + \kappa \left( \frac{\sigma^2}{K} +   \heterparam^2\right) \right).
\end{align*}
\end{corollary}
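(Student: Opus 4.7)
The plan is to derive the corollary by plugging the prescribed step-sizes into Theorem~\ref{th:main} and simplifying. Since $\outputmodel$ is drawn uniformly at random from $\{\model{}{0}, \ldots, \model{}{T-1}\}$, we have $\expect{\norm{\nabla\loss(\outputmodel)}^2} = \frac{1}{T}\sum_{t=0}^{T-1}\expect{\norm{\nabla\loss(\model{}{t})}^2}$, so matching the $(n, \byznum, \varepsilon)$-resilience target reduces to bounding the right-hand side of the inequality in Theorem~\ref{th:main} with the chosen $\localstep$ and $\globalstep$.

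First I would set $\globalstep = 1$ and verify that the prerequisites $\localstep \leq \nicefrac{1}{16LK}$ and $\localstep\globalstep \leq \nicefrac{1}{36LK}$ are satisfied, which holds because $\localstep$ is defined as a minimum whose first argument is precisely $\nicefrac{1}{36LK}$. Under this choice, the bound in Theorem~\ref{th:main} splits into four non-Byzantine pieces: a $\Delta_0/(TK\localstep)$ term, a linear-in-$\localstep$ term of order $\frac{LK}{\samplenum}(\sigma^2/K + r\heterparam^2)\localstep$, a quadratic-in-$\localstep$ term of order $L^2K(K-1)(\sigma^2/K + \heterparam^2)\localstep^2$, and the $\localstep$-independent Byzantine error $\kappa(\sigma^2/K + \heterparam^2)$. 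The three candidates in the prescribed $\localstep$, which I will call $\localstep_1, \localstep_2, \localstep_3$, are, respectively, the smoothness constraint $\nicefrac{1}{36LK}$, the AM--GM minimizer balancing the first and second pieces, and the analogous minimizer balancing the first and third pieces.

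Next I would argue by cases on which of the three candidates realizes the minimum. If $\localstep = \localstep_2$, substituting it into the sum of the first two pieces yields (via $a/x + bx \geq 2\sqrt{ab}$ at optimum) a contribution of order $\sqrt{L\Delta_0(\sigma^2/K + r\heterparam^2)/(\samplenum T)}$. If $\localstep = \localstep_3$, the analogous computation on the first and third pieces gives $\sqrt[3]{L^2\Delta_0^2(1 - 1/K)(\sigma^2/K + \heterparam^2)/T^2}$, after factoring $K-1 = K(1 - 1/K)$ under the cube root. If $\localstep = \localstep_1$, then $\localstep_1 \leq \localstep_2$ and $\localstep_1 \leq \localstep_3$ ensure that the second and third pieces are already dominated at that point by the first piece, which itself reduces to $\mathcal{O}(L\Delta_0/T)$. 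Since $\localstep$ is the pointwise minimum, the bound is in every case at most the sum of the three candidate contributions; adding the unchanged Byzantine term $\kappa(\sigma^2/K + \heterparam^2)$ yields exactly the claimed $\mathcal{O}$ expression.

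The main obstacle is simply careful bookkeeping: checking that the prefactors inside the square root and cube root definitions of $\localstep_2$ and $\localstep_3$ really do balance the intended pairs of terms up to absolute constants, and confirming in the constraint-active case that $\localstep_1 \leq \localstep_2$ forces the second piece to be at most $\mathcal{O}(L\Delta_0/T)$ (and analogously for the third). I do not expect analytical subtleties here, because the three sub-optimizations are decoupled, the Byzantine term is entirely independent of $\localstep$, and the high-probability event $\mathcal{E}$ of Lemma~\ref{lem:subsample} is already absorbed in the statement of Theorem~\ref{th:main}.
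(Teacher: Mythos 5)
Your proposal is correct and takes essentially the same route as the paper: set $\globalstep=1$, check that the prescribed $\localstep$ satisfies the step-size prerequisites, substitute into Theorem~\ref{th:main}, and bound the $\Delta_0/(TK\localstep)$ term, the linear-in-$\localstep$ term, the quadratic-in-$\localstep$ term, and the unchanged $\kappa$-term separately to obtain the four terms of the corollary. The only (immaterial) bookkeeping difference is that the paper bounds $\nicefrac{1}{\localstep}$ by the sum of the three candidates' reciprocals and then uses $\localstep\le\localstep_2$ and $\localstep\le\localstep_3$ in the linear and quadratic pieces, whereas you argue by cases on which candidate attains the minimum; both give the same bound up to constants.
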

\color{black}
The first three terms in Corollary~\ref{cor:fix_learningrate} are similar to the convergence rate we would obtain for $\mathsf{FedAvg}$ without Byzantine clients~\citep{karimireddy2020scaffold}. These terms vanish when $T \rightarrow \infty$ with a leading term in $\mathcal{O} \left( \nicefrac{1}{\sqrt{\samplenum T}}\right)$. Hence \algname{} preserves the linear speedup in $\samplenum$ of $\mathsf{FedAvg}$ \citep{yang2021achieving}. Moreover, similar to the non-Byzantine case \citep{karimireddy2020scaffold}, employing multiple local steps (i.e., $K > 1$) introduces an additional bias, resulting in the second error term in Corollary~\ref{cor:fix_learningrate} which is in $\mathcal{O}(T^{-2/3})$.
On the other hand, the last term in Corollary~\ref{cor:fix_learningrate} is the additional error caused by Byzantine clients, and does not decrease with $T$. Using an aggregation function with\footnote{This is %
the optimal value for $\kappa$ as shown by~\citet{nnm}.} $\kappa \in \mathcal{O}(\nicefrac{\byzsamplenum}{\samplenum})$, such as coordinate-wise trimmed mean \citep{nnm}, this term will be in
\begin{align}
    {\mathcal{O}} \left(\frac{\byzsamplenum}{\samplenum} \left( \frac{\sigma^2}{K} +   \heterparam^2\right) \right)\enspace. \label{eq:uppercor}
\end{align}

Intuitively, the non-vanishing error term $\nicefrac{\sigma^2}{K}$ quantifies the uncertainty due to the stochastic noise in gradient computations, which
{may be exploited} by Byzantine clients at each aggregation step. 
Importantly, this uncertainty decreases by increasing the number of local steps $K$ performed by the clients, for a sufficiently small local step-size. This is because, intuitively, with more local steps, we are effectively approximating the average of a larger number of stochastic gradients. 
This shows the advantage of having multiple local steps for federated learning in the presence of Byzantine clients.
The same benefit of local steps has been observed in other constrained learning problems, e.g., in the trade-off between utility and privacy in distributed differentially private learning~\citep{bietti2022personalization}. Note 
that the error term in~\eqref{eq:uppercor} {also} features a multiplicative term $ \nicefrac{\byzsamplenum}{\samplenum}$. Hence, we can reduce the asymptotic error by carefully selecting the parameters $\samplenum$ and $\byzsamplenum$ satisfying condition~\eqref{eq:cond-samplenum} such that $\nicefrac{\byzsamplenum}{\samplenum}$ is minimized. We present in the next section an order-optimal methodology for choosing these parameters.

\section{On the Choice of $\byzsamplenum$ and $\samplenum$}
\label{sec:sat}

Condition~\eqref{eq:cond-samplenum} involves a complex interplay between parameters $\samplenum$ and $\byzsamplenum$. Recall that the value of $\samplenum$ affects the communication complexity of $\mathsf{FedRo}$, while the fraction $\nicefrac{\byzsamplenum}{\samplenum}$ controls the upper bound on the asymptotic error in~\eqref{eq:uppercor}. Thus, it is primary to understand how to set $\samplenum$ and $\byzsamplenum$ while minimizing the complexity overhead and optimizing the error. {First, note that for a fixed $\samplenum$, the optimal choice of $\byzsamplenum$ is given as a solution to the following optimization problem.}
\begin{align}
\label{eq:byzsamplenum-min}
    \min_{\byzsamplenum \in \left(\frac{\byznum}{n}\samplenum,\frac{1}{2}\samplenum\right) }\left\{\byzsamplenum ~ ~ \text{ subject to } ~ ~ \sampleconst{\frac{\byzsamplenum}{\samplenum}}{\frac{\byznum}{n}} \ge \frac{1}{\samplenum} \ln\left(\frac{T}{1 - p}\right)\right\}\enspace.
\end{align}
We denote the minimizer by $\byzsamplenum[\star]$.
In essence, we aim to select the smallest $\byzsamplenum$ within the interval $\left(\frac{\byznum}{n}\samplenum,\frac{1}{2}\samplenum\right)$ that satisfies condition~\eqref{eq:cond-samplenum}, thereby minimizing the asymptotic error in~\eqref{eq:uppercor}. Notably, since $\byzsamplenum$ is an integer ranging from $\lceil \nicefrac{\byznum}{n}\cdot\samplenum \rceil$ to $\lfloor \nicefrac{\samplenum}{2} \rfloor$, and the second inequality is monotonic within this range, we can efficiently compute the solution to~\eqref{eq:byzsamplenum-min} in $\mathcal{O}(\log \samplenum)$ steps using binary search.

{
The selection of $\samplenum$ is more involved, as we aim to satisfy a number of potentially conflicting objectives. 
\begin{itemize}[leftmargin=25pt]
    \item On one hand, $\samplenum$ must be sufficiently large to ensure the convergence of  \algname{}. In Section~\ref{sec:sampling_a}, we introduce a threshold value $\sampleth$ such that, for any $\samplenum \ge \sampleth$, the feasible region of~\eqref{eq:byzsamplenum-min} remains non-empty, thereby guaranteeing the convergence of \algname{}{} by Theorem~\ref{th:main}. We also show that the value of $\sampleth$ is nearly optimal in this context.

    \item On the other hand, while increasing $\samplenum$ improves the asymptotic error in~\eqref{eq:uppercor}, it also induces more communication overhead to the system. We show, in Section~\ref{subsec:opt_b}, the existence of a second threshold, denoted as $\sampleopt$, which lies above $\sampleth$. Beyond this threshold, \algname{} attains an order-optimal error with respect to the fraction of Byzantine clients. Thus, increasing $\samplenum$ beyond $\sampleopt$ does not lead to any further improvement in the asymptotic error of \algname{}. 
\end{itemize}  }

\subsection{{Sample Size Threshold} for Convergence}
\label{sec:sampling_a}
We begin by presenting Lemma~\ref{lemma:sampling-threshold}, which establishes a lower bound $\sampleth$ on the sample size necessary to ensure the convergence of $\mathsf{FedRo}$:
\begin{restatable}{lemma}{SamplingThreshold}
\label{lemma:sampling-threshold}
Consider the sampling threshold $\sampleth$ defined as follows
\begin{align}
\label{eq:n-th}
    \sampleth:= \left\lceil\sampleconst{\frac{1}{2}}{\frac{\byznum}{n}}^{-1}\ln\left(\frac{4T}{1-p}\right)\right \rceil + 2\enspace,
\end{align}
with $D$ as defined in Lemma~\ref{lem:subsample}.
If $\samplenum \ge \sampleth$, then there exist $\byzsamplenum$ such that~\eqref{eq:cond-samplenum} holds and $\nicefrac{\byznum}{n} < \nicefrac{\byzsamplenum}{\samplenum} < \nicefrac{1}{2}$.
\end{restatable}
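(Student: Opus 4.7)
The plan is to exhibit an explicit $\byzsamplenum$ satisfying both $\nicefrac{\byznum}{n} < \nicefrac{\byzsamplenum}{\samplenum} < \nicefrac{1}{2}$ and condition~\eqref{eq:cond-samplenum}. The natural candidate is the largest integer strictly below $\nicefrac{\samplenum}{2}$, namely $\byzsamplenum := \lceil \nicefrac{\samplenum}{2}\rceil - 1$. A parity check yields $\nicefrac{\byzsamplenum}{\samplenum} \geq \nicefrac{1}{2} - \nicefrac{1}{\samplenum}$ regardless of the parity of $\samplenum$, and this choice maximizes $\sampleconst{\nicefrac{\byzsamplenum}{\samplenum}}{\nicefrac{\byznum}{n}}$ over admissible integers (since $\sampleconst{\cdot}{\nicefrac{\byznum}{n}}$ is strictly increasing on $(\nicefrac{\byznum}{n}, 1]$), giving the best chance of satisfying~\eqref{eq:cond-samplenum}.

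The technical heart of the argument is a one-step transfer from the boundary value $\alpha = \nicefrac{1}{2}$ to $\alpha = \nicefrac{1}{2} - \nicefrac{1}{\samplenum}$. Using convexity of $\sampleconst{\cdot}{\beta}$ with its tangent at $\alpha = \nicefrac{1}{2}$, whose slope equals $\ln\bigl((1-\beta)/\beta\bigr)$, I would write
\begin{align*}
    \sampleconst{\tfrac{1}{2} - \tfrac{1}{\samplenum}}{\beta} \geq \sampleconst{\tfrac{1}{2}}{\beta} - \tfrac{1}{\samplenum}\ln\tfrac{1-\beta}{\beta}.
\end{align*}
The key elementary inequality is then $\ln\tfrac{1-\beta}{\beta} \leq 2\sampleconst{\nicefrac{1}{2}}{\beta} + 2\ln 2$ for $\beta \in (0, \nicefrac{1}{2})$, which after expanding $\sampleconst{\nicefrac{1}{2}}{\beta} = -\ln 2 - \tfrac{1}{2}\ln\beta - \tfrac{1}{2}\ln(1-\beta)$ reduces to the trivial $\ln(1-\beta) \leq 0$. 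Substituting and multiplying by $\samplenum$ yields the clean estimate
\begin{align*}
    \samplenum \cdot \sampleconst{\tfrac{1}{2} - \tfrac{1}{\samplenum}}{\tfrac{\byznum}{n}} \geq (\samplenum - 2)\, \sampleconst{\tfrac{1}{2}}{\tfrac{\byznum}{n}} - \ln 4.
\end{align*}

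Invoking $\samplenum \geq \sampleth$, the definition~\eqref{eq:n-th} gives $(\samplenum - 2)\sampleconst{\nicefrac{1}{2}}{\nicefrac{\byznum}{n}} \geq \ln(\nicefrac{4T}{1-p})$; combining with the previous display, and using monotonicity of $\sampleconst{\cdot}{\nicefrac{\byznum}{n}}$ above $\nicefrac{\byznum}{n}$ to pass from $\nicefrac{1}{2} - \nicefrac{1}{\samplenum}$ to the actual ratio $\nicefrac{\byzsamplenum}{\samplenum}$, I obtain $\samplenum\, \sampleconst{\nicefrac{\byzsamplenum}{\samplenum}}{\nicefrac{\byznum}{n}} \geq \ln(\nicefrac{4T}{1-p}) - \ln 4 = \ln(\nicefrac{T}{1-p})$, which is precisely~\eqref{eq:cond-samplenum} in its non-trivial regime $\samplenum < n$. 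It remains to confirm the strict lower inequality $\nicefrac{\byznum}{n} < \nicefrac{\byzsamplenum}{\samplenum}$, which amounts to $\samplenum > 1/(\nicefrac{1}{2} - \nicefrac{\byznum}{n})$; I would verify this using the ``$+2$'' slack in~\eqref{eq:n-th} together with a short case analysis on $\nicefrac{\byznum}{n}$ (Pinsker's inequality $\sampleconst{\nicefrac{1}{2}}{\beta} \geq 2(\nicefrac{1}{2}-\beta)^2$ handles the regime where $\nicefrac{\byznum}{n}$ is close to $\nicefrac{1}{2}$, and the constant $+2$ directly handles the regime where $\nicefrac{\byznum}{n}$ is small).

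I expect the main obstacle to be this last strict-margin check, since it must hold uniformly across $\nicefrac{\byznum}{n} \in (0, \nicefrac{1}{2})$, requiring the threshold to dominate the very different rates $(\nicefrac{1}{2}-\nicefrac{\byznum}{n})^{-1}$ at the two ends of the interval. The core KL manipulation is clean by comparison: the factor of $4$ inside the logarithm and the additive $2$ in~\eqref{eq:n-th} appear precisely calibrated to absorb the $\ln 4$ loss from convexity together with the $2\sampleconst{\nicefrac{1}{2}}{\beta}$ term arising in the elementary inequality.
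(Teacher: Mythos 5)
Your core estimate is the same as the paper's: the same candidate $\byzsamplenum = \lceil \nicefrac{\samplenum}{2}\rceil - 1$, the same convexity/tangent bound at $\alpha=\nicefrac{1}{2}$, and the same bookkeeping in which the factor $4$ inside the logarithm and the ``$+2$'' in $\sampleth$ absorb a loss of exactly $2\ln 2$. The gap is in how you obtain $\nicefrac{\byznum}{n} < \nicefrac{\byzsamplenum}{\samplenum}$. First, a structural issue: your monotonicity transfer from $\alpha = \nicefrac{1}{2}-\nicefrac{1}{\samplenum}$ to $\alpha = \nicefrac{\byzsamplenum}{\samplenum}$ already presupposes $\nicefrac{\byznum}{n} \le \nicefrac{1}{2}-\nicefrac{1}{\samplenum}$, since $\sampleconst{\cdot}{\beta}$ is increasing only to the right of $\beta$; so the strict-margin check must be established \emph{before} that step, not deferred to the end. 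Second, and more substantively, the tool you cite for the hard regime ($\nicefrac{\byznum}{n}$ close to $\nicefrac{1}{2}$) points the wrong way. Writing $\beta = \nicefrac{\byznum}{n}$ and $\eta = \nicefrac{1}{2}-\beta$, you need $\sampleth > \nicefrac{1}{\eta}$, and since $\sampleconst{\frac{1}{2}}{\beta}$ sits in the \emph{denominator} of $\sampleth$, this requires an \emph{upper} bound on it, e.g.\ $\sampleconst{\frac{1}{2}}{\beta} = -\tfrac{1}{2}\ln\left(1-4\eta^2\right) \le \tfrac{2\eta^2}{1-4\eta^2}$, or the convexity consequence $\sampleconst{\frac{1}{2}}{\beta} \le \eta \ln\tfrac{1-\beta}{\beta}$. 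Pinsker's inequality is a \emph{lower} bound on $\sampleconst{\frac{1}{2}}{\beta}$ and yields nothing here: with only a lower bound, $\sampleth$ could in principle be as small as $3$ while $\nicefrac{1}{\eta}$ is arbitrarily large. The claim is true and your plan is repairable with a correct-direction bound (e.g.\ the one above gives $\sampleconst{\frac{1}{2}}{\beta}\left(\nicefrac{1}{\eta}-2\right) \le \nicefrac{\eta}{1-\beta} \le 1 < \ln 4$ uniformly in $\beta\in(0,\nicefrac{1}{2})$), but as written this step fails.

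For comparison, the paper sidesteps the entire check. It runs your convexity estimate for an \emph{arbitrary} $\alpha \in \left[\nicefrac{1}{2}-\nicefrac{1}{\samplenum},\, \nicefrac{1}{2}\right)$, obtaining the strict bound $\sampleconst{\alpha}{\nicefrac{\byznum}{n}} > \nicefrac{\ln\left(T/(1-p)\right)}{\samplenum} \ge 0$ on the whole interval. Since $\sampleconst{\beta}{\beta}=0$, the ratio $\nicefrac{\byznum}{n}$ cannot lie in that interval, forcing $\nicefrac{\byznum}{n} < \nicefrac{1}{2}-\nicefrac{1}{\samplenum} \le \nicefrac{\byzsamplenum}{\samplenum}$ for free; and because the bound is proved at the exact ratio $\nicefrac{\byzsamplenum}{\samplenum}$ itself, no monotonicity transfer is needed either. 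Adopting that observation would both close your gap and shorten the argument.
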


Note that $\sampleth$ as defined in Lemma~\ref{lemma:sampling-threshold} is tight, in the sense that for any $\samplenum$ smaller than $\sampleth$ by more than a constant multiple, any algorithm which relies on sampling clients uniformly at random will fail to converge with non-negligible probability, i.e., $\neg \mathcal{E}$ holds with probability greater than %
$1-p$. 

\begin{restatable}{lemma}{SamplingImpossibility}
    \label{lemma:impossibility}
    Let $p \ge 1/2$ and suppose that the fraction $\nicefrac{\byznum}{n} < \nicefrac{1}{2}$ is a constant.
    Consider \algname{} as defined in Algorithm~\ref{algorithm:dsgd} with \[\samplenum < \left(\sampleconst{\frac{1}{2}}{\frac{\byznum}{n}} + 2\right)^{-1}\ln\left(\frac{T}{3(1 - p)}\right) - 1\enspace,\]
    where $D$ is defined in~Lemma~\ref{lem:subsample}. Then for large enough $n$ and any $\byzsamplenum < \nicefrac{\samplenum}{2}$, Event $\mathcal{E}$ as defined in~\eqref{eq:event} holds true with probability strictly smaller than $p$.
\end{restatable}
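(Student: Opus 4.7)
The plan is to bound $\probability[\mathcal{E}]$ by exploiting that client sampling is independent across rounds, lower-bound the per-round probability that the drawn sample contains a strict majority of Byzantine clients, and then verify that under the stated upper bound on $\samplenum$ this per-round probability is large enough for at least one such ``bad'' round to occur within $T$ trials with probability strictly above $1-p$. Concretely, I would first write $\probability[\mathcal{E}] = (1-q)^T$, where $q := \probability[\card{\byzsampleset{0}} > \byzsamplenum]$ and $\card{\byzsampleset{0}} \sim \hg{n}{\byznum}{\samplenum}$, then invoke $\byzsamplenum < \nicefrac{\samplenum}{2}$ to derive the uniform lower bound $q \ge \probability[\card{\byzsampleset{0}} \ge \lceil \samplenum/2 \rceil]$, which holds for every admissible choice of $\byzsamplenum$. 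Since $\nicefrac{\byznum}{n}$ is held constant and $\samplenum$ stays bounded by a quantity independent of $n$, the ``large enough $n$'' hypothesis then lets me replace $\hg{n}{\byznum}{\samplenum}$ by $\bin{\samplenum}{\nicefrac{\byznum}{n}}$ up to an arbitrarily small pointwise error.

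The core estimate is the method-of-types point lower bound $\probability[\bin{\samplenum}{r} = k] \ge (\samplenum+1)^{-1} \exp(-\samplenum\, \sampleconst{\nicefrac{k}{\samplenum}}{r})$, which I would apply at $k = \lceil \samplenum/2 \rceil$ and $r = \nicefrac{\byznum}{n}$. Writing $C := \sampleconst{\nicefrac{1}{2}}{\nicefrac{\byznum}{n}}$, a first-order Taylor expansion of $\alpha \mapsto \sampleconst{\alpha}{\nicefrac{\byznum}{n}}$ around $\nicefrac{1}{2}$ gives $\samplenum\, \sampleconst{\lceil \samplenum/2 \rceil / \samplenum}{\nicefrac{\byznum}{n}} = \samplenum C + \mathcal{O}(1)$ (with the correction vanishing when $\samplenum$ is even), so $q \ge c\,(\samplenum+1)^{-1} e^{-\samplenum C}$ for an absolute constant $c > 0$. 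Elementary calculus shows that $\ln(\nicefrac{1}{p}) < 3(1-p)$ for all $p \in [\nicefrac{1}{2}, 1)$, hence $\ln(T/\ln(\nicefrac{1}{p})) > \ln(T/(3(1-p)))$; combining this with the hypothesis $(\samplenum+1)(C+2) < \ln(T/(3(1-p)))$ and the trivial bound $\ln(\samplenum+1) \le \samplenum+1$ yields $\samplenum C + \ln(\samplenum+1) + \ln(\nicefrac{1}{c}) < \ln(T/\ln(\nicefrac{1}{p}))$, which rearranges to $qT > \ln(\nicefrac{1}{p})$. The conclusion follows from $\probability[\mathcal{E}] = (1-q)^T \le e^{-qT} < e^{-\ln(\nicefrac{1}{p})} = p$.

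The delicate part will be calibrating constants so that the three $\mathcal{O}(1)$ slacks picked up along the way all fit inside the $2(\samplenum+1)$ slack provided by the ``$+2$'' in the hypothesis: (i) the Taylor correction in $\samplenum\, \sampleconst{\lceil \samplenum/2 \rceil / \samplenum}{\nicefrac{\byznum}{n}}$ arising when $\samplenum$ is odd, (ii) the polynomial prefactor $\ln(\samplenum+1)$ from the method-of-types point bound, and (iii) the approximation error when passing from the hypergeometric to the binomial distribution, which is the only place where the ``large enough $n$'' hypothesis is invoked. Since $\nicefrac{\byznum}{n}$ is held fixed, all three corrections are bounded uniformly in $\samplenum$, and they are therefore dominated by the linear slack $2(\samplenum+1)$ for $\samplenum$ sufficiently large.
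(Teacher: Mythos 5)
Your plan follows the same route as the paper's proof: reduce $\neg\mathcal{E}$ to the per-round event $\left\{\card{\byzsampleset{t}} \ge \lceil \samplenum/2\rceil\right\}$ (valid for every $\byzsamplenum < \nicefrac{\samplenum}{2}$), pass from the hypergeometric law to the binomial using the large-$n$ hypothesis, lower bound the binomial tail by $\exp\left(-\samplenum \sampleconst{\alpha}{\nicefrac{\byznum}{n}}\right)$ up to a polynomial prefactor, control the KL term near $\alpha = \nicefrac{1}{2}$ so that the prefactor and corrections are absorbed by the ``$+2$'' slack, and conclude by independence of the sampling across rounds (the paper via $q > 1 - p^{1/T}$, you via $qT > \ln(\nicefrac{1}{p})$ together with $\ln(\nicefrac{1}{p}) < 3(1-p)$ --- equivalent bookkeeping). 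The substitutions of a method-of-types point bound for the paper's binomial tail bound, and of a Taylor expansion for the paper's explicit estimate, are cosmetic.

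The genuine gap is exactly the part you defer. Your final paragraph concludes only ``for $\samplenum$ sufficiently large,'' but the lemma quantifies over \emph{every} $\samplenum$ below the stated threshold (with only $n$ taken large), and the impossibility claim is binding precisely at small sample sizes, which can easily be the only admissible ones when $T$ and $p$ are moderate. Concretely: (i) at $\samplenum = 1$ your Taylor step is evaluated at $\alpha = \lceil\samplenum/2\rceil/\samplenum = 1$, where $\frac{\partial}{\partial\alpha}\sampleconst{\alpha}{\nicefrac{\byznum}{n}}$ is unbounded, so the claimed $\mathcal{O}(1)$ correction cannot be obtained from a derivative bound on $[\nicefrac{1}{2},\,\nicefrac{1}{2}+\nicefrac{1}{2\samplenum}]$; the paper handles $\samplenum=1$ as a separate case by directly computing $(1-\nicefrac{\byznum}{n})^T$. (ii) For small $\samplenum \ge 2$ the available slack is only $2(\samplenum+1)$ nats plus $\sampleconst{\frac{1}{2}}{\frac{\byznum}{n}}$, and verifying that the prefactor $\ln(\samplenum+1)$, the KL correction, and the hypergeometric-to-binomial error all fit inside it is the actual content of the lemma's constants; the paper does this uniformly in $\samplenum \ge 2$ via the explicit bound $\sampleconst{\lambda}{\beta} \le \left(1+\nicefrac{1}{\samplenum}\right)\left(\sampleconst{\frac{1}{2}}{\beta}+1\right)$ for $\lambda \in [\nicefrac{1}{2},\,\nicefrac{1}{2}+\nicefrac{1}{2\samplenum}]$, not by an asymptotic-in-$\samplenum$ statement. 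Your constants do in fact fit once you bound the KL correction for $\samplenum \ge 2$ (the evaluation point is at most $\nicefrac{3}{4}$, so the derivative is bounded in terms of the constant fraction $\nicefrac{\byznum}{n}$ and the correction is at most $\sampleconst{\frac{1}{2}}{\frac{\byznum}{n}} + \mathcal{O}(1)$), but this calibration must be carried out for every admissible $\samplenum$, and $\samplenum = 1$ needs a separate direct argument as in the paper.
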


\subsection{{Sample Size Threshold} for Order-Optimal {Error}}
\label{subsec:opt_b}
Now note that increasing $\samplenum$ further beyond $\sampleth$ enables us to decrease the fraction {$\nicefrac{\byzsamplenum}{\samplenum}$} %
leading to a better convergence guarantee in (\ref{eq:uppercor}). Previous works~\citep{karimireddy2021learning,karimireddy2022byzantinerobust} suggest that the asymptotic error is lower bounded by\footnote{More precisely, \cite{karimireddy2021learning} prove the lower bound with respect to $\sigma$ for a family of the algorithms called permutation invariant (which includes $\mathsf{FedAvg}$ and Algorithm~\ref{algorithm:dsgd}). Even though the lower bound is proved without considering local steps, it can be easily generalized to this case.} ${\Omega} (\nicefrac{\byznum}{n} ( \nicefrac{\sigma^2}{K} +   \heterparam^2 ) )$. As such, we aim to find the values of $\samplenum$ for which we can obtain an asymptotic error in ${\mathcal{O}} (\nicefrac{\byznum}{n} ( \nicefrac{\sigma^2}{K} +   \heterparam^2 ))$. In Lemma~\ref{lemma:byzratio-threshold-optimal} below, we provide a sufficient condition on the sampling parameter for such a convergence rate to hold.

\begin{restatable}{lemma}{ByzRatioThresholdOptimal}
    \label{lemma:byzratio-threshold-optimal}
Suppose we have $0 < \frac{\byznum}{n} < \frac{1}{2}$ and consider the sampling threshold $\sampleopt$ defined as follows
\begin{align}
\label{eq:n-opt}
    \sampleopt := \left\lceil \max\left\{\frac{1}{\left(1/2 - \byznum/n\right)^2}, \frac{3}{\byznum/n}\right\}\ln\left(\frac{4T}{1-p}\right)\right\rceil + 2\enspace. 
\end{align}
If $\samplenum \ge \sampleopt$, then,  
{the solution $\byzsamplenum[\star]$ to~\eqref{eq:byzsamplenum-min} exists and satisfies $\nicefrac{\byzsamplenum[\star]}{\samplenum} \in \mathcal{O}\left(\nicefrac{\byznum}{n}\right).$}
\end{restatable}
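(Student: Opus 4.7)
The plan is to exhibit, whenever $\samplenum \geq \sampleopt$, an explicit integer $\byzsamplenum$ lying in the feasible region of~\eqref{eq:byzsamplenum-min} whose ratio $\byzsamplenum/\samplenum$ is at most a constant multiple of $\byznum/n$. Since $\byzsamplenum[\star]$ is, by definition, the minimum of what becomes a provably nonempty finite set of integers, this simultaneously yields its existence and the bound $\byzsamplenum[\star]/\samplenum \in \mathcal{O}(\byznum/n)$. Writing $\beta := \byznum/n$ for brevity, I would split into two regimes corresponding to the two terms inside the $\max$ of~\eqref{eq:n-opt}.

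Regime A (the $3/\beta$ term active, i.e., $\beta$ bounded away from $1/4$). Take $\byzsamplenum := \lceil 2\beta\samplenum \rceil$. Since $D(\cdot,\beta)$ is strictly increasing on $(\beta,1)$, it suffices to lower bound $D(2\beta, \beta) = 2\beta\ln 2 + (1-2\beta)\ln\tfrac{1-2\beta}{1-\beta}$. Applying the elementary inequality $\ln(1-x) \geq -x/(1-x)$ with $x = \beta/(1-\beta)$ collapses the cross-entropy term to exactly $-\beta$, yielding
\[
D(2\beta, \beta) \;\geq\; (2\ln 2 - 1)\,\beta \;\geq\; \beta/3.
\]
Hence the KL constraint in~\eqref{eq:byzsamplenum-min} is satisfied as soon as $\samplenum \geq (3/\beta)\ln(T/(1-p))$, which is implied by $\samplenum \geq \sampleopt$. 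The strict feasibility $\byzsamplenum < \samplenum/2$ holds because $2\beta$ is bounded below $1/2$ in this regime, and the ``$+2$'' additive slack in $\sampleopt$ dominates the ceiling's loss of at most one.

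Regime B (the $1/(1/2-\beta)^2$ term active, i.e., $\beta$ bounded below by a positive constant). Take $\byzsamplenum := \lfloor (\samplenum-1)/2 \rfloor$, placing $\byzsamplenum/\samplenum$ as close to $1/2$ as integrality allows. Pinsker's inequality gives
\[
D(\byzsamplenum/\samplenum,\beta) \;\geq\; 2\bigl(\byzsamplenum/\samplenum - \beta\bigr)^{2} \;\geq\; 2\bigl(\tfrac{1}{2} - \beta - \tfrac{1}{\samplenum}\bigr)^{2},
\]
which is at least $(1/2-\beta)^2$ once $\samplenum \gtrsim 1/(1/2-\beta)$. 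Multiplying by $\samplenum \geq \ln(4T/(1-p))/(1/2-\beta)^2$, the extra $\ln 4$ compared with the bare requirement $\ln(T/(1-p))$ absorbs both the Pinsker constant and the $1/\samplenum$ integrality correction, so the KL constraint is met. In this regime $\beta$ is bounded below by a positive absolute constant, hence $\byzsamplenum/\samplenum \leq 1/2 = \mathcal{O}(\beta)$.

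Combining the two regimes yields feasibility and the $\mathcal{O}(\beta)$ ratio bound for every $\samplenum \geq \sampleopt$, and therefore $\byzsamplenum[\star]/\samplenum \leq \byzsamplenum/\samplenum \in \mathcal{O}(\byznum/n)$ for the concrete $\byzsamplenum$ just constructed. The main bookkeeping step, rather than a conceptual obstacle, is fixing an explicit crossover $\beta_0$ between Regimes A and B (natural choice: the root of $3(1/2-\beta)^2 = \beta$, i.e., the point where the two terms in~\eqref{eq:n-opt} coincide), and verifying that in a neighbourhood of $\beta_0$ both constructions remain valid so that the ``$+2$'' slack and the $\ln 4$ factor uniformly absorb all constants lost to rounding and to the Pinsker inequality.
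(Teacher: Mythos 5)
Your proposal is correct and, at its core, matches the paper's argument: writing $\beta:=\nicefrac{\byznum}{n}$, you use the same small-$\beta$ candidate $\byzsamplenum=\lceil 2\beta\samplenum\rceil$, the same key inequality $\sampleconst{2\beta}{\beta}\ge(2\ln 2-1)\beta>\nicefrac{\beta}{3}$ combined with monotonicity of $D$ in its first argument, and a two-regime split in which the large-$\beta$ regime only needs that any feasible ratio is at most $\nicefrac{1}{2}\in\mathcal{O}(\beta)$. The differences are in execution, and are mostly in your favor on elegance: you get the key inequality from the elementary bound $\ln(1-x)\ge -x/(1-x)$, whereas the paper differentiates $G(\beta)=\sampleconst{2\beta}{\beta}$ twice and argues via convexity; and you establish existence self-containedly with a Pinsker-type bound at the near-half candidate $\lfloor(\samplenum-1)/2\rfloor$, whereas the paper instead checks $\sampleopt\ge\sampleth$ and reuses Lemma~\ref{lemma:sampling-threshold} (whose underlying estimate $\sampleconst{\nicefrac{1}{2}}{\beta}\ge 2(\nicefrac{1}{2}-\beta)^2$ is the same quadratic bound you invoke). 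One caution on the bookkeeping you deferred: you split at the crossover of the two terms in the max of~\eqref{eq:n-opt}, $\beta_0\approx 0.226$, while the paper splits at $\beta=\nicefrac{1}{8}$ and handles $\beta\ge\nicefrac{1}{8}$ by the trivial ratio bound. Your choice makes the integrality check $\lceil 2\beta\samplenum\rceil<\nicefrac{\samplenum}{2}$ genuinely tight near $\beta_0$: in the worst case $\ln\left(\nicefrac{4T}{(1-p)}\right)=\ln 4$ one needs roughly $\samplenum\ge 20.6$ there while $\sampleopt$ only guarantees $21$, so the argument does go through, but with almost no slack; moving the crossover down to a smaller constant such as $\nicefrac{1}{8}$ removes this tightness at no cost.
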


{\begin{remark}
    Note that in Lemma~\ref{lemma:sampling-threshold} (resp. Lemma~\ref{lemma:byzratio-threshold-optimal}), we may set $\sampleth$ (resp. $\sampleopt$) to $n$ if the quantity on the right hand side exceeds $n$ without violating the conclusion of the lemma. 
\end{remark}}

{\bf Diminishing returns.}
Combining~\eqref{eq:uppercor} and Lemma~\ref{lemma:byzratio-threshold-optimal}, we conclude that for $\samplenum \ge \sampleopt$, $\mathsf{FedRo}$ achieves the order-optimal asymptotic error. Notably, this implies that the performance improvement \emph{saturates} when $\samplenum = \sampleopt$. Beyond this point, further increases in $\samplenum$ do not affect the error order.
In other words, we can obtain the best asymptotic error (leading term in convergence guarantee) in Byzantine federated learning without sampling all clients.
This contrasts with (non-Byzantine) federated learning where often the leading term is in $\mathcal{O}(\nicefrac{1}{\sqrt{\samplenum T}})$, and always improves by increasing $\samplenum$.

\section{Empirical Results}
\label{sec:expe}

\begin{figure}[!htb]
    \vspace{-5pt}
    \centering
    \begin{minipage}{.45\textwidth}
        \centering
        \includegraphics[width=\linewidth]{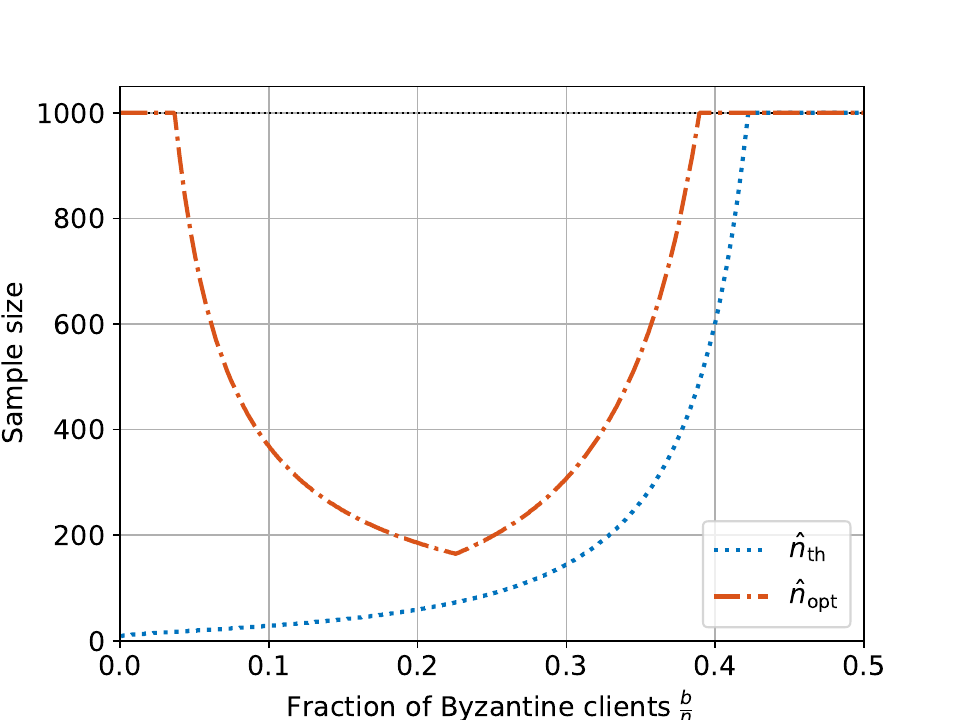}
        \vspace{-20pt}
        \caption{Variation of $\sampleth$ and $\sampleopt$ with respect to the fraction of Byzantine clients.}
        \label{fig:subsampling1}
    \end{minipage}%
    \hfill
    \begin{minipage}{.45\textwidth}
        \centering
        \includegraphics[width =\linewidth]{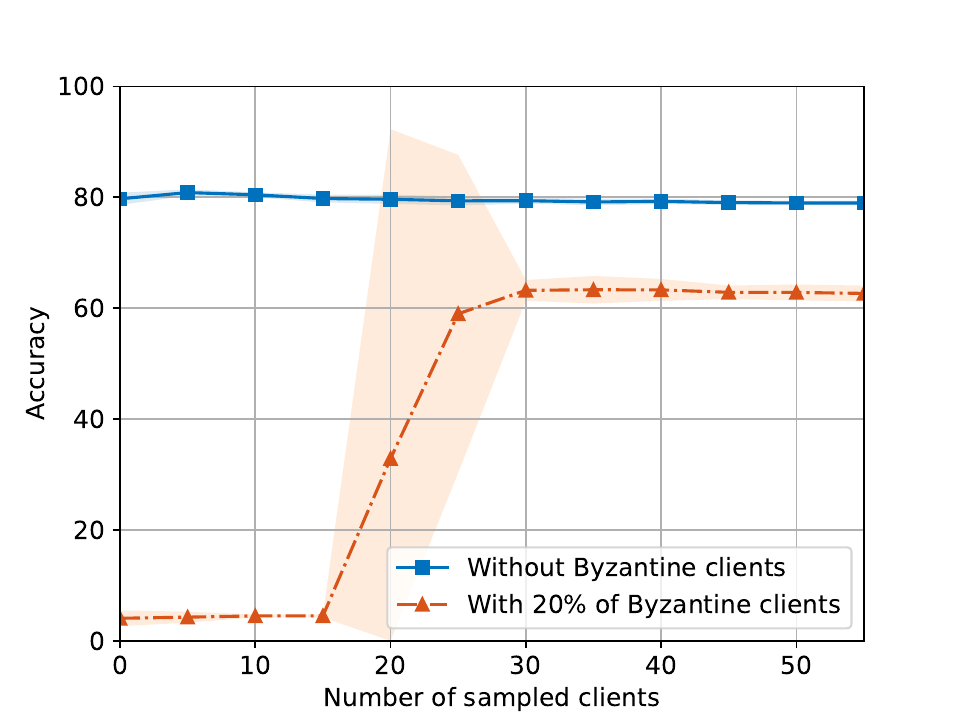}
        \vspace{-20pt}
        \caption{Accuracy of \algname{} with respect to the number of subsampled clients on the FEMNIST.}
    \label{fig:motivation}
    \end{minipage}
\end{figure}

\begin{figure*}[!htb]
        \vspace{-9pt}
        \centering
        \includegraphics[width=0.45\linewidth]{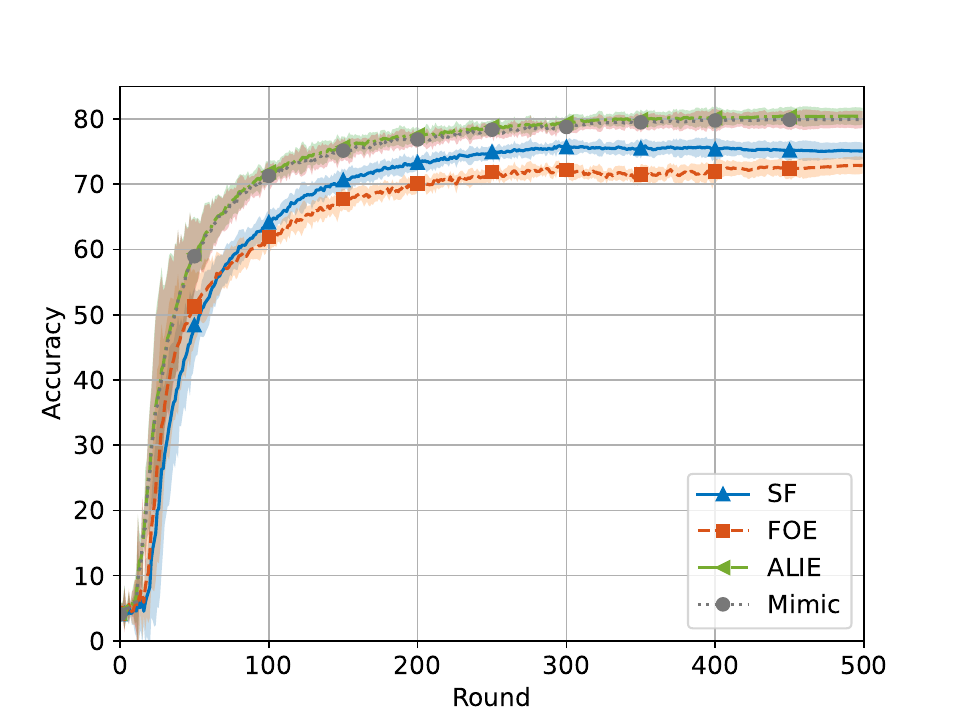}
        \includegraphics[width=0.45\linewidth]{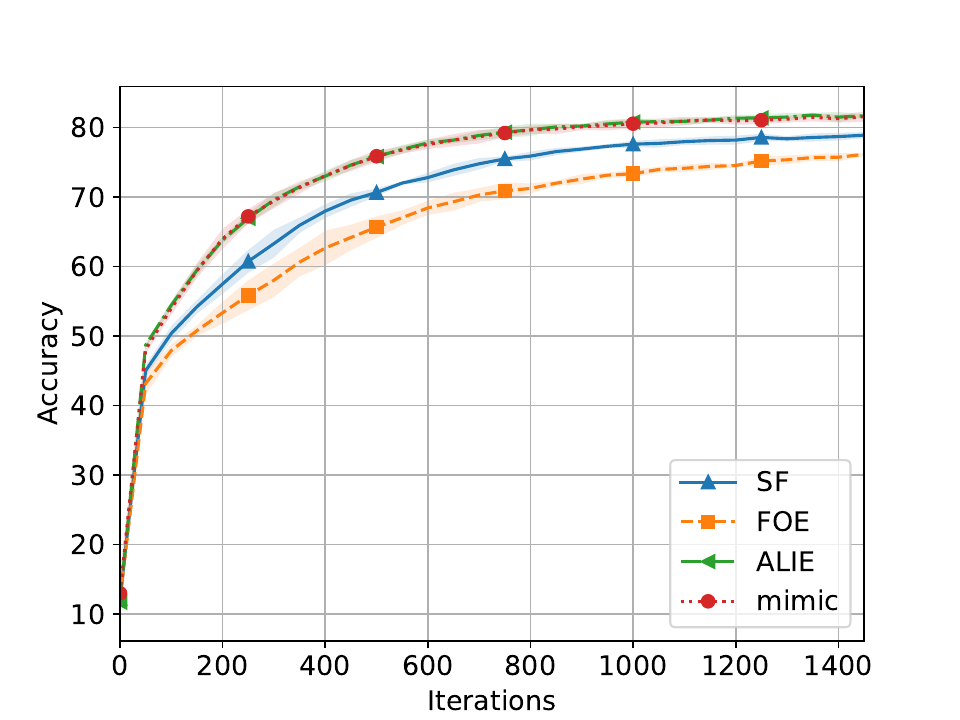}
        \vspace{-10pt}
        \caption{Accuracy of \algname{} with NNM and Trimmed Mean on the FEMNIST dataset (left) and on CIFAR10 dataset (right). }
        \label{fig:accuracy}   
        \vspace{-10pt}
\end{figure*}

\begin{figure*}[t]
    \centering
    \includegraphics[width=0.45\linewidth]{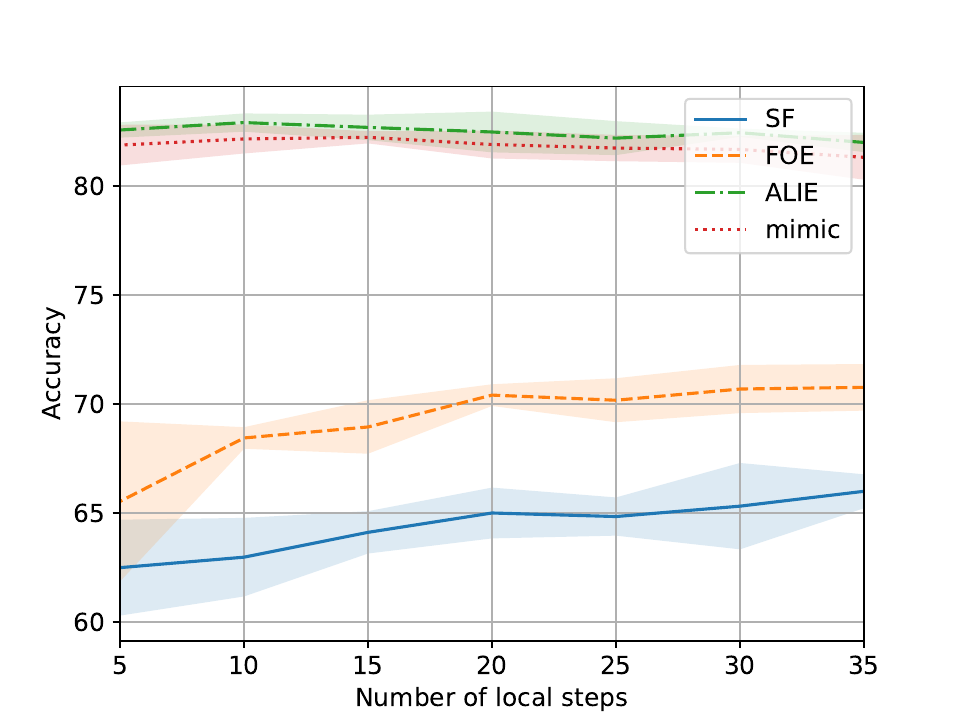}
    \includegraphics[width=0.45\linewidth]{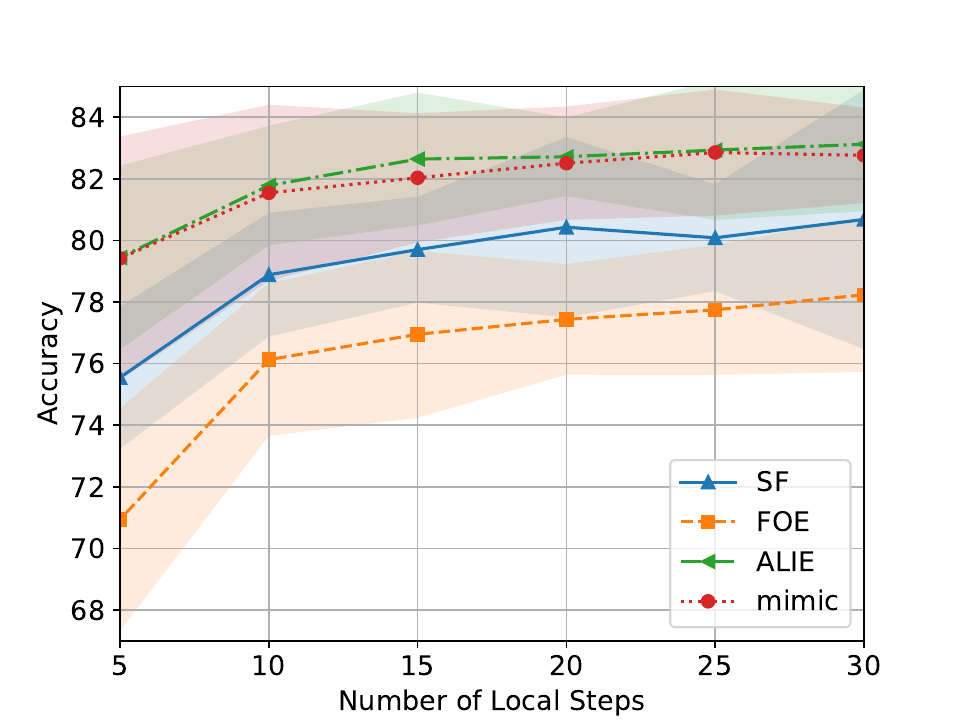}
    \vspace{-10pt}
    \caption{Accuracy of \algname{} with respect to the number of local steps on FEMNIST (left) and on CIFAR10 (right).}
    \label{fig:expe} 
    \vspace{-10pt}
\end{figure*}
In this section, we first illustrate the dependencies of $\sampleth$ and $\sampleopt$ to the global fraction of Byzantine clients $\nicefrac{b}{n}$ and show the diminishing return of \algname{}. In a second step, we demonstrate the existence of an empirical threshold on the number of subsampled clients, below which \algname{} does not converge. Next, we show the performance of \algname{} with chosen values of $\samplenum$ and $\byzsamplenum$ as described in Section~\ref{sec:sat}. Finally, we evaluate the impact of the number of local steps on the performance of \algname{}{}. We use the FEMNIST dataset \citep{caldas2018leaf} and CIFAR10 dataset \citep{krizhevsky2009learning}. For more details on the different configurations, refer to Appendix~\ref{abs:expe}.

{\bf Understanding the diminishing return. }First, we study the variation of $\sampleth$ and $\sampleopt$ with respect to the fraction of Byzantine clients $\nicefrac{b}{n}$ using~\eqref{eq:n-th} and~\eqref{eq:n-opt} respectively. By setting the number of rounds $T$ to $500$, the probability of success $p$ to $0.99$ and the number of clients $n$ to $1000$, we observe in Figure~\ref{fig:subsampling1} that in some regimes, for example when the fraction of Byzantine clients is about $0.2$, sampling only $20\%$ of the clients allows converging and gives an order-optimal error. 

{\bf Existence of an empirical threshold. }Next, we train a convolutional neural network (CNN) on a portion of the FEMNIST dataset using \algname{} over $T=500$ rounds. We consider $n=150$ clients where $30$ of them are Byzantine, hence $b=30$. We use different values of $\hat{n}$ to emphasize the need for subsampling beyond a threshold and choose $\hat{b} = \left\lfloor \nicefrac{\hat{n}}{2}\right\rfloor$. Here we use the state-of-the-art NNM robustness scheme \citep{nnm} coupled with coordinate-wise trimmed-mean~\citep{yin2018} and observe that \algname{} fails when the number of sampled clients is below 30. This validates our theoretical result in Lemma~\ref{lemma:impossibility}. 

{\bf Empirical performance of $\mathsf{FedRo}$. }Next, we use our results from Section~\ref{sec:sat} to run \algname{} on a CNN with appropriate and optimal values of $\samplenum$ and $\byzsamplenum$. Specifically, we run the algorithm for $500$ rounds (resp. $1500$ rounds) on the FEMNIST (resp. the CIFAR10) dataset, with $n = 150$ and $b=15$. We compute the minimum number of clients $\sampleth$ that we need to sample per round to have a guarantee of convergence with probability $p = 0.99$ and obtain $\sampleth = 26$ (resp. $b = 29$).\footnote{Since the number of rounds $T$ is different for the experiments on the FEMNIST and CIFAR10 datasets, by construction the values for $\sampleth$ (that depends on $T$) are different} Given $\samplenum$, using~\eqref{eq:byzsamplenum-min}, we compute the optimal value of $\byzsamplenum[\star] = 11$ (resp. $\byzsamplenum[\star] = 13$) and thus fix $\byzsamplenum = 11$ (resp. $\byzsamplenum = 13$). 
To evaluate \algname{}, we use different Byzantine attacks, namely {\em sign flipping} (SF)~\citep{allen2020byzantine}, {\em fall of empires} (FOE)~\citep{empire}, {\em a little is enough} (ALIE)~\citep{little} and {\em mimic}~\citep{karimireddy2022byzantinerobust}. We present the results in Figure~\ref{fig:accuracy}.

{\bf Impact of the number of local steps. } In the same configuration as the prior experiment, we vary $K$ from 5 to 35 and assess \algname{} against the same set of attacks. The local step size $\gamma_c$ is set as $\Theta(\nicefrac{1}{K})$ as suggested by Corollary~\ref{cor:fix_learningrate}. There are $n = 150$ total clients, with 30 being Byzantine ($b = 30$) for the experiment on the FEMNIST dataset and 15 being Byzantine ($b = 15$) for the experiment on the CIFAR10 dataset. Figure~\ref{fig:expe} illustrates that increasing the number of local steps $K$ enhances model accuracy against stronger attacks, validating our theoretical worst-case guarantees.

\section{Conclusion}
\label{sec:conclusion}

We address the challenge of robustness to Byzantine clients in federated learning by analyzing $\mathsf{FedRo}$, a robust variant of $\mathsf{FedAvg}$. We precisely characterize the impact of client subsampling and multiple local steps, standard features of FL, often overlooked in prior work, on the convergence of $\mathsf{FedRo}$. Our analysis shows that when the sample size is sufficiently large, $\mathsf{FedRo}$ achieves near-optimal convergence error that reduces with the number of local steps. For client subsampling, we demonstrate (i) the importance of tuning the client subsampling size appropriately to tackle the Byzantine clients, and (ii) the phenomenon of {\em diminishing return}  where increasing the subsampling size beyond a certain threshold yields only marginal improvement on the learning error. 
\color{black}
\section*{Acknowledgements}
This work has been supported in part by the Swiss National Science Foundation (SNSF) project
200021-200477.
\section*{Impact Statement}
This paper presents work whose goal is to advance the field of Byzantine robust machine learning. 
As with all Byzantine learning algorithms, in order to ensure robustness, we use an aggregation function that essentially filters out the outliers. This filtering process may inadvertently discard minority perspectives that deviate significantly from the norm. Consequently, there exists a fundamental trade-off between inclusivity and robustness which is an interesting avenue for future research.
\color{black}

\bibliographystyle{icml2024}

\newpage
\appendix
\onecolumn
\begin{center}
    \LARGE \bf {Appendix}
\end{center}

\section*{Organization}

The appendices are organized as follows:
\begin{itemize}
    \item In Appendix~\ref{app:conv_proof}, we analyze the convergence of $\mathsf{FedRo}$.
    \item In Appendix~\ref{sec:subsample}, we discuss the results regarding the subsampling process in $\mathsf{FedRo}$.
    \item In Appendix~\ref{abs:expe}, we provide the details about the experimental results of the paper.
\end{itemize}

\section{Convergence Proof}
\label{app:conv_proof}

In this section, we prove Theorem~\ref{th:main} showing the convergence of Algorithm~\ref{algorithm:dsgd}. 

For any set $A$, and integer $a$, we denote by $\mathcal{U}^a(A) $, the uniform distribution over all subsets of $A$ of size $a$.  Recall that at each round $t \in\{0,\ldots,T-1\}$, the server selects a set $\selectedsubset{t} \sim \mathcal{U}^{\samplenum}([n])$ of $\samplenum$ clients uniformly at random from~$[n]$ without replacement.
 Recall also that under event $\mathcal{E}$, defined in~\eqref{eq:event}, the number of selected  Byzantine clients at all rounds is bounded by $\byzsamplenum $. As we show that, under the condition stated in Theorem~\ref{th:main}, this event holds with high probability (Lemma~\ref{lem:subsample}), to simplify the notation, throughout this section, we implicitly assume that event $\mathcal{E}$ holds   and we assume it is given in all subsequent expectations. Now denoting by $\mathcal{H}^*_t$, the set of selected honest clients at round $t$, given event $\mathcal{E}$, the cardinality of  $\mathcal{H}^*_t$ is at least $\honestsamplenum:= \samplenum - \byzsamplenum$. We then define $\honestsampleset[t] \sim \mathcal{U}^{\honestsamplenum}(\mathcal{H}^*_t)$, a randomly selected subset of $\mathcal{H}^*_t$ of size $\honestsamplenum$. As (by symmetry) any two sets $\mathcal{H}^{(1)} \subset \honestset$, and $\mathcal{H}^{(2)}\subset \honestset$ of size $\honestsamplenum$ have the same probability of being selected,  we have $\honestsampleset[t] \sim \mathcal{U}^{\honestsamplenum}(\honestset)$. Our convergence proof relies on analyzing the trajectory of the algorithm considering the average of the updates from the honest clients~$\honestsampleset[t]$, while taking into account the additional error that is introduced by the Byzantine clients.

\subsection{Basic definitions and notations}

Let us denote by
\begin{equation}
\label{eq:gradone}
\normalupdate{i}{t} := \frac{1}{K} \sum_{k =0}^{K-1} \gradient{i}{t,k} = -\frac{\update{i}{t}}{K \localstep}, 
\end{equation}
the average of stochastic gradients computed by client $i$ along their trajectory at round $t$, where in the second equality we used the facts that $\update{i}{t} : = \model{i}{t,K} - \model{i}{t,0}$ from Algorithm~\ref{algorithm:dsgd}. Moreover, we denote by
\begin{align}
\label{eq:gradtwo}
    \normalgrad{i}{t} := \frac{1}{K} \sum_{k =0}^{K-1}\nabla\localloss{i}( \model{i}{t,k}),
\end{align}
 the average of the true gradients along the trajectory of  client $i$ at round $t$. 
Also,
denote by 
\begin{align}
   \dev{t}:= \aggr \left({\update{i}{t}, i \in \selectedsubset{t} }\right) -  \frac{1}{\honestsamplenum} \sum_{i \in \honestsampleset[t] } \update{i}{t},
   \label{eq:error_def}
\end{align}
the difference between the output of the aggregation rule and the average of the honest updates.

We denote by $\mathcal{P}_t$ all the history up to the computation of $\model{}{t}$. Specifically, 
\[\P_t \coloneqq \left\{\model{}{0}, \ldots, \, \model{}{t}\right\} \cup \left\{ \model{i}{s,k}:~\forall k \in [K], \forall i \in [n], \forall s \in \{0 \ldots,t-1\}\right\}\cup \left\{ \selectedsubset{0} \ldots \selectedsubset{t-1}\right\}.\] 
We denote by $\condexpect{t}{\cdot}$ and $\expect{\cdot}$ the conditional expectation $\expect{\cdot ~ \vline ~ \P_t}$ and the total expectation, respectively. Thus, $\expect{\cdot} = \condexpect{0}{ \cdots \condexpect{T}{\cdot}}$.
Note also for computing $\model{}{t+1}$ from $\model{}{t}$, we have two sources of randomness: one for the set of clients that are selected and one for the random trajectory of each  honest client. To obtain a tight bound, we sometimes require to take the expectation with respect to each of these two randomnesses separately. In this case, by the tower rule,
 we have $\condexpect{t}{\cdot} = \condexpect{\honestsampleset[t] \sim \mathcal{U}^{\honestsamplenum}(\honestset)}{\condexpect{t}{\cdot|\honestsampleset[t]}}$.
Finally, for any $k \in \{0,\ldots,K-1 \}$, and $t \in \{0,\ldots,T-1 \}$ we denote by
\[\P_{t,k} := \P_t \cup  \left\{\forall i \in [n]:\model{i}{t,0}, \ldots, \, \model{i}{t,k}; \selectedsubset{t} \right\},\]  the history of the algorithm up to the computation of $ \model{i}{t,k}$, the $k$-th local update of the clients at round $t$, and we denote by $\condexpect{t,k}{\cdot} := \expect{\cdot|
\P_{t,k}}$, the conditional expectation given $\P_{t,k}$.

\subsection{Skeleton of the proof for Theorem~\ref{th:main}}
Our convergence analysis follows the standard analysis of  $\mathsf{FedAvg}$  (e.g., ~\citep{wang2020tackling,jhun22,karimireddy2020scaffold}) considering additional error terms that are introduced by the presence of Byzantine clients.
Let us first show that given that the number of sampled clients is sufficiently large, with a high probability for all $t \in [T]$, the number of Byzantine clients is bounded by $\byzsamplenum{}$. The proof is deferred to Appendix~\ref{sec:subsample}.
\SubsampleLemma*

Now note that combining the global step of Algorithm \ref{algorithm:dsgd}, and~\eqref{eq:error_def}, we have
\begin{align*}
    \model{}{t+1} = \model{}{t} + \globalstep \aggr\left({\update{i}{t}, i \in \selectedsubset{t} }\right) =  \model{}{t} + \globalstep \frac{1}{\honestsamplenum} \sum_{i \in \honestsampleset[t] } \update{i}{t} + \globalstep \dev{t} = \model{}{t} - K \globalstep\localstep \frac{1}{\honestsamplenum}\sum_{i \in \honestsampleset[t] }\normalupdate{i}{t}  + \globalstep \dev{t}.
\end{align*}
Thus, denoting $\realstep := K \globalstep\localstep,$ we have
\begin{align}
\label{eq:updatewitherror}
    \model{}{t+1} =  \model{}{t} - \realstep\frac{1}{\honestsamplenum}\sum_{i \in \honestsampleset[t] }\normalupdate{i}{t} + \globalstep \dev{t},
\end{align}
Accordingly, the computation of $\model{}{t+1}$ is similar to  $\mathsf{FedAvg}$ update but with an additional error term $\globalstep \dev{t}$. To bound this error, we first prove that the variance of the local updates from the honest clients is bounded as follows.
\begin{restatable}{lemma}{UpdateDiff}
\label{lem:update_diff}
Consider Algorithm~\ref{algorithm:dsgd}. Suppose that assumptions~\ref{ass:smoothness},~\ref{ass:stochastic_noise}, and~\ref{ass:bounded_heterogeneity} hold true. Suppose also that $\localstep \leq \nicefrac{1}{16LK}$ and $\localstep\globalstep \leq \nicefrac{1}{36LK}$. Then for any $t \in \{0, \ldots, T-1\}$, and $K \geq 1$, we have
    \begin{align*}
    \expect{\frac{1}{\honestsamplenum{}} \sum_{i \in \honestsampleset[t]} {\norm{\model{i}{t,K} - \avghonestmodel{t,K}}^2}} \leq  3K  \sigma^2 \localstep^2 + 18 K^2 \localstep^2 \heterparam^2,
\end{align*}
where $\avghonestmodel{t,K} := \frac{1}{\honestsamplenum} \sum_{i \in \honestsampleset[t]} \model{i}{t,K}.$    
\end{restatable}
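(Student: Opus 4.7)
The plan is to bound the quantity $V_k := \expect{\frac{1}{\honestsamplenum}\sum_{i \in \honestsampleset[t]}\norm{\model{i}{t,k} - \avghonestmodel{t,k}}^2}$ inductively in $k$, starting from the base case $V_0 = 0$ (all honest clients are initialized at the common model $\model{}{t}$) and reaching the advertised bound at $k = K$. The approach is the standard ``drift from the sample mean'' argument for local SGD, adapted to account for the fact that $\honestsampleset[t]$ is itself a uniform random subset of $\honestset$.

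First, I would establish a one-step recursion. Writing $e^i_k := \model{i}{t,k} - \avghonestmodel{t,k}$ and letting $\overline{g}_{t,k}$ denote the sample-mean of the stochastic gradients at step $k$, the local update rule gives $e^i_{k+1} = e^i_k - \localstep(\gradient{i}{t,k} - \overline{g}_{t,k})$. Decomposing each stochastic gradient into its true gradient $\nabla\localloss{i}(\model{i}{t,k})$ plus a zero-mean noise $\nu^i_{t,k}$ (conditional on $\mathcal P_{t,k}$), I would use Assumption~\ref{ass:stochastic_noise} together with the conditional independence of the noises across clients to bound the expected squared norm of $\nu^i_{t,k} - \overline{\nu}_{t,k}$ by $\sigma^2$. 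Applying Young's inequality with parameter $\tfrac{1}{K-1}$ to the deterministic part yields a recursion of the form
\begin{equation*}
V_{k+1} \leq \left(1 + \tfrac{1}{K-1}\right) V_k + K\localstep^2\, G_k + \localstep^2 \sigma^2,
\end{equation*}
where $G_k$ is the expected sample variance of the true gradients $\nabla\localloss{i}(\model{i}{t,k})$ across $\honestsampleset[t]$.

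Second, I would bound $G_k$ by decomposing $\nabla\localloss{i}(\model{i}{t,k}) - \overline{\nabla\localloss{}}_{t,k}$ via a centered split. Using that for any fixed vector the sample mean minimizes the sum of squared deviations, and splitting through the intermediate point $\model{}{t}$, the contribution divides into (a) smoothness terms controlled by $L^2$ times drift of the local models from $\model{}{t}$, and (b) the sample variance of $\{\nabla\localloss{i}(\model{}{t})\}_{i \in \honestsampleset[t]}$. Since $\honestsampleset[t] \sim \mathcal U^{\honestsamplenum}(\honestset)$, the latter is bounded in expectation by $\heterparam^2$ via Assumption~\ref{ass:bounded_heterogeneity}.

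Third, I would unroll the recursion over $k = 0, \ldots, K-1$. The factor $(1 + 1/(K-1))^{K-1} \leq e$ keeps accumulated constants bounded, and the step-size conditions $\localstep \leq 1/(16LK)$ and $\localstep\globalstep \leq 1/(36LK)$ ensure that the feedback terms of order $L^2 \localstep^2 K^2$ introduced by smoothness in the bound on $G_k$ are small enough to be absorbed into the leading constants, producing the stated coefficients $3K\sigma^2\localstep^2$ on the noise term and $18 K^2 \heterparam^2 \localstep^2$ on the heterogeneity term.

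The main obstacle is the feedback: the natural bound on $G_k$ introduces $\frac{1}{\hat h}\sum_i \norm{\model{i}{t,k} - \model{}{t}}^2$, which is the drift from the \emph{global} model rather than from the sample mean. Threading this into the recursion cleanly --- either by centering all decompositions at $\avghonestmodel{t,k}$ so that smoothness produces only $\norm{e^i_k}^2$ (plus a sample-variance term at $\avghonestmodel{t,k}$ bounded via a second application of smoothness and Assumption~\ref{ass:bounded_heterogeneity}), or by running a companion recursion on the global drift and absorbing its higher-order parts via $\localstep L K \leq 1/16$ --- is the delicate part of the bookkeeping and determines the final constants.
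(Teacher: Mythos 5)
Your plan is correct and essentially the paper's own argument: the paper also runs a per-step recursion on the within-sample dispersion, handles the stochastic noise by conditional zero-mean (contributing $\mathcal{O}(\sigma^2\localstep^2)$ per step), bounds the heterogeneity term via the minimizer property of the sample mean together with uniform subsampling and Assumption~\ref{ass:bounded_heterogeneity}, absorbs the smoothness feedback using $\localstep \leq \nicefrac{1}{16LK}$, and unrolls with $(1+\nicefrac{1}{K-1})^{K-1}\leq e$. The only cosmetic differences are that the paper phrases the recursion in terms of pairwise differences $\norm{\model{i}{t,k}-\model{j}{t,k}}^2$ (converted to deviations from the sample mean by a standard identity) and resolves the feedback exactly via your first option, centering the gradient decomposition at $\avghonestmodel{t,k-1}$ so that smoothness only feeds back the same dispersion quantity, with no companion recursion on the global drift.
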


Combining the above lemma with the fact that the aggregation rule satisfies 
$(\samplenum,\byzsamplenum, \kappa)$-robustness (Defintion~\ref{def:resaveraging}), we can find a uniform bound on the expected error as follows.
\begin{restatable}{lemma}{ErrorBound}
\label{lem:error_bound}
Consider Algorithm~\ref{algorithm:dsgd}. Suppose that assumptions~\ref{ass:smoothness},~\ref{ass:stochastic_noise}, and~\ref{ass:bounded_heterogeneity} hold true. Suppose also that $\localstep \leq \nicefrac{1}{16LK}$ and $\localstep\globalstep \leq \nicefrac{1}{36LK}$ and the aggregation rule $\aggr{}$ is $(\samplenum,\byzsamplenum, \kappa)$-robust. Recall the definition of $\dev{t}$ from \eqref{eq:error_def}. Then for any $t \in \{0, \ldots, T-1\}$, and $K \geq 1$, we have
    \begin{align*}
        \expect{\norm{ \dev{t}}^2} \leq 3 K \kappa  \sigma^2 \localstep^2 + 18 K^2 \kappa\localstep^2 \heterparam^2
    \end{align*}
\end{restatable}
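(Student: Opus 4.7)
The plan is to invoke the $(\samplenum,\byzsamplenum,\kappa)$-robustness of $\aggr$ to pass the deviation $\dev{t}$ to a sample variance of the honest updates, and then apply Lemma~\ref{lem:update_diff} to bound that sample variance. First, I would fix the history and condition on $\selectedsubset{t}$ and on the event $\mathcal{E}$, so that the sample contains at most $\byzsamplenum$ Byzantine clients and therefore at least $\honestsamplenum = \samplenum - \byzsamplenum$ honest ones. By choosing the subset $S = \honestsampleset[t] \subseteq \selectedsubset{t} \cap \honestset$ of size exactly $\honestsamplenum$ in Definition~\ref{def:resaveraging}, the robustness inequality directly yields
\begin{equation*}
    \norm{\dev{t}}^2 \leq \frac{\kappa}{\honestsamplenum} \sum_{i \in \honestsampleset[t]} \norm{\update{i}{t} - \overline{u}_t}^2, \qquad \text{where } \overline{u}_t := \frac{1}{\honestsamplenum}\sum_{i \in \honestsampleset[t]} \update{i}{t}.
\end{equation*}

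Next, since every sampled client starts its local iterations from the common model $\model{}{t}$, we have $\update{i}{t} = \model{i}{t,K} - \model{}{t}$ for each honest $i \in \honestsampleset[t]$, and consequently $\overline{u}_t = \avghonestmodel{t,K} - \model{}{t}$. Subtracting, the model-shift term $\model{}{t}$ cancels, giving $\update{i}{t} - \overline{u}_t = \model{i}{t,K} - \avghonestmodel{t,K}$. Plugging this into the previous display and taking the full expectation (which absorbs the randomness in $\selectedsubset{t}$ and in the local stochastic gradients), I can then apply Lemma~\ref{lem:update_diff} to obtain
\begin{equation*}
    \expect{\norm{\dev{t}}^2} \leq \kappa \, \expect{\frac{1}{\honestsamplenum}\sum_{i \in \honestsampleset[t]} \norm{\model{i}{t,K} - \avghonestmodel{t,K}}^2} \leq \kappa \left( 3K\sigma^2 \localstep^2 + 18 K^2 \localstep^2 \heterparam^2 \right),
\end{equation*}
which is exactly the claimed bound.

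The proof is essentially a one-line composition, so there is no serious obstacle. The only point that requires a little care is the selection of the subset $S$ in applying $(\samplenum,\byzsamplenum,\kappa)$-robustness: the definition quantifies over \emph{all} subsets of size $\samplenum - \byzsamplenum$, so on the event $\mathcal{E}$ we are free to pick any $\honestsamplenum$ honest clients from $\selectedsubset{t}$, and the upstream analysis (as noted in the preamble to this section) has already set $\honestsampleset[t] \sim \mathcal{U}^{\honestsamplenum}(\mathcal{H}^\star_t)$ precisely so that this matches the hypotheses of Lemma~\ref{lem:update_diff}. Everything else is just substitution.
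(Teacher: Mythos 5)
Your proof is correct and follows exactly the paper's argument: the paper also proves this lemma by applying the $(\samplenum,\byzsamplenum,\kappa)$-robustness of $\aggr$ with the subset $\honestsampleset[t]$ (noting $\update{i}{t}-\overline{u}_t = \model{i}{t,K}-\avghonestmodel{t,K}$) and then invoking Lemma~\ref{lem:update_diff}. Your write-up simply makes explicit the one-line composition that the paper states directly.
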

Next, for any global step $t \in \left\{0,\ldots,T-1\right\}$, we analyze the change in the expected loss value i.e., $\expect{\loss(\model{}{t+1})} - \expect{\loss(\model{}{t})}$. Formally, denoting by $\honestnum := \card{\honestset}$, the total number of honest clients and assuming\footnote{When $\honestnum = 1$, as $\byznum < \nicefrac{n}{2}$, we must have $\byznum = 0$ \citep{liu2021approximate}. Thus, there is only one client in the system.
Assumption  $\honestnum \geq 2$ is only made to avoid  this trivial case, in the following results.} $\honestnum \geq 2$, we have the following lemma.

\begin{restatable}{lemma}{DescentWithError}
\label{lem:descent_with_error}
Consider Algorithm~\ref{algorithm:dsgd}. Suppose that assumptions~\ref{ass:smoothness},~\ref{ass:stochastic_noise}, and~\ref{ass:bounded_heterogeneity} hold true. Suppose also that $\localstep \leq \nicefrac{1}{16LK}$ and $\localstep\globalstep \leq \nicefrac{1}{36LK}$ and the aggregation rule $\aggr{}$ is $(\samplenum,\byzsamplenum, \kappa)$-robust. Recall the definition of $\dev{t}$ from \eqref{eq:error_def}. Then for any $t \in \{0, \ldots, T-1\}$, and $K \geq 1$, we have
\begin{align*}
    &\expect{\loss(\model{}{t+1})} \leq \expect{\loss(\model{}{t})} + \left( - \frac{2\realstep}{5} + 6L \realstep^2 + 8\localstep^2L^2K(K-1) \left(\frac{\realstep}{2} + 6L\realstep^2 \right)\right) \expect{\norm{\nabla\loss(\model{}{t})}^2} \\
    &+ 2L \realstep^2 \frac{\sigma^2}{\honestsamplenum K}+6L\realstep^2\frac{\honestnum-\honestsamplenum }{(\honestnum-1)\honestsamplenum} \heterparam^2
    +\left(\frac{\realstep}{2} + 6L\realstep^2 \right)\left(  2\localstep^2 L^2 (K-1)\sigma^2 + 8\localstep^2L^2K(K-1)\heterparam^2 \right)\\ &+ \left(\frac{10}{\realstep}\globalstep^2+ L\globalstep^2 \right) \expect{\norm{ \dev{t}}^2}.
\end{align*}
\end{restatable}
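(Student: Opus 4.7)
\textbf{Proof plan for Lemma~\ref{lem:descent_with_error}.}
The plan is to start from the $L$-smoothness descent inequality applied to the global step \eqref{eq:updatewitherror} and then carefully decompose the resulting terms according to their source of randomness: stochastic gradient noise, client subsampling, local drift, and the robust-aggregation error $\dev{t}$. Concretely, with $\Delta_t := x^{t+1} - x^t = -\realstep \frac{1}{\honestsamplenum}\sum_{i \in \honestsampleset[t]} \normalupdate{i}{t} + \globalstep \dev{t}$, Assumption~\ref{ass:smoothness} gives
\begin{align*}
\loss(\model{}{t+1}) \le \loss(\model{}{t}) + \langle \nabla \loss(\model{}{t}), \Delta_t\rangle + \tfrac{L}{2}\norm{\Delta_t}^2.
\end{align*}
I will then take the conditional expectation $\condexpect{t}{\cdot}$, handling the inner-product and the quadratic parts separately.

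For the inner product, I will first average over the stochastic gradient noise (replacing $\normalupdate{i}{t}$ by $\normalgrad{i}{t}$) and then over the honest client subset $\honestsampleset[t] \sim \mathcal{U}^{\honestsamplenum}(\honestset)$, which turns the average over $\honestsampleset[t]$ into the full honest average. Writing $\nabla \loss(\model{}{t}) = \frac{1}{\honestnum}\sum_{i \in \honestset} \nabla \localloss{i}(\model{}{t})$ and using the identity $\langle a,b\rangle = \tfrac12\norm{a}^2 + \tfrac12\norm{b}^2 - \tfrac12\norm{a-b}^2$, the dominant piece becomes $-\tfrac{\realstep}{2}\norm{\nabla \loss(\model{}{t})}^2$, with a leftover term controlling the drift between $\nabla \localloss{i}(\model{i}{t,k})$ and $\nabla \localloss{i}(\model{}{t})$ that is bounded via smoothness and Lemma~\ref{lem:update_diff}, producing the $\localstep^2 L^2 K(K-1)(\sigma^2 + \heterparam^2)$ terms. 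The cross term $\langle \nabla \loss(\model{}{t}), \globalstep \dev{t}\rangle$ is split by Young's inequality with weight tuned to $\realstep$, which is exactly what generates the coefficient $10\globalstep^2/\realstep$ in front of $\expect{\norm{\dev{t}}^2}$.

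For the quadratic part, I will apply $\norm{a+b}^2 \le 2\norm{a}^2 + 2\norm{b}^2$ to separate the two contributions to $\Delta_t$. The honest-update square is further decomposed: first conditioning on $\honestsampleset[t]$ to extract the stochastic-gradient variance, giving $\sigma^2/(\honestsamplenum K)$ via independence across the $K$ local steps and the $\honestsamplenum$ clients; then taking the expectation over $\honestsampleset[t]$ and using the standard hypergeometric-type variance reduction for sampling without replacement,
\begin{align*}
\expect{\norm{\tfrac{1}{\honestsamplenum}\sum_{i\in \honestsampleset[t]} \nabla \localloss{i}(\model{}{t}) - \nabla \loss(\model{}{t})}^2} \le \tfrac{\honestnum - \honestsamplenum}{(\honestnum-1)\honestsamplenum}\heterparam^2,
\end{align*}
which yields the $\frac{\honestnum-\honestsamplenum}{(\honestnum-1)\honestsamplenum}\heterparam^2$ factor in the bound. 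A further application of Lemma~\ref{lem:update_diff} handles the residual local-drift error inside $\normalgrad{i}{t} - \nabla \localloss{i}(\model{}{t})$. The $\globalstep\dev{t}$ term contributes $L\globalstep^2 \expect{\norm{\dev{t}}^2}$ directly, matching the second coefficient in front of $\expect{\norm{\dev{t}}^2}$.

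The main obstacle I anticipate is bookkeeping: several error channels (SGD noise, client sampling, local drift, aggregation error) contribute both to the inner-product and the quadratic parts, and the final coefficients in the statement depend on choosing the Young parameter carefully so that $-\tfrac{\realstep}{2}\norm{\nabla \loss(\model{}{t})}^2$ is only eroded down to $-\tfrac{2\realstep}{5}\norm{\nabla \loss(\model{}{t})}^2$ after absorbing cross terms. Enforcing the step-size bounds $\localstep \le 1/(16LK)$ and $\localstep\globalstep \le 1/(36LK)$ will be essential to absorb the extra $L\realstep^2$-scale terms inside the coefficient of $\norm{\nabla \loss(\model{}{t})}^2$, as well as to keep the drift bound of Lemma~\ref{lem:update_diff} valid. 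Once all of these are combined and regrouped by type (gradient norm, SGD variance, heterogeneity, local-drift, aggregation error), the inequality claimed in the lemma follows.
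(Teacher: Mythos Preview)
Your plan is essentially the paper's proof: the paper just packages the same steps into three auxiliary lemmas (Lemma~\ref{lem:descent} for the raw smoothness descent with the Young split that produces $\tfrac{10}{\realstep}\globalstep^2$, Lemma~\ref{lem:diam_update} for the subsampling variance with the $\tfrac{\honestnum-\honestsamplenum}{(\honestnum-1)\honestsamplenum}\heterparam^2$ factor, and Lemma~\ref{lem:diffupdate} for the local-drift term) and then combines them.

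There is, however, one concrete misstep you should fix. You twice invoke Lemma~\ref{lem:update_diff} to control the drift $\nabla\localloss{i}(\model{}{t})-\normalgrad{i}{t}$, but that lemma bounds a different quantity: the \emph{inter-client spread} $\tfrac{1}{\honestsamplenum}\sum_{i\in\honestsampleset[t]}\norm{\model{i}{t,K}-\avghonestmodel{t,K}}^2$, which is what feeds the aggregation-error bound on $\norm{\dev{t}}^2$ (Lemma~\ref{lem:error_bound}). For the descent inequality you instead need the \emph{drift from the initial point}, i.e.\ a bound on $\tfrac{1}{\honestnum}\sum_{i\in\honestset}\norm{\nabla\localloss{i}(\model{}{t})-\normalgrad{i}{t}}^2$; this is supplied by Lemma~\ref{lem:diffupdate}, whose recursion is related to but distinct from that of Lemma~\ref{lem:update_diff}. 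Crucially, Lemma~\ref{lem:diffupdate} carries an extra $8\localstep^2 L^2 K(K-1)\norm{\nabla\loss(\model{}{t})}^2$ term; it is this term, multiplied by the accumulated coefficient $\tfrac{\realstep}{2}+6L\realstep^2$, that produces the $8\localstep^2 L^2 K(K-1)\bigl(\tfrac{\realstep}{2}+6L\realstep^2\bigr)$ piece in the gradient-norm coefficient of the statement. Your sketch does not account for this feedback.

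A second minor point: the step-size constraints are not actually used to simplify anything in the proof of this lemma. The coefficient of $\norm{\nabla\loss(\model{}{t})}^2$ is left in its raw form $-\tfrac{2\realstep}{5}+6L\realstep^2+8\localstep^2 L^2 K(K-1)\bigl(\tfrac{\realstep}{2}+6L\realstep^2\bigr)$, and the absorption you describe (down to $-\tfrac{\realstep}{5}$) only happens afterwards, in the proof of Theorem~\ref{th:main}.
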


\subsection{Combining all (proof of Theorem~\ref{th:main})}
By Lemma~\ref{lem:descent_with_error}, we have
\begin{align*}
        &\expect{\loss(\model{}{t+1})} \leq \expect{\loss(\model{}{t})} + A\expect{\norm{\nabla\loss(\model{}{t})}^2}  + B \expect{\norm{ \dev{t}}^2}.\\
    &+ 2L \realstep^2 \frac{\sigma^2}{\honestsamplenum K}+6L\realstep^2\frac{\honestnum-\honestsamplenum }{(\honestnum-1)\honestsamplenum} \heterparam^2
    +C\left(  2\localstep^2 L^2 (K-1)\sigma^2 + 8\localstep^2L^2K(K-1)\heterparam^2 \right),
\end{align*}
where $$A:=  - \frac{2\realstep}{5} + 6L \realstep^2 + 8\localstep^2L^2K(K-1) \left(\frac{\realstep}{2} + 6L\realstep^2 \right),$$
$$B:=\frac{\realstep}{2} + 6L\realstep^2$$
$$C:=\frac{10}{\realstep}\globalstep^2+ L\globalstep^2 $$
We now analyze $A$, $B$, and $C$ given that 
 $\realstep \leq \frac{1}{36L}$, and $\localstep \leq \frac{1}{16KL}$.
Then, we have
\begin{align*}
    A \leq  - \frac{2\realstep}{5} + \frac{ \realstep}{6} +\frac{1}{32}\left(\frac{\realstep}{2} + 6L\realstep^2 \right) \leq -\frac{\realstep}{5}
\end{align*}
and $$B\leq\frac{\realstep}{2} + \frac{\realstep}{6}\leq \frac{2\realstep}{3}$$ and $$C\leq\frac{10}{\realstep}\globalstep^2+ \frac{1}{36\realstep}\globalstep^2  
 \leq \frac{11}{\realstep}\globalstep^2$$
 Therefore, we have
\begin{align*}
    &\expect{\loss(\model{}{t+1})} \leq \expect{\loss(\model{}{t})} + \left( - \frac{\realstep}{5}\right) \expect{\norm{\nabla\loss(\model{}{t})}^2} + \left(\frac{11}{\realstep}\globalstep^2\right) \expect{\norm{ \dev{t}}^2} \\
    &+ 2L \realstep^2 \frac{\sigma^2}{\honestsamplenum K}+6L\realstep^2\frac{\honestnum-\honestsamplenum }{(\honestnum-1)\honestsamplenum} \heterparam^2
    +\left(\frac{2\realstep}{3} \right)\left(  2\localstep^2 L^2 (K-1)\sigma^2 + 8\localstep^2L^2K(K-1)\heterparam^2 \right).
\end{align*}
Rearranging the terms, we obtain that
\begin{align*}
    &\expect{\norm{\nabla\loss(\model{}{t})}^2} \leq \frac{5}{\realstep} \left(\expect{\loss(\model{}{t})} - \expect{\loss(\model{}{t+1})}\right) + \left(\frac{55}{\realstep^2}\globalstep^2\right) \expect{\norm{ \dev{t}}^2} \\
    &+ 10L \realstep \frac{\sigma^2}{\honestsamplenum K}+30L\realstep\frac{\honestnum-\honestsamplenum }{(\honestnum-1)\honestsamplenum} \heterparam^2
    +\left(\frac{5}{3} \right)\left(  2\localstep^2 L^2 (K-1)\sigma^2 + 8\localstep^2L^2K(K-1)\heterparam^2 \right).
\end{align*}

Then by Lemma~\ref{lem:error_bound}, we have
\begin{align*}
    &\expect{\norm{\nabla\loss(\model{}{t})}^2} \leq \frac{5}{\realstep} \left(\expect{\loss(\model{}{t})} - \expect{\loss(\model{}{t+1})}\right) + \left(\frac{55}{\realstep^2}\globalstep^2\right)\left( 3 K \kappa  \sigma^2 \localstep^2 + 18 K^2 \kappa\localstep^2 \heterparam^2\right) \\
    &+ 10L \realstep \frac{\sigma^2}{\honestsamplenum K}+30L\realstep\frac{\honestnum-\honestsamplenum }{(\honestnum-1)\honestsamplenum} \heterparam^2
    +\left(\frac{5}{3} \right)\left(  2\localstep^2 L^2 (K-1)\sigma^2 + 8\localstep^2L^2K(K-1)\heterparam^2 \right).
\end{align*}
Recall that $\realstep = K \localstep \globalstep$, hence, we obtain that
\begin{align*}
    &\expect{\norm{\nabla\loss(\model{}{t})}^2} \leq \frac{5}{K \localstep \globalstep} \left(\expect{\loss(\model{}{t})} - \expect{\loss(\model{}{t+1})}\right) +165  \kappa \left( \frac{\sigma^2}{K} + 6  \heterparam^2\right) \\
    &+ \frac{10L K \localstep \globalstep}{\honestsamplenum}\left( \frac{\sigma^2}{ K}+3\frac{\honestnum-\honestsamplenum }{\honestnum-1} \heterparam^2 \right)
    + \frac{10}{3}\localstep^2 L^2 (K-1)\sigma^2 + \frac{40}{3}\localstep^2L^2K(K-1)\heterparam^2.
\end{align*}

Taking the average over $t \in \{ 0, \ldots, T-1\}$, we have
\begin{align*}
    &\frac{1}{T} \sum_{t =0}^{T-1}\expect{\norm{\nabla\loss(\model{}{t})}^2} \leq \frac{5}{TK \localstep \globalstep} \left(\loss(\model{}{0})-\expect{\loss(\model{}{T})}\right) +165  \kappa \left( \frac{\sigma^2}{K} + 6  \heterparam^2\right) \\
    &+\frac{10L K \localstep \globalstep}{\honestsamplenum}\left( \frac{\sigma^2}{ K}+3\frac{\honestnum-\honestsamplenum }{\honestnum-1} \heterparam^2 \right)
    + \frac{10}{3}\localstep^2 L^2 (K-1)\sigma^2 + \frac{40}{3}\localstep^2L^2K(K-1)\heterparam^2.
\end{align*}
Denoting $\loss^* = \min_{\model{}{}\in\R^d} \loss(\model{}{})$, and noting that $\honestsamplenum\geq \samplenum/2$,  $\honestnum-1\geq \honestnum/2$, we obtain that
\begin{align*}
    &\frac{1}{T} \sum_{t =0}^{T-1}\expect{\norm{\nabla\loss(\model{}{t})}^2} \leq \frac{5}{TK \localstep \globalstep} ({\loss(\model{}{0})} - \loss^*) +165  \kappa \left( \frac{\sigma^2}{K} + 6  \heterparam^2\right) \\
    &+ \frac{20L K \localstep \globalstep}{\samplenum}\left( \frac{\sigma^2}{ K}+6\left(1 - \frac{\honestsamplenum  }{\honestnum}\right) \heterparam^2 \right)
    + \frac{10}{3}\localstep^2 L^2 (K-1)\sigma^2 + \frac{40}{3}\localstep^2L^2K(K-1)\heterparam^2,
\end{align*}
Noting that ${\loss(\model{}{0})} - \loss^* \leq \Delta_0$
yields the desired result.
\subsection{Proof of Corollary~\ref{cor:fix_learningrate}}
    \begin{proof}
    We set
    \begin{align}\label{eq:gamms}
        \globalstep = 1,
    \end{align}
    and
    \begin{align}
    \label{eq:gammac}
    \localstep = \min\left\{\frac{1}{36LK}, \frac{1}{2K}\sqrt{\frac{\samplenum\Delta_0}{LT\left( \frac{\sigma^2}{ K}+6\left(1 - \frac{\honestsamplenum  }{\honestnum}\right) \heterparam^2 \right)}}, \sqrt[3]{\frac{3 \Delta_0 }{2K(K-1)TL^2(\sigma^2 + 4K\heterparam^2)}}\right\}.
    \end{align}
    Therefore, we have
    $\localstep \leq \frac{1}{16LK}$ and $\localstep\globalstep \leq \frac{1}{36LK}$. Also, as $\frac{1}{\min\{a,b,c\}} = \max\{\frac{1}{a},\frac{1}{b},\frac{1}{c}\}$, for any $a,b>0$, we obtain that
    \begin{align}\nonumber
    \frac{1}{\localstep} &= \max \left\{36LK,  
2K\sqrt{\frac{LT\left( \frac{\sigma^2}{ K}+6\left(1 - \frac{\honestsamplenum  }{\honestnum}\right) \heterparam^2 \right)}{\samplenum\Delta_0}}
    ,  \sqrt[3]{\frac{2K(K-1)L^2 T (\sigma^2 + 4K\heterparam^2)}{3 \Delta_0}}\right\} \\ &\leq 36LK + 
2K\sqrt{\frac{LT\left( \frac{\sigma^2}{ K}+6\left(1 - \frac{\honestsamplenum  }{\honestnum}\right) \heterparam^2 \right)}{\samplenum\Delta_0}} +  \sqrt[3]{\frac{2K(K-1)L^2T(\sigma^2 + 4K\heterparam^2)}{3 \Delta_0}}.\label{eq:gammainvers}
    \end{align}
    Plugging~\eqref{eq:gamms},~\eqref{eq:gammac}, and~\eqref{eq:gammainvers} into Theorem~\ref{th:main}, we obtain that
\begin{align*}
    &\frac{1}{T} \sum_{t =0}^{T-1}\expect{\norm{\nabla\loss(\model{}{t})}^2} \overset{(a)}{\leq}\frac{180LK+ 
10K\sqrt{\frac{LT\left( \frac{\sigma^2}{K}+6\left(1 - \frac{\honestsamplenum  }{\honestnum}\right) \heterparam^2 \right)}{\samplenum\Delta_0}}+ \sqrt[3]{\frac{2K(K-1)L^2T(\sigma^2 + 4K\heterparam^2)}{3 \Delta_0}}}{TK } \Delta_0  \\
    &+ \frac{20L K \localstep}{\samplenum}\left( \frac{\sigma^2}{ K}+6\left(1 - \frac{\honestsamplenum  }{\honestnum}\right) \heterparam^2 \right)
    + \frac{10}{3}\localstep^2 L^2 (K-1)\sigma^2 + \frac{40}{3}\localstep^2L^2K(K-1)\heterparam^2 + 165  \kappa \left( \frac{\sigma^2}{K} + 6  \heterparam^2\right)\\
    &\overset{(b)}{\leq} \frac{180L+ 
10\sqrt{\frac{LT\left( \frac{\sigma^2}{ K}+6\left(1 - \frac{\honestsamplenum  }{\honestnum}\right) \heterparam^2 \right)}{\samplenum\Delta_0}}}{T} \Delta_0 + \sqrt[3]{\frac{2(K-1)\Delta_0^2 L^2(\sigma^2 + 4K\heterparam^2)}{3T^2 K^2}} +165  \kappa \left( \frac{\sigma^2}{K} + 6  \heterparam^2\right) \\
    &+ \frac{20L K \left(\frac{1}{2K}\sqrt{\frac{\samplenum\Delta_0}{LT\left( \frac{\sigma^2}{ K}+6\left(1 - \frac{\honestsamplenum  }{\honestnum}\right) \heterparam^2 \right)}}\right)}{\samplenum}\left( \frac{\sigma^2}{ K}+6\left(1 - \frac{\honestsamplenum  }{\honestnum}\right) \heterparam^2 \right)\\
    &+ \frac{10}{3}\left(\sqrt[3]{\frac{3 \Delta_0 }{2K(K-1)TL^2(\sigma^2 + 4K\heterparam^2)}}\right)^2 L^2 (K-1)(\sigma^2 + 4K\heterparam^2),
\end{align*}
where (a) uses~\eqref{eq:gammainvers}, and (b) uses the second and third terms of the minimum in~\eqref{eq:gammac}.
Rearranging the terms, we obtain that
\begin{align*}
    \frac{1}{T} \sum_{t =0}^{T-1}\expect{\norm{\nabla\loss(\model{}{t})}^2} &\leq 20\sqrt{\frac{L\Delta_0\left( \frac{\sigma^2}{ K}+6\left(1 - \frac{\honestsamplenum  }{\honestnum}\right) \heterparam^2 \right)}{\samplenum  T}}
     +165  \kappa \left( \frac{\sigma^2}{K} + 6  \heterparam^2\right) \\
    &+  \sqrt[3]{\frac{8(K-1)\Delta_0^2 L^2(\frac{\sigma^2}{K} + 4\heterparam^2)}{3KT^2 }}+ \frac{180L\Delta_0 }{T}.
\end{align*}
Recalling $\honestsamplenum = \samplenum - \byzsamplenum$, and $\honestnum = n - \byznum$, and denoting $r = 1- \frac{ \samplenum - \byzsamplenum}{n - \byznum}$, we then have
\begin{align*}
    &\frac{1}{T} \sum_{t =0}^{T-1}\expect{\norm{\nabla\loss(\model{}{t})}^2}  \in \\&\mathcal{O} \left(\sqrt{\frac{L\Delta_0}{\samplenum T}\left(\frac{\sigma^2}{K} +r \heterparam^2 \right)}+\sqrt[3]{\frac{(1-\frac{1}{K}) L^2\Delta_0^2(\frac{\sigma^2}{K} + \heterparam^2)}{T^2 }}+ \frac{L\Delta_0 }{T} + \kappa \left( \frac{\sigma^2}{K} +   \heterparam^2\right) \right),
\end{align*}
which is the desired result.
    
    \end{proof}
 
\subsection{Proof of the lemmas}
First, we prove a useful lemma.
\begin{lemma}
\label{lem:diam_equal}
    For any set $\{x^{(i)}\}_{i \in S}$ of $\card{S}$ vectors, we have $$ \frac{1}{\card{S}} \sum_{i\in S} \norm{ x^{(i)}- \bar{x}}^2 = \frac{1}{2} \cdot \frac{1}{\card{S}^2} \sum_{i,j \in S} \norm{ x^{(i)}- x^{(j)}}^2 .$$
\end{lemma}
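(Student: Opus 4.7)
\textbf{Proof plan for Lemma~\ref{lem:diam_equal}.} The identity is the standard one relating the sample variance of a finite set of vectors to the average pairwise squared distance; both quantities equal $\frac{1}{|S|}\sum_i \|x^{(i)}\|^2 - \|\bar x\|^2$, so my plan is simply to expand each side and match.

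First, I would expand the left-hand side by writing
$\|x^{(i)} - \bar x\|^2 = \|x^{(i)}\|^2 - 2\langle x^{(i)}, \bar x\rangle + \|\bar x\|^2$,
summing over $i\in S$, using $\sum_i x^{(i)} = |S|\bar x$ to collapse the cross term to $-2|S|\|\bar x\|^2$, and then dividing by $|S|$ to obtain
$\tfrac{1}{|S|}\sum_i \|x^{(i)} - \bar x\|^2 = \tfrac{1}{|S|}\sum_i \|x^{(i)}\|^2 - \|\bar x\|^2$.

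Next, I would expand the right-hand side via
$\|x^{(i)} - x^{(j)}\|^2 = \|x^{(i)}\|^2 - 2\langle x^{(i)}, x^{(j)}\rangle + \|x^{(j)}\|^2$
and sum over both indices. The two norm contributions each produce $|S|\sum_i \|x^{(i)}\|^2$, while the cross term yields $-2\|\sum_i x^{(i)}\|^2 = -2|S|^2 \|\bar x\|^2$. After dividing by $2|S|^2$, this gives exactly the same expression $\tfrac{1}{|S|}\sum_i \|x^{(i)}\|^2 - \|\bar x\|^2$, matching the left-hand side.

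There is no real obstacle here: the proof is just two applications of the polarization identity and the definition of $\bar x$. The only care point is keeping track of the factor of $|S|$ absorbed by $\sum_i x^{(i)} = |S|\bar x$ versus $\|\sum_i x^{(i)}\|^2 = |S|^2\|\bar x\|^2$, which is what produces the $\tfrac{1}{2}\cdot \tfrac{1}{|S|^2}$ normalization on the right rather than $\tfrac{1}{|S|^2}$ or $\tfrac{1}{|S|}$.
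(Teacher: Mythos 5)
Your proposal is correct: both expansions are valid, and the bookkeeping with $\sum_{i} x^{(i)} = \card{S}\,\bar{x}$ versus $\norm{\sum_i x^{(i)}}^2 = \card{S}^2 \norm{\bar{x}}^2$ is handled properly. This is essentially the paper's own argument — the paper expands $\norm{x^{(i)}-x^{(j)}}^2$ around the mean and kills the cross term using $\sum_{j}(x^{(j)}-\bar{x})=0$, whereas you reduce both sides to the common expression $\frac{1}{\card{S}}\sum_i \norm{x^{(i)}}^2 - \norm{\bar{x}}^2$; the difference is purely cosmetic.
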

\begin{proof}
\begin{align*}
     \frac{1}{\card{S}^2} \sum_{i,j \in S} \norm{ x^{(i)}- x^{(j)}}^2 &= \frac{1}{\card{S}^2} \sum_{i,j \in S} \norm{ (x^{(i)} -\bar{x}) - (x^{(j)} - \bar{x})}^2\\ &= \frac{1}{\card{S}^2}\sum_{i,j \in S}\left[ \norm{x^{(i)} -\bar{x} }^2 + \norm{x^{(j)} -\bar{x} }^2 + 2 \iprod{x^{(i)} -\bar{x}}{x^{(j)} -\bar{x}}\right] \\
     &= \frac{2}{\card{S}} \sum_{i,j \in S} \norm{x^{(i)} -\bar{x} }^2  + \frac{2}{\card{S}^2}  \sum_{i \in S} \iprod{x^{(i)} - \bar{x}}{ \sum_{j \in S} (x^{(j)} -\bar{x})}.
\end{align*}

Now as $\sum_{j \in S} (x^{(j)} -\bar{x}) = 0$, we conclude the desired result:
\begin{align*}
    \frac{1}{\card{S}^2} \sum_{i,j \in S} \norm{ x^{(i)}- x^{(j)}}^2 = \frac{2}{\card{S}} \sum_{i,j \in S} \norm{x^{(i)} -\bar{x} }^2.
\end{align*}
\end{proof}
\UpdateDiff*
\begin{proof} For any $k \in [K]$, we have
    \begin{align*}
        &{\model{i}{t,k} - \model{j}{t,k}} = {\model{i}{t,k-1} - \localstep \gradient{i}{t,k-1} - \model{j}{t,k-1} + \localstep \gradient{j}{t,k-1}} \\
        &= \model{i}{t,k-1} - \localstep \left(\gradient{i}{t,k-1} -\nabla\localloss{i} (\model{i}{t,k-1})\right) - \localstep\left(\nabla\localloss{i} (\model{i}{t,k-1}) -  \nabla\localloss{i} (\avghonestmodel{t,k-1})\right)\\ 
        &- \localstep \left( \nabla\localloss{i} (\avghonestmodel{t,k-1})  - \nabla\localloss{j} (\avghonestmodel{t,k-1}) \right)\\ & - \localstep \left(\nabla\localloss{j} (\avghonestmodel{t,k-1}) - \nabla\localloss{j} (\model{j}{t,k-1}) \right)- \localstep \left(\nabla\localloss{j} (\model{j}{t,k-1}) - \gradient{j}{t,k-1}\right) - \model{j}{t,k-1}
    \end{align*}
Now denoting 
\begin{align*}
    D &:= - \localstep\left(\nabla\localloss{i} (\model{i}{t,k-1}) -  \nabla\localloss{i} (\avghonestmodel{t,k-1})\right)- \localstep \left( \nabla\localloss{i} (\avghonestmodel{t,k-1})  - \nabla\localloss{j} (\avghonestmodel{t,k-1}) \right)\\ & - \localstep \left(\nabla\localloss{j} (\avghonestmodel{t,k-1}) - \nabla\localloss{j} (\model{j}{t,k-1}) \right),
\end{align*}
we have
\begin{align*}
        {\model{i}{t,k} - \model{j}{t,k}} 
        &= \model{i}{t,k-1} - \localstep \left(\gradient{i}{t,k-1} -\nabla\localloss{i} (\model{i}{t,k-1})\right) +D- \localstep \left(\nabla\localloss{j} (\model{j}{t,k-1}) - \gradient{j}{t,k-1}\right) - \model{j}{t,k-1}\\
        &= \model{i}{t,k-1} - \model{j}{t,k-1} + D - \localstep \left(\gradient{i}{t,k-1} -\nabla\localloss{i} (\model{i}{t,k-1}) - \nabla\localloss{j} (\model{j}{t,k-1}) - \gradient{j}{t,k-1}\right).
    \end{align*}
Now note that given the history $\P_{t,k-1}$, $D$ is a constant.
Also, $\condexpect{t, k - 1}{\gradient{i}{t,k-1} - \nabla\localloss{i} (\model{i}{t,k-1})} = 0$ and $\condexpect{t, k - 1}{\gradient{j}{t,k-1} - \nabla\localloss{j} (\model{j}{t,k-1})} = 0$. Therefore, 
\begin{align*}
    \condexpect{t, k - 1}{\norm{\model{i}{t,k} - \model{j}{t,k}}^2} &\leq  2 \sigma^2 \localstep^2 +  {\norm{\model{i}{t,k-1} - \model{j}{t,k-1}+D}^2}
\end{align*}
by Young's inequality for any $\alpha > 0$, we have
\begin{align*}
    \condexpect{t, k - 1}{\norm{\model{i}{t,k} - \model{j}{t,k}}^2} \leq  2 \sigma^2 \localstep^2 +  (1+\frac{1}{\alpha}){\norm{\model{i}{t,k-1} - \model{j}{t,k-1}}^2}
    &+(1+{\alpha})\norm{D}^2.
\end{align*}
Now by the definition of $D$ and using the fact that (by Jensen's inequality) for $a,b,c\geq 0$, $(a+b+c)^2 \leq 3a^2 + 3b^2+3c^2$, we have
\begin{align}\nonumber
    \condexpect{t, k - 1}{\norm{\model{i}{t,k} - \model{j}{t,k}}^2} &\leq  2 \sigma^2 \localstep^2 +  (1+\frac{1}{\alpha}){\norm{\model{i}{t,k-1} - \model{j}{t,k-1}}^2}\\ \nonumber
    &+3(1+{\alpha})\localstep^2{\norm{\nabla\localloss{i} (\model{i}{t,k-1}) -  \nabla\localloss{i} (\avghonestmodel{t,k-1})}^2}\\  \nonumber&+ 3(1+{\alpha})\localstep^2{\norm{ \nabla\localloss{i} (\avghonestmodel{t,k-1})  - \nabla\localloss{j} (\avghonestmodel{t,k-1})}^2}\\ \nonumber
    &+3(1+{\alpha})\localstep^2{\norm{ \nabla\localloss{j} (\model{j}{t,k-1})  - \nabla\localloss{j} (\avghonestmodel{t,k-1})}^2}.
\end{align}
Taking the average over $i,j \in \honestsampleset[t]$, we have
\begin{align}
\label{eq:brker}
    \frac{1}{\honestsamplenum{}^2} \sum_{i,j \in \honestsampleset[t]}\condexpect{t, k - 1}{\norm{\model{i}{t,k} - \model{j}{t,k}}^2} &\leq  2 \sigma^2 \localstep^2 + \frac{1}{\honestsamplenum{}^2} \sum_{i,j \in \honestsampleset[t]} (1+\frac{1}{\alpha}){\norm{\model{i}{t,k-1} - \model{j}{t,k-1}}^2}\\ \nonumber
    &+6(1+{\alpha})\localstep^2\frac{1}{\honestsamplenum{}} \sum_{i \in \honestsampleset[t]}{\norm{\nabla\localloss{i} (\model{i}{t,k-1}) -  \nabla\localloss{i} (\avghonestmodel{t,k-1})}^2}\\  \nonumber
    &+ 3(1+{\alpha})\localstep^2
    \frac{1}{\honestsamplenum{}^2} \sum_{i,j \in \honestsampleset[t]}
    {\norm{ \nabla\localloss{i} (\avghonestmodel{t,k-1})  - \nabla\localloss{j} (\avghonestmodel{t,k-1})}^2}\nonumber
\end{align}
Now note that by Assumption~\ref{ass:smoothness}, we have
\begin{align}
\nonumber
   \frac{1}{\honestsamplenum{}} \sum_{i \in \honestsampleset[t]}{\norm{\nabla\localloss{i} (\model{i}{t,k-1}) -  \nabla\localloss{i} (\avghonestmodel{t,k-1})}^2} &\leq L^2\frac{1}{\honestsamplenum{}} \sum_{i \in \honestsampleset[t]}{\norm{\model{i}{t,k-1}-  \avghonestmodel{t,k-1}}^2}\\
   &\leq \frac{L^2}{\honestsamplenum{}^2} \sum_{i,j \in \honestsampleset[t]}{\norm{\model{i}{t,k-1} - \model{j}{t,k-1}}^2}\label{eq:bone}
\end{align}
where in the second inequality we used Lemma~\ref{lem:diam_equal}. Also, by Lemma ~\ref{lem:diam_equal}, we have
\begin{align}
\label{eq:btwo}
    \frac{1}{\honestsamplenum{}^2} \sum_{i,j \in \honestsampleset[t]} \norm{ \nabla\localloss{i} (\avghonestmodel{t,k-1})  - \nabla\localloss{j} (\avghonestmodel{t,k-1})}^2 =  \frac{2}{\honestsamplenum{}} \sum_{i \in \honestsampleset[t]} \norm{ \nabla\localloss{i} (\avghonestmodel{t,k-1})  - \nabla\hat{\localloss{}} (\avghonestmodel{t,k-1})}^2,
\end{align}
where $\hat{\localloss{}}(\model{}{}) = \frac{1}{\honestsamplenum}\sum_{i \in \honestsampleset[t]}\localloss{i}(\model{}{})$.
Plugging \eqref{eq:bone}, and \eqref{eq:btwo} into~\eqref{eq:brker}, we have
\begin{align*}
    \frac{1}{\honestsamplenum{}^2} \sum_{i,j \in \honestsampleset[t]}\condexpect{t, k - 1}{\norm{\model{i}{t,k} - \model{j}{t,k}}^2} &\leq  2 \sigma^2 \localstep^2 + 6(1+{\alpha})  \localstep^2  \frac{1}{\honestsamplenum{}} \sum_{i \in \honestsampleset[t]} \norm{ \nabla\localloss{i} (\avghonestmodel{t,k-1})  - \nabla\hat{\localloss{}} (\avghonestmodel{t,k-1})}^2\\   &+  (1+\frac{1}{\alpha}) \frac{1}{\honestsamplenum{}^2} \sum_{i,j \in \honestsampleset[t]}{\norm{\model{i}{t,k-1} - \model{j}{t,k-1}}^2}\\ 
    &{+6(1+\alpha)L^2\localstep^2\frac{1}{\honestsamplenum{}^2} \sum_{i,j \in \honestsampleset[t]}{\norm{\model{i}{t,k-1} - \model{j}{t,k-1}}^2}}.
\end{align*}%
Taking the total expectation from both sides, we have
\begin{align}\nonumber
    \expect{\frac{1}{\honestsamplenum{}^2} \sum_{i,j \in \honestsampleset[t]}{\norm{\model{i}{t,k} - \model{j}{t,k}}^2}} &\leq  2 \sigma^2 \localstep^2 + 6(1+{\alpha})  \localstep^2  \expect{\frac{1}{\honestsamplenum{}} \sum_{i \in \honestsampleset[t]} \norm{ \nabla\localloss{i} (\avghonestmodel{t,k-1})  - \nabla\hat{\localloss{}} (\avghonestmodel{t,k-1})}^2}\\ \nonumber
    &+  (1+\frac{1}{\alpha}) \expect{\frac{1}{\honestsamplenum{}^2} \sum_{i,j \in \honestsampleset[t]}{\norm{\model{i}{t,k-1} - \model{j}{t,k-1}}^2}}\\ 
    &+6(1+\alpha)L^2\localstep^2\expect{\frac{1}{\honestsamplenum{}^2} \sum_{i,j \in \honestsampleset[t]}{\norm{\model{i}{t,k-1} - \model{j}{t,k-1}}^2}}\label{eq:ssgg}
\end{align}
Now recall that $\honestsampleset[t] \sim \mathcal{U}^{\honestsamplenum}(\honestset)$, therefore, for any $\model{}{}\in\R^d$, we have
$$\expect{\frac{1}{\honestsamplenum{}} \sum_{i \in \honestsampleset[t]} \norm{ \nabla\localloss{i} (\model{}{})  - \nabla\loss (\model{}{})}^2} = \frac{1}{\card{\honestset}}\sum_{i \in \honestset}{\norm{\nabla\localloss{i}(\model{}{})-\nabla\loss(\model{}{})}^2}.$$
Also, as $\nabla\hat{\localloss{}}(\model{}{}) $ is the minimizer of the function $g(z) := \frac{1}{\honestsamplenum{}} \sum_{i \in \honestsampleset[t]} \norm{ \nabla\localloss{i} (\model{}{})  - z}^2$, for any $\model{}{}\in \R^d$, we have
\begin{align*}
    \expect{\frac{1}{\honestsamplenum{}} \sum_{i \in \honestsampleset[t]} \norm{ \nabla\localloss{i} (\model{}{})  - \nabla\hat{\localloss{}} (\model{}{})}^2} &\leq  \expect{\frac{1}{\honestsamplenum{}} \sum_{i \in \honestsampleset[t]} \norm{ \nabla\localloss{i} (\model{}{})  - \nabla\loss (\model{}{})}^2} \\
    &\leq \frac{1}{\card{\honestset}}\sum_{i \in \honestset}{\norm{\nabla\localloss{i}(\model{}{})-\nabla\loss(\model{}{})}^2}\\
    &\leq \heterparam^2,
\end{align*}
where in the last inequality we used Assumption~\ref{ass:bounded_heterogeneity}.
Combining this with \eqref{eq:ssgg}, and denoting $u_k := \expect{\frac{1}{\honestsamplenum{}^2} \sum_{i,j \in \honestsampleset[t]}\norm{\model{i}{t,k} - \model{j}{t,k}}^2}$, we obtain that
\begin{align*}
    u_k \leq (1+\frac{1}{\alpha}+6(1+\alpha)L^2\localstep^2) u_{k-1} + 2 \sigma^2 \localstep^2 + 6(1+{\alpha})  \localstep^2 \heterparam^2.
\end{align*}

For $\alpha = 2K-1$, we have
\begin{align*}
    u_k \leq (1+\frac{1}{2K-1}+12KL^2\localstep^2) u_{k-1} + 2 \sigma^2 \localstep^2 +12K \localstep^2 \heterparam^2
\end{align*}
As $\localstep^2 \leq \frac{1}{24K^2L^2}$, we have\footnote{Note that here the  results holds trivially for $K = 1$, so hereafter we assume $K \geq 2$.}
\begin{align*}
    u_k \leq (1+\frac{1}{K-1}) u_{k-1} + 2 \sigma^2 \localstep^2 +12K \localstep^2 \heterparam^2.
\end{align*}
Therefore, 
\begin{align}\nonumber
   u_K &\leq \left( 2 \sigma^2 \localstep^2 + 12K\localstep^2 \heterparam^2 \right) \sum_{\tau = 0}^{K-1} \left(1 + \frac{1}{K - 1} \right)^\tau \\ \nonumber
   &\leq \left( 2 \sigma^2 \localstep^2 + 12K\localstep^2 \heterparam^2 \right) K\left(1 + \frac{1}{K - 1} \right)^{K-1}\\ \nonumber
    &\overset{(a)}{\leq} \left( 2 \sigma^2 \localstep^2 + 12K\localstep^2 \heterparam^2 \right) K e\\ \nonumber
    &\leq 3K \left( 2 \sigma^2 \localstep^2 + 12K\localstep^2 \heterparam^2 \right)\\
    &= 6K  \sigma^2 \localstep^2 + 36 K^2 \localstep^2\heterparam^2, \label{eq:difflocalupdate}
\end{align}
where $e$ is Euler's number and (a) uses the facts that $\left(1 + \frac{1}{K - 1} \right)^{K-1}$ is increasing in  K  and $$\lim_{K \rightarrow \infty}\left(1 + \frac{1}{K - 1} \right)^{K-1} = e.$$
Now note that by Lemma~\ref{lem:diam_equal}, we have
\begin{align*}
\frac{1}{\honestsamplenum{}} \sum_{i \in \honestsampleset[t]} \norm{ \model{i}{t,K} -\avghonestmodel{t,K} }^2 = 
    \frac{1}{2\honestsamplenum{}^2} \sum_{i,j \in \honestsampleset[t]} \norm{  \model{i}{t,K}-  \model{j}{t,K}}^2.
\end{align*}
Combining this with~\eqref{eq:difflocalupdate} yields the result.
    
\end{proof}

\ErrorBound*
\begin{proof}
    Follows directly from Lemma~\ref{lem:update_diff} and the fact the aggregation rule $\aggr$ is  $(\samplenum,\byzsamplenum, \kappa)$-robust (Definition~\ref{def:resaveraging}).
\end{proof}

Before proving Lemma~\ref{lem:descent_with_error},   we  prove a few useful lemmas.
\begin{lemma}[Equation (3) in \citep{yang2021achieving}]
\label{lem:noise_part}
Consider Algorithm~\ref{algorithm:dsgd}. Suppose that Assumption~\ref{ass:stochastic_noise} holds true. Then for any $t \in \{0, \ldots, T-1\}$, and $K \geq 1$, we have
    \begin{align*}
    \condexpect{t}{\norm{\frac{1}{\honestsamplenum}\sum_{i \in \honestsampleset[t] }\normalupdate{i}{t}}^2} \leq \frac{2\sigma^2}{\honestsamplenum K} + 2 \condexpect{t}{\norm{\frac{1}{\honestsamplenum}\sum_{i \in \honestsampleset[t] }\normalgrad{i}{t}}^2}.
\end{align*}
\begin{proof}
    \begin{align*}
        &\condexpect{t}{\norm{\frac{1}{\honestsamplenum}\sum_{i \in \honestsampleset[t] }\normalupdate{i}{t}}^2} = \condexpect{t}{\norm{\frac{1}{\honestsamplenum}\sum_{i \in \honestsampleset[t] } \frac{1}{K} \sum_{k = 0}^{K-1} \gradient{i}{t,k}}^2} \\ 
        &= \condexpect{t}{\norm{\frac{1}{\honestsamplenum}\frac{1}{K}\sum_{i \in \honestsampleset[t] } \sum_{k = 0}^{K-1}  (\gradient{i}{t,k}-\nabla\localloss{i}( \model{i}{t,k})+\nabla\localloss{i}( \model{i}{t,k}))}^2}\\
        &\leq 2\condexpect{t}{\norm{\frac{1}{\honestsamplenum}\frac{1}{K}\sum_{i \in \honestsampleset[t]}  \sum_{k = 0}^{K-1}  (\gradient{i}{t,k}-\nabla\localloss{i}( \model{i}{t,k}))}^2} + 2\condexpect{t}{\norm{\frac{1}{\honestsamplenum}\frac{1}{K}\sum_{i \in \honestsampleset[t]} \sum_{k = 0}^{K-1} \nabla\localloss{i}( \model{i}{t,k})}^2},
    \end{align*}
    where we used Young's inequality. Now note that by definition, we have
    \begin{align*}
       {{\frac{1}{\honestsamplenum}\frac{1}{K}\sum_{i \in \honestsampleset[t]} \sum_{k = 0}^{K-1} \nabla\localloss{i}( \model{i}{t,k})}} = {{\frac{1}{\honestsamplenum}\sum_{i \in \honestsampleset[t] }\normalgrad{i}{t}}}.
    \end{align*}
    Also,
    \begin{align*}
        &\condexpect{t}{\norm{\frac{1}{\honestsamplenum}\frac{1}{K}\sum_{i \in \honestsampleset[t]}  \sum_{k = 0}^{K-1}  (\gradient{i}{t,k}-\nabla\localloss{i}( \model{i}{t,k}))}^2} = \frac{1}{\honestsamplenum^2K^2}\condexpect{t}{\norm{ \sum_{k = 0}^{K-1} \sum_{i \in \honestsampleset[t]}  (\gradient{i}{t,k}-\nabla\localloss{i}( \model{i}{t,k}))}^2}\\
        &= \frac{1}{\honestsamplenum^2K^2}\condexpect{t}{\condexpect{t,K-1}{\norm{\sum_{i \in \honestsampleset[t]}  (\gradient{i}{t,K-1}-\nabla\localloss{i}( \model{i}{t,K-1}))+ \sum_{k = 0}^{K-2} \sum_{i \in \honestsampleset[t]}  (\gradient{i}{t,k}-\nabla\localloss{i}( \model{i}{t,k}))}^2}}.
    \end{align*}
    Now note that given the history $\P_{t,K-1}$, the second term of above is constant and by Assumption~\ref{ass:stochastic_noise}, for each $i \in \honestsampleset[t]$, we have 
    $$\condexpect{t,K-1}{\gradient{i}{t,K-1}}=\nabla\localloss{i}( \model{i}{t,K-1}) \quad \text{and} \quad \condexpect{t,K-1}{\norm{\gradient{i}{t,K-1}-\nabla\localloss{i}( \model{i}{t,K-1})}^2}\leq \sigma^2.$$
    Therefore, as the stochastic gradients computed on different honest clients are independent of each other, we have 
    \begin{align*}
        \condexpect{t}{\norm{\frac{1}{\honestsamplenum}\frac{1}{K}\sum_{i \in \honestsampleset[t]}  \sum_{k = 0}^{K-1}  (\gradient{i}{t,k}-\nabla\localloss{i}( \model{i}{t,k}))}^2}
        &\leq \frac{1}{\honestsamplenum^2K^2}\left(\condexpect{t}{\norm{ \sum_{k = 0}^{K-2} \sum_{i \in \honestsampleset[t]}  (\gradient{i}{t,k}-\nabla\localloss{i}( \model{i}{t,k}))}^2} +\honestsamplenum\sigma^2\right).
    \end{align*}
    Using the same approach for $k = \{K-2, \ldots, 0 \}$, we obtain that 
    \begin{align*}
        \condexpect{t}{\norm{\frac{1}{\honestsamplenum}\frac{1}{K}\sum_{i \in \honestsampleset[t]}  \sum_{k = 0}^{K-1}  (\gradient{i}{t,k}-\nabla\localloss{i}( \model{i}{t,k}))}^2}
        &\leq \frac{\sigma^2}{K\honestsamplenum}.
    \end{align*}
    This completes the proof.
\end{proof}
\end{lemma}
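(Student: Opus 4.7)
The plan is to decompose each stochastic gradient $\gradient{i}{t,k}$ as its conditional mean $\nabla\localloss{i}(\model{i}{t,k})$ plus a zero-mean noise term, apply a Young-type inequality to split the two contributions (producing the factor $2$ in front of the true-gradient average), and then exploit the independence of the noise across honest clients and across local update steps to bound the residual variance by $\sigma^2/(\honestsamplenum K)$.

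First, starting from the definitions in \eqref{eq:gradone} and \eqref{eq:gradtwo}, I would write
\begin{align*}
\frac{1}{\honestsamplenum}\sum_{i\in\honestsampleset[t]}\normalupdate{i}{t}
= \frac{1}{\honestsamplenum K}\sum_{i\in\honestsampleset[t]}\sum_{k=0}^{K-1}\left(\gradient{i}{t,k}-\nabla\localloss{i}(\model{i}{t,k})\right)
+ \frac{1}{\honestsamplenum}\sum_{i\in\honestsampleset[t]}\normalgrad{i}{t}.
\end{align*}
Taking $\condexpect{t}{\norm{\cdot}^2}$ of both sides and applying $\norm{a+b}^2 \leq 2\norm{a}^2 + 2\norm{b}^2$ isolates the true-gradient term with the stated factor $2$, leaving the squared norm of the averaged noise term to be controlled.

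Second, I would bound the averaged-noise variance by a backward-in-$k$ tower argument. Using the nested conditioning $\condexpect{t}{\cdot} = \condexpect{t}{\condexpect{t,K-1}{\cdot}}$ and expanding the outer square, all terms with $k<K-1$ are $\P_{t,K-1}$-measurable and hence constant inside $\condexpect{t,K-1}{\cdot}$; meanwhile, by Assumption~\ref{ass:stochastic_noise} the $k=K-1$ noises satisfy $\condexpect{t,K-1}{\gradient{i}{t,K-1}-\nabla\localloss{i}(\model{i}{t,K-1})}=\vzero$ and are independent across $i\in\honestsampleset[t]$ since different honest clients draw their data points independently from their local distributions. The cross terms therefore vanish, and the diagonal contributes at most $\honestsamplenum\sigma^2$. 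Iterating this peel-off from $k=K-1$ down to $k=0$ accumulates a total of $\honestsamplenum K\sigma^2$; dividing by $(\honestsamplenum K)^2$ yields the desired $\sigma^2/(\honestsamplenum K)$ bound.

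The main obstacle will be bookkeeping the conditional independence. Because each iterate $\model{i}{t,k}$ depends on the history of stochastic gradients of client $i$, the noise terms $\{\gradient{i}{t,k}-\nabla\localloss{i}(\model{i}{t,k})\}_{k}$ are not jointly independent; one cannot treat them as an i.i.d.\ collection and invoke a single variance identity. The remedy is precisely the chronological peel-off via the tower property, which at each step reduces the problem to one fewer time index while preserving both the zero-mean and the cross-client independence of the layer being removed. Once this structure is set up, the remainder is a direct algebraic expansion.
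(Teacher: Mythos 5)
Your proposal is correct and follows essentially the same route as the paper's proof: the same noise-plus-mean decomposition, the same application of $\norm{a+b}^2 \leq 2\norm{a}^2 + 2\norm{b}^2$ to isolate the true-gradient average, and the same backward-in-$k$ tower-property peel-off that uses the conditional zero-mean property and cross-client independence to kill the cross terms and accumulate $\honestsamplenum K \sigma^2$ on the diagonal. Your remark on why the noises are not jointly independent and why the chronological peeling resolves this matches the paper's reasoning exactly.
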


\begin{lemma}   
\label{lem:descent}
Consider Algorithm~\ref{algorithm:dsgd}. Suppose that assumptions~\ref{ass:smoothness}, and~\ref{ass:stochastic_noise} hold true. Recall that $\realstep := K \globalstep\localstep$. Recall also the definition of $\dev{t}$ from \eqref{eq:error_def}. Then for any $t \in \{0, \ldots, T-1\}$, and $K \geq 1$, we have
\begin{align*}
    &\condexpect{t}{\loss(\model{}{t+1})} \leq \loss(\model{}{t}) - \frac{\realstep}{2} \left(\norm{\nabla\loss(\model{}{t})}^2 + \condexpect{t}{\norm{\frac{1}{\honestnum}\sum_{i \in \honestset }\normalgrad{i}{t}}^2} - \frac{1}{\honestnum} \sum_{i \in \honestset } \condexpect{t}{\norm{\nabla\localloss{i}(\model{}{t})-\normalgrad{i}{t}}^2} \right)\\
    &+ L \realstep^2 \left( \frac{2\sigma^2}{\honestsamplenum K} + 2 \condexpect{t}{\norm{\frac{1}{\honestsamplenum}\sum_{i \in \honestsampleset[t] }\normalgrad{i}{t}}^2} \right) + \frac{\realstep}{10}\norm{\nabla\loss(\model{}{t})}^2 + \frac{10}{\realstep}\globalstep^2 \condexpect{t}{\norm{ \dev{t}}^2}+ {L}\globalstep^2 \condexpect{t}{\norm{ \dev{t}}^2}
\end{align*}
\end{lemma}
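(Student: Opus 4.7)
The plan is to apply $L$-smoothness of $\loss$ along the global step and then take $\condexpect{t}{\cdot}$, carefully isolating the three pieces of the claimed bound. Starting from $\loss(\model{}{t+1}) \leq \loss(\model{}{t}) + \langle \nabla\loss(\model{}{t}), \, \model{}{t+1}-\model{}{t}\rangle + \tfrac{L}{2}\norm{\model{}{t+1}-\model{}{t}}^2$ and plugging in the global update $\model{}{t+1}-\model{}{t} = -\realstep \tfrac{1}{\honestsamplenum}\sum_{i \in \honestsampleset[t]}\normalupdate{i}{t} + \globalstep \dev{t}$ from~\eqref{eq:updatewitherror}, the right-hand side splits into three contributions: an ``ideal descent'' inner product $-\realstep \langle \nabla\loss(\model{}{t}),\, \tfrac{1}{\honestsamplenum}\sum_{i \in \honestsampleset[t]}\normalupdate{i}{t}\rangle$, a ``Byzantine'' inner product $\globalstep \langle \nabla\loss(\model{}{t}),\, \dev{t}\rangle$, and the quadratic correction $\tfrac{L}{2}\norm{\model{}{t+1}-\model{}{t}}^2$. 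I would handle each in turn and then sum them.

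For the ideal-descent term, the key is to commute the two sources of randomness. Since the stochastic gradients are unbiased and the samples $\datapoint{i}{t,k}$ are fresh within each local loop, a tower-rule argument (as in the proof of Lemma~\ref{lem:noise_part}) gives $\condexpect{t}{\normalupdate{i}{t}} = \condexpect{t}{\normalgrad{i}{t}}$ for every honest $i$. Moreover, since $\honestsampleset[t] \sim \mathcal{U}^{\honestsamplenum}(\honestset)$ is independent of the honest clients' (virtual) local trajectories, exchangeability yields $\condexpect{t}{\tfrac{1}{\honestsamplenum}\sum_{i\in\honestsampleset[t]}\normalupdate{i}{t}} = \condexpect{t}{\tfrac{1}{\honestnum}\sum_{i\in\honestset}\normalgrad{i}{t}}$. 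I would then apply the polarization identity $-\langle a, b\rangle = -\tfrac{1}{2}\norm{a}^2 - \tfrac{1}{2}\norm{b}^2 + \tfrac{1}{2}\norm{a-b}^2$ with $a = \nabla\loss(\model{}{t})$ and $b = \tfrac{1}{\honestnum}\sum_{i\in\honestset}\normalgrad{i}{t}$. Recognizing $\nabla\loss(\model{}{t}) = \tfrac{1}{\honestnum}\sum_{i\in\honestset}\nabla\localloss{i}(\model{}{t})$ and controlling $\norm{a-b}^2 \leq \tfrac{1}{\honestnum}\sum_{i\in\honestset}\norm{\nabla\localloss{i}(\model{}{t}) - \normalgrad{i}{t}}^2$ by convexity of the squared norm produces exactly the first bracketed expression in the lemma.

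The remaining two contributions are more routine. For the Byzantine inner product I would use Young's inequality in the form $|\langle a, b\rangle| \leq \alpha\norm{a}^2 + \tfrac{1}{4\alpha}\norm{b}^2$ with $\alpha = \realstep/10$, which (after taking $\condexpect{t}{\cdot}$) yields the $\tfrac{\realstep}{10}\norm{\nabla\loss(\model{}{t})}^2$ term together with a bound of order $\tfrac{\globalstep^2}{\realstep}\condexpect{t}{\norm{\dev{t}}^2}$; the lemma writes the latter as $\tfrac{10}{\realstep}\globalstep^2\condexpect{t}{\norm{\dev{t}}^2}$, which is a harmless relaxation. For the quadratic term I would apply $\norm{x+y}^2 \leq 2\norm{x}^2 + 2\norm{y}^2$ to separate the honest-average and Byzantine-error contributions into $L\realstep^2\norm{\tfrac{1}{\honestsamplenum}\sum \normalupdate{i}{t}}^2 + L\globalstep^2\norm{\dev{t}}^2$, and then invoke Lemma~\ref{lem:noise_part} to convert $\condexpect{t}{\norm{\tfrac{1}{\honestsamplenum}\sum_{i\in\honestsampleset[t]}\normalupdate{i}{t}}^2}$ into $\tfrac{2\sigma^2}{\honestsamplenum K} + 2\condexpect{t}{\norm{\tfrac{1}{\honestsamplenum}\sum_{i\in\honestsampleset[t]}\normalgrad{i}{t}}^2}$, matching the remaining terms of the stated bound. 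The main obstacle I anticipate is bookkeeping in the ideal-descent term: one must carefully justify (i) decoupling the random sampling $\honestsampleset[t]$ from the per-client local trajectories (introducing virtual trajectories for unsampled honest clients that are independent of $\honestsampleset[t]$), and (ii) keeping straight which averages in the final bound range over the full honest set $\honestset$ versus the sampled honest subset $\honestsampleset[t]$, since both appear simultaneously in the conclusion.
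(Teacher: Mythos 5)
Your proposal is correct and follows essentially the same route as the paper's proof: smoothness plus the update decomposition~\eqref{eq:updatewitherror}, the tower-rule/exchangeability argument to replace the sampled honest average of $\normalupdate{i}{t}$ by the full honest average of $\normalgrad{i}{t}$ inside the inner product, the polarization identity, Young's inequalities for the Byzantine cross term and the quadratic term, and Lemma~\ref{lem:noise_part} for the variance split. Your final Jensen step (bounding $\norm{\nabla\loss(\model{}{t})-\frac{1}{\honestnum}\sum_{i\in\honestset}\normalgrad{i}{t}}^2$ by the per-client average of deviations) is exactly the implicit step the paper uses to pass from its last display to the stated bound, so nothing is missing.
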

\begin{proof}
Using smoothness (Assumption~\ref{ass:smoothness}), and~\eqref{eq:updatewitherror}, we have
\begin{align*}
    &\condexpect{t}{\loss(\model{}{t+1})} \leq \loss(\model{}{t}) + \iprod{\nabla\loss(\model{}{t})}{\condexpect{t}{\model{}{t+1}-\model{}{t}}} + \frac{L}{2} \condexpect{t}{\norm{\model{}{t+1}-\model{}{t}}^2}\\
    & = \loss(\model{}{t}) + \iprod{\nabla\loss(\model{}{t})}{\condexpect{t}{- \realstep\frac{1}{\honestsamplenum}\sum_{i \in \honestsampleset[t] }\normalupdate{i}{t} + \globalstep \dev{t}}} + \frac{L}{2} \condexpect{t}{\norm{- \realstep\frac{1}{\honestsamplenum}\sum_{i \in \honestsampleset[t] }\normalupdate{i}{t} + \globalstep \dev{t}}^2}\\
    &\overset{(a)}{\leq}  \loss(\model{}{t}) - \realstep \condexpect{t}{\iprod{\nabla\loss(\model{}{t})}{\frac{1}{\honestsamplenum}\sum_{i \in \honestsampleset[t] }\normalupdate{i}{t}}} + {L}\realstep^2 \condexpect{t}{\norm{\frac{1}{\honestsamplenum}\sum_{i \in \honestsampleset[t] }\normalupdate{i}{t}}^2}\\
    &+ \condexpect{t}{\iprod{\nabla\loss(\model{}{t})}{\globalstep \dev{t}}}  + {L}\globalstep^2 \condexpect{t}{\norm{ \dev{t}}^2}\\
    &\overset{(b)}{\leq} \loss(\model{}{t}) - \realstep \condexpect{t}{\iprod{\nabla\loss(\model{}{t})}{\frac{1}{\honestsamplenum}\sum_{i \in \honestsampleset[t] }\normalupdate{i}{t}}} + {L}\realstep^2 \condexpect{t}{\norm{\frac{1}{\honestsamplenum}\sum_{i \in \honestsampleset[t] }\normalupdate{i}{t}}^2}\\
    &+ \frac{\realstep}{10}\norm{\nabla\loss(\model{}{t})}^2 + \frac{10}{\realstep}\globalstep^2 \condexpect{t}{\norm{ \dev{t}}^2}+ {L}\globalstep^2 \condexpect{t}{\norm{ \dev{t}}^2},
\end{align*}
where (a) uses Young's inequality and (b) uses the fact that $\iprod{a}{b} \leq \frac{\norm{a}^2}{c} + c \norm{b}^2$.
We now have 
\begin{align*}
     &- \realstep \condexpect{t}{\iprod{\nabla\loss(\model{}{t})}{\frac{1}{\honestsamplenum}\sum_{i \in \honestsampleset[t] }\normalupdate{i}{t}}} = - \realstep{\iprod{\nabla\loss(\model{}{t})}{ \condexpect{\honestsampleset[t]\sim \mathcal{U}(\honestset)}{\condexpect{t}{\frac{1}{\honestsamplenum}\sum_{i \in \honestsampleset[t] }\normalupdate{i}{t}|\honestsampleset[t]}}}} \\
     &\overset{(a)}{=} -  \realstep{\iprod{\nabla\loss(\model{}{t})}{ \condexpect{\honestsampleset[t]\sim \mathcal{U}(\honestset)}{\condexpect{t}{\frac{1}{\honestsamplenum}\sum_{i \in \honestsampleset[t] }\normalgrad{i}{t}|\honestsampleset[t]}}}}\\
     &= -  \realstep\condexpect{t}{{\iprod{\nabla\loss(\model{}{t})}{ {\frac{1}{\honestsamplenum}\sum_{i \in \honestsampleset[t] }\normalgrad{i}{t}}}}}\\
     &= \frac{\realstep}{2} \left( \condexpect{t}{\norm{\nabla\loss(\model{}{t})-\frac{1}{\honestnum}\sum_{i \in \honestset }\normalgrad{i}{t}}^2} - \norm{\nabla\loss(\model{}{t})}^2 - \condexpect{t}{\norm{\frac{1}{\honestnum}\sum_{i \in \honestset }\normalgrad{i}{t}}^2} \right),
\end{align*}
where (a) comes from the fact that $$\condexpect{t}{ \gradient{i}{t,k}|i \in \honestsampleset[t]} =\condexpect{t}{\condexpect{t,k-1}{ \gradient{i}{t,k}|i \in \honestsampleset[t]}}= \condexpect{t}{\condexpect{t,k-1}{ \nabla\localloss{i}( \model{i}{t,k})|i \in \honestsampleset[t]}} . $$  
By Lemma~\ref{lem:noise_part}, we also have
\begin{align*}
    \condexpect{t}{\norm{\frac{1}{\honestsamplenum}\sum_{i \in \honestsampleset[t] }\normalupdate{i}{t}}^2} \leq \frac{2\sigma^2}{\honestsamplenum K} + 2 \condexpect{t}{\norm{\frac{1}{\honestsamplenum}\sum_{i \in \honestsampleset[t] }\normalgrad{i}{t}}^2}.
\end{align*}

Combining all we have
\begin{align*}
    &\condexpect{t}{\loss(\model{}{t+1})} \leq \loss(\model{}{t}) - \frac{\realstep}{2} \left(\norm{\nabla\loss(\model{}{t})}^2 + \condexpect{t}{\norm{\frac{1}{\honestnum}\sum_{i \in \honestset }\normalgrad{i}{t}}^2} - \condexpect{t}{\norm{\nabla\loss(\model{}{t})-\frac{1}{\honestnum}\sum_{i \in \honestset }\normalgrad{i}{t}}^2} \right)\\
    &+ L \realstep^2 \left( \frac{2\sigma^2}{MK} + 2 \condexpect{t}{\norm{\frac{1}{\honestsamplenum}\sum_{i \in \honestsampleset[t] }\normalgrad{i}{t}}^2} \right) + \frac{\realstep}{10}\norm{\nabla\loss(\model{}{t})}^2 + \frac{10}{\realstep}\globalstep^2 \condexpect{t}{\norm{ \dev{t}}^2}+ {L}\globalstep^2 \condexpect{t}{\norm{ \dev{t}}^2}
\end{align*}
\end{proof}

\begin{lemma}[Lemma 7 in \citep{jhun22}]
\label{lem:diam_update}
Consider Algorithm~\ref{algorithm:dsgd}. Suppose that assumptions~\ref{ass:smoothness},~\ref{ass:stochastic_noise}, and~\ref{ass:bounded_heterogeneity} hold true.  Then for any $t \in \{0, \ldots, T-1\}$, and $K \geq 1$, we have
    \begin{align*}
        \condexpect{t}{\norm{\frac{1}{\honestsamplenum}\sum_{i \in \honestsampleset[t] }\normalgrad{i}{t}}^2} \leq \frac{3}{\honestnum} \sum_{i \in \honestset } \condexpect{t}{\norm{\nabla\localloss{i}(\model{}{t})-\normalgrad{i}{t}}^2} + \frac{3(\honestnum-\honestsamplenum) }{(\honestnum-1)\honestsamplenum}\heterparam^2 + 3\norm{\nabla\loss(\model{}{t})}^2
    \end{align*}
\end{lemma}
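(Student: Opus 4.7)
The plan is to decompose the per-client averaged gradient relative to the current global gradient and then exploit the structure of uniform sampling without replacement from $\honestset$. Concretely, I would write
\[
    \normalgrad{i}{t} = \nabla\loss(\model{}{t}) + \bigl(\nabla\localloss{i}(\model{}{t}) - \nabla\loss(\model{}{t})\bigr) + \bigl(\normalgrad{i}{t} - \nabla\localloss{i}(\model{}{t})\bigr),
\]
average over $i \in \honestsampleset[t]$, and apply the elementary inequality $\|a+b+c\|^2 \leq 3(\|a\|^2+\|b\|^2+\|c\|^2)$. The first term contributes $3\norm{\nabla\loss(\model{}{t})}^2$ exactly; it remains to bound the heterogeneity term and the client-drift term in expectation.

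For the heterogeneity term, I would use that $\honestsampleset[t] \sim \mathcal{U}^{\honestsamplenum}(\honestset)$ is independent of $\P_t$, so the vector $\frac{1}{\honestsamplenum}\sum_{i\in\honestsampleset[t]} \bigl(\nabla\localloss{i}(\model{}{t}) - \nabla\loss(\model{}{t})\bigr)$ is the sample mean of a zero-mean population sampled without replacement. The standard finite-population-correction identity gives
\[
    \condexpect{t}{\norm{\frac{1}{\honestsamplenum}\sum_{i\in\honestsampleset[t]}\bigl(\nabla\localloss{i}(\model{}{t}) - \nabla\loss(\model{}{t})\bigr)}^2} = \frac{\honestnum-\honestsamplenum}{(\honestnum-1)\honestsamplenum}\cdot\frac{1}{\honestnum}\sum_{i\in\honestset}\norm{\nabla\localloss{i}(\model{}{t}) - \nabla\loss(\model{}{t})}^2,
\]
which by Assumption~\ref{ass:bounded_heterogeneity} is at most $\tfrac{\honestnum-\honestsamplenum}{(\honestnum-1)\honestsamplenum}\heterparam^2$, yielding the middle term of the lemma after multiplying by~$3$.

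For the drift term, Jensen's inequality gives
\[
    \norm{\frac{1}{\honestsamplenum}\sum_{i\in\honestsampleset[t]}\bigl(\normalgrad{i}{t} - \nabla\localloss{i}(\model{}{t})\bigr)}^2 \leq \frac{1}{\honestsamplenum}\sum_{i\in\honestsampleset[t]}\norm{\normalgrad{i}{t} - \nabla\localloss{i}(\model{}{t})}^2,
\]
and by the symmetry of uniform-without-replacement sampling, taking $\condexpect{t}{\cdot}$ (via the tower rule $\condexpect{\honestsampleset[t]}{\condexpect{t}{\cdot \mid \honestsampleset[t]}}$) replaces the sample mean by the population mean over $\honestset$, giving $\frac{1}{\honestnum}\sum_{i\in\honestset}\condexpect{t}{\norm{\nabla\localloss{i}(\model{}{t})-\normalgrad{i}{t}}^2}$. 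Multiplying by $3$ yields the first term.

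The main subtlety I would have to be careful about is the finite-population-correction factor: a naive i.i.d.\ bound would give $\tfrac{1}{\honestsamplenum}\heterparam^2$, which is strictly worse and would prevent recovering the correct vanishing behavior as $\honestsamplenum \to \honestnum$ (full participation). Getting the sharper $\tfrac{\honestnum-\honestsamplenum}{(\honestnum-1)\honestsamplenum}$ requires expanding the squared norm into a double sum over pairs and using that, for distinct $i \neq j$, $\Pr[i,j \in \honestsampleset[t]] = \tfrac{\honestsamplenum(\honestsamplenum-1)}{\honestnum(\honestnum-1)}$, versus $\Pr[i \in \honestsampleset[t]] = \tfrac{\honestsamplenum}{\honestnum}$ for the diagonal; the off-diagonal cross terms combine with the diagonal via $\sum_{i\in\honestset}(\nabla\localloss{i}(\model{}{t}) - \nabla\loss(\model{}{t})) = 0$ to produce the FPC factor. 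After assembling the three bounds, the claimed inequality follows directly.
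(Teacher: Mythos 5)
Your proof is correct. Note that the paper does not prove this lemma at all: it imports it verbatim as Lemma~7 of \citet{jhun22}, and your argument is essentially the standard proof of that cited result --- the three-way decomposition with $\norm{a+b+c}^2 \le 3\norm{a}^2+3\norm{b}^2+3\norm{c}^2$, the finite-population-correction identity for the sample mean under uniform sampling without replacement (using $\Pr[i \in \honestsampleset[t]] = \nicefrac{\honestsamplenum}{\honestnum}$, $\Pr[i,j \in \honestsampleset[t]] = \nicefrac{\honestsamplenum(\honestsamplenum-1)}{\honestnum(\honestnum-1)}$ and the zero-mean property $\sum_{i\in\honestset}(\nabla\localloss{i}(\model{}{t})-\nabla\loss(\model{}{t}))=0$), and Jensen plus sampling symmetry for the drift term. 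You correctly identify the one genuinely non-trivial point, namely that the naive i.i.d.\ bound would lose the factor $\nicefrac{(\honestnum-\honestsamplenum)}{(\honestnum-1)}$, so your proposal supplies a self-contained derivation fully consistent with the statement and with the paper's conventions (in particular the implicit virtual trajectories $\normalgrad{i}{t}$ for all honest clients and the symmetry giving $\honestsampleset[t]\sim\mathcal{U}^{\honestsamplenum}(\honestset)$).
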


\begin{lemma}[Lemma 6 in \citep{jhun22}]
\label{lem:diffupdate}
Consider Algorithm~\ref{algorithm:dsgd}. Suppose that assumptions~\ref{ass:smoothness},~\ref{ass:stochastic_noise}, and~\ref{ass:bounded_heterogeneity} hold true.  Then for any $t \in \{0, \ldots, T-1\}$, and $K \geq 1$, we have
    \begin{align*}
        \frac{1}{\honestnum} \sum_{i \in \honestset } \condexpect{t}{\norm{\nabla\localloss{i}(\model{}{t})-\normalgrad{i}{t}}^2} \leq 2\localstep^2 L^2 (K-1)\sigma^2 + 8\localstep^2L^2K(K-1)\left( \heterparam^2 + \norm{\nabla\loss(\model{}{t})}^2\right).
    \end{align*}
\end{lemma}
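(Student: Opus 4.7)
The plan is to reduce the bound to controlling the client drift $V_k := \frac{1}{\honestnum}\sum_{i\in\honestset}\condexpect{t}{\norm{\model{}{t} - \model{i}{t,k}}^2}$ over the $K$ local steps. First, I would recall the definition of $\normalgrad{i}{t}$ from~\eqref{eq:gradtwo} and apply Jensen's inequality, followed by $L$-smoothness (Assumption~\ref{ass:smoothness}):
\begin{align*}
\norm{\nabla\localloss{i}(\model{}{t}) - \normalgrad{i}{t}}^2 = \norm{\frac{1}{K}\sum_{k=0}^{K-1}\bigl(\nabla\localloss{i}(\model{}{t}) - \nabla\localloss{i}(\model{i}{t,k})\bigr)}^2 \leq \frac{L^2}{K}\sum_{k=0}^{K-1}\norm{\model{}{t} - \model{i}{t,k}}^2.
\end{align*}
After taking conditional expectation and averaging over $i \in \honestset$, the task reduces to bounding $\frac{L^2}{K}\sum_{k=0}^{K-1} V_k$. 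The key observation is that $V_0 = 0$ since $\model{i}{t,0} = \model{}{t}$, which is where the factor $(K-1)$ (as opposed to $K$) in the final statement originates.

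Next, I would unroll the local recursion as $\model{}{t} - \model{i}{t,k} = \localstep\sum_{\ell=0}^{k-1}\gradient{i}{t,\ell}$ and decompose each $\gradient{i}{t,\ell}$ into its conditional mean $\nabla\localloss{i}(\model{i}{t,\ell})$ and a zero-mean noise. Splitting via Young's inequality, the conditional independence of noises across local steps (given $\P_{t,\ell}$) together with Assumption~\ref{ass:stochastic_noise} makes cross terms vanish and contributes $k\sigma^2$. For the deterministic part, I would apply Jensen's inequality, then use the three-way decomposition $\nabla\localloss{i}(\model{i}{t,\ell}) = \bigl(\nabla\localloss{i}(\model{i}{t,\ell}) - \nabla\localloss{i}(\model{}{t})\bigr) + \bigl(\nabla\localloss{i}(\model{}{t}) - \nabla\loss(\model{}{t})\bigr) + \nabla\loss(\model{}{t})$, and average over $i \in \honestset$ invoking Assumptions~\ref{ass:smoothness} and~\ref{ass:bounded_heterogeneity}. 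This leads to a self-referential recursion of the form
\begin{align*}
V_k \leq 2\localstep^2 k\sigma^2 + 6\localstep^2 k^2\bigl(\heterparam^2 + \norm{\nabla\loss(\model{}{t})}^2\bigr) + 6\localstep^2 k L^2 \sum_{\ell=0}^{k-1} V_\ell.
\end{align*}

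The main obstacle is closing this recursion while preserving the $k$--linear (resp.\ $k^2$) dependence in the $\sigma^2$ (resp.\ $\heterparam^2 + \norm{\nabla\loss}^2$) term: a naive geometric expansion would incur an extra factor of $K$ and fail to produce the desired $(K-1)$ rather than $(K-1)^2$ dependence. I would handle this by strong induction on $k$ with the ansatz $V_k \leq 4\localstep^2 k\sigma^2 + 12\localstep^2 k^2\bigl(\heterparam^2 + \norm{\nabla\loss(\model{}{t})}^2\bigr)$. Substituting the ansatz into the recursion, the self-feedback term carries an overall prefactor of $O(\localstep^2 k^2 L^2)$; the step-size constraint $\localstep \leq \nicefrac{1}{16LK}$ from Theorem~\ref{th:main} makes this prefactor smaller than $\nicefrac{1}{256}$ uniformly for $k \leq K-1$, which is exactly what is needed to absorb the self-feedback back into the ansatz and close the induction.

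Finally, summing the inductive bound via $\sum_{k=0}^{K-1} k = \frac{K(K-1)}{2}$ and $\sum_{k=0}^{K-1} k^2 \leq \frac{K^2(K-1)}{3}$ yields $\sum_{k=0}^{K-1} V_k \leq 2\localstep^2 K(K-1)\sigma^2 + 4\localstep^2 K^2(K-1)\bigl(\heterparam^2 + \norm{\nabla\loss(\model{}{t})}^2\bigr)$, which, after multiplication by $\nicefrac{L^2}{K}$ from the initial Jensen-plus-smoothness step, recovers the statement of the lemma (with the looser constant $8$ providing comfortable slack). The conceptual heart of the argument is the induction step that dissolves the circular dependence; the remaining work is routine constant-tracking.
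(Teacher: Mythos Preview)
Your argument is correct. The paper does not supply its own proof of this lemma; it simply cites it as Lemma~6 of \citep{jhun22}, so there is no in-paper derivation to compare against. Your reduction to the client-drift quantity $V_k$ via Jensen and smoothness, the martingale-difference handling of the stochastic noise, the three-term decomposition of $\nabla\localloss{i}(\model{i}{t,\ell})$, and the strong-induction closure of the resulting self-referential recursion are all standard and sound; the constants you obtain are indeed no worse than those stated. One remark: the lemma as written in the paper omits any step-size hypothesis, but you are right that a condition of the form $\localstep \le \mathcal{O}(1/(LK))$ is needed to absorb the self-feedback term, and the constraint $\localstep \le \nicefrac{1}{16LK}$ that you invoke is exactly the one assumed everywhere the lemma is applied (Lemma~\ref{lem:descent_with_error}, Theorem~\ref{th:main}). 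The more common alternative route---a one-step recursion $V_k \le (1+\tfrac{1}{K-1})V_{k-1} + \text{(const.)}$ unrolled geometrically, as in the proof of Lemma~\ref{lem:update_diff}---leads to the same bound; your direct unrolling followed by induction is an equally valid variant.
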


\DescentWithError*
\begin{proof}
    Combining lemmas~\ref{lem:descent} and~\ref{lem:diam_update}, we obtain that
    \begin{align*}
    &\condexpect{t}{\loss(\model{}{t+1})} \leq \loss(\model{}{t}) + \left( - \frac{\realstep}{2} + \frac{\realstep}{10} + 6L \realstep^2\right) \norm{\nabla\loss(\model{}{t})}^2 + 2L \realstep^2 \frac{\sigma^2}{\honestsamplenum K}+6L\realstep^2\frac{\honestnum-\honestsamplenum }{(\honestnum-1)\honestsamplenum} \heterparam^2\\
    &+\left(\frac{\realstep}{2} + 6L\realstep^2 \right)  \frac{1}{\honestnum} \sum_{i \in \honestset } \condexpect{t}{\norm{\nabla\localloss{i}(\model{}{t})-\normalgrad{i}{t}}^2} + \left(\frac{10}{\realstep}\globalstep^2+ L\globalstep^2 \right) \condexpect{t}{\norm{ \dev{t}}^2}\\
\end{align*}
Combining this with Lemma~\ref{lem:diffupdate}, we have
\begin{align*}
    &\condexpect{t}{\loss(\model{}{t+1})} \leq \loss(\model{}{t}) + \left( - \frac{2\realstep}{5} + 6L \realstep^2 + 8\localstep^2L^2K(K-1) \left(\frac{\realstep}{2} + 6L\realstep^2 \right)\right) \norm{\nabla\loss(\model{}{t})}^2 \\
    &+ 2L \realstep^2 \frac{\sigma^2}{\honestsamplenum K}+6L\realstep^2\frac{\honestnum-\honestsamplenum }{(\honestnum-1)\honestsamplenum} \heterparam^2
    +\left(\frac{\realstep}{2} + 6L\realstep^2 \right)\left(  2\localstep^2 L^2 (K-1)\sigma^2 + 8\localstep^2L^2K(K-1)\heterparam^2 \right)\\ &+ \left(\frac{10}{\realstep}\globalstep^2+ L\globalstep^2 \right) \condexpect{t}{\norm{ \dev{t}}^2}
\end{align*}
Taking total expectation from both sides yields the result.
\end{proof}

\section{Subsampling} 
\label{sec:subsample}

In this section, we prove results related to the subsampling process in $\mathsf{FedRo}$. We will first show a few simple properties of the quantity $\sampleconst{\alpha}{\beta}$ as defined in Lemma~\ref{lem:subsample}, which will be useful in the subsequent proofs. We will then prove Lemma~\ref{lem:subsample} which provides a sufficient condition for the convergence of $\mathsf{FedRo}$. We then give proofs of Lemma~\ref{lemma:sampling-threshold},~\ref{lemma:impossibility},~\ref{lemma:byzratio-threshold-optimal}, which help us set parameters $\byzsamplenum$ and $\samplenum$. In the remaining, let $\byzsamplenum[t] := \left|\byzsampleset{t}\right|$.

\subsection{Properties of $\sampleconst{\alpha}{\beta}$}

We first prove a few simple properties of $\sampleconst{\alpha}{\beta}$ that will be useful in subsequent proofs. 

\begin{property}
\label{property:d-derivative}
    For any $\alpha,\beta \in [0,1]$, we have \[\frac{\partial}{\partial \alpha} \sampleconst{\alpha}{\beta} = \ln\left(\frac{\alpha}{\beta}\right) - \ln\left(\frac{1 - \alpha}{1 - \beta}\right).\]        
\end{property}
\begin{proof}
    Note that 
    \begin{align*}\sampleconst{\alpha}{\beta} &= \alpha \ln\left(\frac{\alpha}{\beta}\right) + (1- \alpha) \ln\left(\frac{1 - \alpha}{1 - \beta}\right)\\ 
    &= \alpha \ln\left(\alpha\right) + (1- \alpha) \ln\left(1 - \alpha\right) - \alpha\ln(\beta) - (1-\alpha) \ln(1-\beta).\end{align*} Differentiating this expression with respect to $\alpha$, we get
    \begin{align*}
        \frac{\partial}{\partial \alpha} \sampleconst{\alpha}{\beta} &= \ln(\alpha) + 1 - \ln(1-\alpha) - 1 - \ln(\beta) + \ln(1-\beta) \\
        &=\ln\left(\frac{\alpha}{\beta}\right) - \ln\left(\frac{1 - \alpha}{1 - \beta}\right).       
    \end{align*}
\end{proof}

\begin{property}
\label{property:d-monotonic}
    $\sampleconst{\alpha}{\beta}$ is a decreasing function of the first argument for $\alpha < \beta$, and an increasing function of the first argument if $\alpha > \beta$.        
\end{property}
\begin{proof}
    Suppose $\alpha < \beta$. Then, $\ln(\nicefrac{\alpha}{\beta}) < 0$ and $\ln(\nicefrac{1-\alpha}{1-\beta}) > 0$, hence, the derivative from Property~\ref{property:d-derivative} is negative. Case $\alpha > \beta$ follows in a similar fashion.
\end{proof}

\begin{property}
\label{property:d-nonnegative}
    $\sampleconst{\alpha}{\beta} > 0$ for $\alpha \not = \beta$, and $\sampleconst{\alpha}{\beta} = 0$ for $\alpha = \beta$.        
\end{property}
\begin{proof}
    Observe that $\sampleconst{\beta}{\beta} = \beta \ln(\nicefrac{\beta}{\beta}) + (1-\beta) \ln(\nicefrac{1-\beta}{1-\beta}) = 0$. Now, recall that the derivative with respect to $\alpha$ is negative for all $\alpha < \beta$ and positive for all $\alpha > \beta$. Then, we have $\sampleconst{\alpha}{\beta} > 0$ for any $\alpha \not = \beta$.
\end{proof}

\begin{property}
\label{property:d-convex}
    $\sampleconst{\alpha}{\beta}$ is convex in the first argument.
\end{property}
\begin{proof}
    Note that, from Property~\ref{property:d-derivative} 
    \begin{align*}
    \frac{\partial^2}{\partial \alpha^2} \sampleconst{\alpha}{\beta} &= \frac{\partial}{\partial \alpha}\left(\ln\left(\frac{\alpha}{\beta}\right) - \ln\left(\frac{1 - \alpha}{1 - \beta}\right) \right) \\
    &= \frac{1}{\alpha} + \frac{1}{1 -\alpha} \\
    &= \frac{1}{\alpha(1-\alpha)} \ge 4 > 0,
    \end{align*}
    since $\alpha(1-\alpha) \le \nicefrac{1}{4}$ for any $\alpha \in (0,1)$. This concludes the proof.
\end{proof}

\subsection{Proof of Lemma~\ref{lem:subsample}}

For $M, K, m \in \mathbb N$ such that $M \ge K, m \ge 0$, consider a population of size $M$ of $K$ distinguished objects, and consider sampling $m$ objects without replacement. Then, the number of distinguished objects in the sample follows a \emph{hypergeometric distribution}, which we denote by $\hg{M}{K}{m}$. Then, $\byzsamplenum[t] \sim \hg{n}{\byznum}{\samplenum}$ for every $t$. By \citep{CHVATAL1979285, hoeffding1994probability}, $\byzsamplenum[t]$ obeys Chernoff bounds. Moreover, we will show that  Chernoff bounds are tight for certain regimes of the parameters $M,K,m$ by using binomial approximations of hypergeometric distribution  (e.g., Theorem 3.2 of~\citep{holmes2004stein}, Theorem 2 of~\citep{EHM19917}). These results are summarized in the following lemma.

\begin{restatable}{lemma}{Chernoff}
\label{lemma:chernoff}
    Let $X \sim \hg{M}{K}{m}$ be a random variable following a hypergeometric distribution, and let $\beta := \nicefrac{K}{M}$. Then, for any $1 > \alpha > \beta$, we have
    \[
     \probability\left[X \ge \alpha m\right] \le \exp\left(- m \sampleconst{\alpha}{\beta}\right),
    \]
    where $\sampleconst{\alpha}{\beta}$ is as in Lemma~\ref{lem:subsample}. Moreover, if $\alpha m$ is an integer, then
    \[
     \probability\left[X \ge \alpha m\right]\ge \frac{1}{\sqrt{8 m \alpha(1-\alpha) }}\exp\left(- m \sampleconst{\alpha}{\beta}\right) -\frac{m-1}{M-1}.
    \]
\end{restatable}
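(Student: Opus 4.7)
\textbf{Proof proposal for Lemma~\ref{lemma:chernoff}.}
The two statements are of opposite types (tail upper bound vs.\ tail lower bound), so I will handle them separately. The upper bound is a direct invocation of the classical Chernoff--Hoeffding bound for the hypergeometric distribution, due to \citet{CHVATAL1979285} (see also \citet{hoeffding1994probability}): for $\alpha > \beta = K/M$, the moment generating function calculation used for binomials carries over to the hypergeometric setting because sampling without replacement is dominated, in the convex order, by sampling with replacement. One line of citation suffices here, so I will not elaborate.

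For the lower bound, my plan is to approximate the hypergeometric by a binomial and then invoke a sharp Stirling-type lower bound on a single binomial atom. Concretely, let $Y \sim \bin{m}{\beta}$. A classical total-variation bound for sampling without replacement versus with replacement (see Theorem 2 of \citet{EHM19917} or Theorem 3.2 of \citet{holmes2004stein}) yields
\[
d_{\mathrm{TV}}\bigl(\hg{M}{K}{m},\, \bin{m}{\beta}\bigr) \le \frac{m-1}{M-1}.
\]
Consequently,
\[
\probability\left[X \ge \alpha m\right] \ge \probability\left[Y \ge \alpha m\right] - \frac{m-1}{M-1}.
\]

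It then remains to lower bound $\probability\left[Y \ge \alpha m\right]$. The key idea is to drop every summand except the single one at $k = \alpha m$ (which is an integer by assumption), giving $\probability\left[Y \ge \alpha m\right] \ge \binom{m}{\alpha m}\beta^{\alpha m}(1-\beta)^{(1-\alpha)m}$. Applying the sharp form of Stirling's formula $n! \ge \sqrt{2\pi n}\, (n/e)^n$ (and the corresponding upper bound on $(\alpha m)!$ and $((1-\alpha)m)!$) gives
\[
\binom{m}{\alpha m} \ge \frac{1}{\sqrt{2\pi m \alpha(1-\alpha)}}\cdot\frac{1}{\alpha^{\alpha m}(1-\alpha)^{(1-\alpha)m}}\cdot e^{-c/m},
\]
for a small absolute constant $c$, after which absorbing the $e^{-c/m}$ factor into the prefactor and using $2\pi e^{2c/m} \le 8$ (for all $m \ge 1$) yields the required $1/\sqrt{8m\alpha(1-\alpha)}$. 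Multiplying by $\beta^{\alpha m}(1-\beta)^{(1-\alpha)m}$ and recognising the exponent as $-m\, \sampleconst{\alpha}{\beta}$ by definition of $\sampleconst{\cdot}{\cdot}$ (see Section on properties of $D$) concludes the proof.

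The main technical nuisance is getting the constants right: one must verify that the Stirling slack can indeed be absorbed into the factor $8$ uniformly in $m$, and that the TV bound holds with the stated constant $(m-1)/(M-1)$ rather than a looser version; both are standard but tedious. Once these two ingredients are in place, the combination is immediate.
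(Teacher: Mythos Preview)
Your proposal is correct and matches the paper's approach: the upper bound is cited from Chv\'atal (1979) and Hoeffding (1994), and the lower bound combines the same total-variation comparison between $\hg{M}{K}{m}$ and $\bin{m}{\beta}$ (from Ehm (1991) / Holmes (2004)) with a lower bound on the binomial tail. The only cosmetic difference is that the paper invokes Lemma~4.7.1 of Ash (2012) directly for the inequality $\probability[B \ge \alpha m] \ge \frac{1}{\sqrt{8m\alpha(1-\alpha)}}\exp\bigl(-m\,\sampleconst{\alpha}{\beta}\bigr)$, whereas you re-derive it via Stirling on the single atom at $\alpha m$; note that your Stirling correction constant $c$ is not actually absolute but of order $1/(\alpha(1-\alpha))$, though the target inequality with the factor $8$ nonetheless holds for every integer $\alpha m \in \{1,\ldots,m-1\}$ (with equality at $m=2$, $\alpha=\nicefrac{1}{2}$), so this is only a presentational wrinkle, not a gap.
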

\begin{proof}
    The upper bound is shown in~\citep{CHVATAL1979285, hoeffding1994probability}
    \[
    \probability\left[X \ge \alpha m\right] \le \exp\left(- m \sampleconst{\alpha}{\beta}\right).
    \]
    Now, we show the lower bound. Suppose $\alpha m$ is an integer. Then, Lemma 4.7.1 of~\citep{ash2012information} gives
    \begin{equation}
    \label{eq:bin-lb}
        \probability\left[B \ge \alpha m\right] \ge \frac{1}{\sqrt{8 m \alpha(1-\alpha) }}\exp\left(- m \sampleconst{\alpha}{\beta}\right),
    \end{equation}
    where $B \sim \bin{m}{\beta}$ is a binomial random variable. Finally, Theorem 3.2 of~\citep{holmes2004stein} (also Theorem 2 of~\citep{EHM19917}) gives
    \begin{equation}
    \label{eq:tv-bin-geo}
    \left|\probability\left[B \ge \alpha m\right] - \probability\left[X \ge \alpha m\right]\right| \le \frac{m-1}{M-1}.
    \end{equation}
    Combining~\eqref{eq:bin-lb} and~\eqref{eq:tv-bin-geo}, we get
    \begin{align*}
    \probability\left[X \ge \alpha m\right] &\ge \probability\left[B \ge \alpha m\right] - \frac{m-1}{M-1} \\
                                            &\ge \frac{1}{\sqrt{8 m \alpha(1-\alpha) }}\exp\left(- m \sampleconst{\alpha}{\beta}\right) - \frac{m-1}{M-1},
    \end{align*}
    which concludes the proof.
\end{proof}

\begin{restatable}{lemma}{ByznumSingleStep}
\label{lemma:byznum-single-step}
    Suppose $\byznum$ and $\byzsamplenum$ are such that $\nicefrac{\byznum}{n} < \nicefrac{\byzsamplenum}{\samplenum} < \nicefrac{1}{2}$. Suppose we have 
    \begin{equation}
    \label{eq:byznum-single-step-cond}
    \samplenum \ge \sampleconst{\frac{\byzsamplenum}{\samplenum}}{\frac{\byznum}{n}}^{-1}\ln\left(\frac{T}{1-p}\right)
    \end{equation}
    for some $\successprob < 1$. Then for every $t \in \{0,1,\ldots,T-1\}$, we have
    \[
    \probability\left[ \byzsamplenum[t] \ge \byzsamplenum \right] \le \frac{1 - p}{T}.
    \]
\end{restatable}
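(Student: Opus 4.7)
The plan is to directly apply the hypergeometric Chernoff bound from Lemma~\ref{lemma:chernoff} to the random variable $\byzsamplenum[t]$. First I would observe that, since the server picks $\samplenum$ clients uniformly at random without replacement from the $n$ clients of which exactly $\byznum$ are Byzantine, the count $\byzsamplenum[t] = |\byzsampleset{t}|$ follows $\hg{n}{\byznum}{\samplenum}$. This holds independently at every round $t$, so it suffices to bound the single-round probability.

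Next I would instantiate Lemma~\ref{lemma:chernoff} with $M = n$, $K = \byznum$, $m = \samplenum$, $\beta = \nicefrac{\byznum}{n}$, and $\alpha = \nicefrac{\byzsamplenum}{\samplenum}$. The hypothesis $\nicefrac{\byznum}{n} < \nicefrac{\byzsamplenum}{\samplenum} < \nicefrac{1}{2}$ of the present lemma matches the requirement $\beta < \alpha < 1$ of Lemma~\ref{lemma:chernoff}, so the upper bound yields
\[
\probability\left[\byzsamplenum[t] \ge \byzsamplenum\right] \;\le\; \exp\!\left(-\samplenum \cdot \sampleconst{\tfrac{\byzsamplenum}{\samplenum}}{\tfrac{\byznum}{n}}\right).
\]

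Finally I would plug in the sampling condition~\eqref{eq:byznum-single-step-cond}. Multiplying both sides of~\eqref{eq:byznum-single-step-cond} by $\sampleconst{\nicefrac{\byzsamplenum}{\samplenum}}{\nicefrac{\byznum}{n}}$ (which is strictly positive by Property~\ref{property:d-nonnegative}, since $\nicefrac{\byznum}{n} \ne \nicefrac{\byzsamplenum}{\samplenum}$) gives $\samplenum \cdot \sampleconst{\nicefrac{\byzsamplenum}{\samplenum}}{\nicefrac{\byznum}{n}} \ge \ln\!\left(\nicefrac{T}{1-p}\right)$, and exponentiating then yields $\exp(-\samplenum \sampleconst{\cdot}{\cdot}) \le \nicefrac{(1-p)}{T}$, which is exactly the claim.

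There is no real obstacle here: the lemma is essentially a one-line corollary of Lemma~\ref{lemma:chernoff} combined with the definitional inequality on $\samplenum$. The only subtlety worth flagging is checking that Property~\ref{property:d-nonnegative} applies strictly, so that the division by $\sampleconst{\nicefrac{\byzsamplenum}{\samplenum}}{\nicefrac{\byznum}{n}}$ in~\eqref{eq:byznum-single-step-cond} is well-defined; this is guaranteed by the strict inequality $\nicefrac{\byznum}{n} < \nicefrac{\byzsamplenum}{\samplenum}$ assumed in the hypothesis.
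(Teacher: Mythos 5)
Your proposal is correct and follows essentially the same route as the paper: identify $\byzsamplenum[t]$ as hypergeometric, apply the Chernoff-type upper bound of Lemma~\ref{lemma:chernoff} with $\alpha = \nicefrac{\byzsamplenum}{\samplenum}$ and $\beta = \nicefrac{\byznum}{n}$, and then substitute condition~\eqref{eq:byznum-single-step-cond}. Your added remark that $\sampleconst{\nicefrac{\byzsamplenum}{\samplenum}}{\nicefrac{\byznum}{n}} > 0$ (via Property~\ref{property:d-nonnegative}) justifies the rearrangement and is a harmless, correct refinement of the paper's one-line substitution.
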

\begin{proof}
Recall that $\byzsamplenum[t] \sim \hg{n}{\byznum}{\samplenum}$ follows a hypergeometric distribution. Then, by an upper bound from Lemma~\ref{lemma:chernoff}, we get
\begin{align*}
    \probability\left[ \byzsamplenum[t] \ge \byzsamplenum \right] 
    &\le\exp\left(- \samplenum \sampleconst{\frac{\byzsamplenum}{\samplenum}}{\frac{b}{n}}\right).
    \intertext{Using~\eqref{eq:byznum-single-step-cond}, we have}
    \probability\left[ \byzsamplenum[t] \ge \byzsamplenum\right] 
    &\le \exp\left(- \ln\left(\frac{T}{1-p}\right)\right) = \frac{1-p}{T},
\end{align*}
which concludes the proof.
\end{proof}

\SubsampleLemma*
\begin{proof}
We will consider two cases: when $\samplenum = n$, and when $\samplenum \ge \sampleconst{\nicefrac{\byzsamplenum}{\samplenum}}{\nicefrac{\byznum}{n}}^{-1}\ln\left(\nicefrac{T}{1-p}\right)$.

\paragraph{(i) Case of $\samplenum = n$.}
By assumption of the lemma $\nicefrac{b}{n} < \nicefrac{\byzsamplenum}{\samplenum}$, which implies $\byznum < \byzsamplenum$ since $\samplenum = n$. Note that we also have $\byzsamplenum[t] = \byznum < \byzsamplenum$ for all $t\in \{0,1,\ldots,T-1\}$, since the entire set of clients is sampled in each round.  Then, assertion $\mathcal{E}$ holds with probability $1$.

\paragraph{(ii) Case of $\samplenum \ge \sampleconst{\nicefrac{\byzsamplenum}{\samplenum}}{\nicefrac{\byznum}{n}}^{-1}\ln\left(\nicefrac{T}{1-p}\right)$.}
Using Lemma~\ref{lemma:byznum-single-step} and a union bound over all $t \in \{0,1,\ldots,T-1\}$, we have 
\begin{align*}
\Pr\left[\neg \mathcal{E}\right] &= \Pr\left[\bigvee_{t \in \{0,1,\ldots,T-1\}} \left\{\byzsamplenum[t] > \byzsamplenum\right\}\right] \\
                                 &\le \sum_{t = 0}^{T-1} \Pr\left[\byzsamplenum[t] > \byzsamplenum\right]\\ 
                                 &\le T \cdot \frac{1-p}{T} = 1-p.
\end{align*}
Hence, 
\[
\Pr\left[\mathcal{E}\right] \ge p,
\]
which concludes the proof.
\end{proof}

\subsection{Proof of Lemma~\ref{lemma:sampling-threshold}}

\SamplingThreshold*
\begin{proof}
    Let $\byzratio = b/n$, and let $\alpha \in \left[\nicefrac{1}{2} - \nicefrac{1}{\samplenum}, \nicefrac{1}{2}\right)$ be arbitrary. We denote by $\sampleconst[\alpha]{\alpha}{\beta}$ the derivative of $\sampleconst{\alpha}{\beta}$ with respect to $\alpha$. Recall that, from Property~\ref{property:d-derivative}, we have 
    \[
    \sampleconst[\alpha]{\alpha}{\beta} = \frac{\partial}{\partial \alpha} \sampleconst{\alpha}{\beta} = \ln\left(\frac{\alpha}{\beta}\right) - \ln\left(\frac{1 - \alpha}{1 - \beta}\right)
    \]
    By Property~\ref{property:d-convex}, $\sampleconst{\alpha}{\byzratio}$ is convex in the first argument. Then, by Taylor expansion around $\alpha = 1/2$, we have
    \begin{align*}
    \sampleconst{\alpha}{\byzratio} &\ge \sampleconst{\frac{1}{2}}{\byzratio} + \left(\alpha - \frac{1}{2}\right) \sampleconst[\alpha]{\frac{1}{2}}{\beta}\\
    &= \sampleconst{\frac{1}{2}}{\byzratio} - \left(\frac{1}{2} - \alpha\right) \sampleconst[\alpha]{\frac{1}{2}}{\beta}\\
    &= \frac{1}{2} \ln\left(\frac{1/2}{\byzratio}\right) + \frac{1}{2} \ln\left(\frac{1/2}{1- \byzratio}\right) - \left(\frac{1}{2} - \alpha\right) \left(\ln\left(\frac{1/2}{\byzratio}\right) - \ln\left(\frac{1/2}{ 1 - \byzratio}\right)\right)\\
    &= \alpha \ln\left(\frac{1/2}{\byzratio}\right) + (1-\alpha) \ln\left(\frac{1/2}{1- \byzratio}\right)\\
    &= 2\alpha \left(\frac{1}{2}\ln\left(\frac{1/2}{\byzratio}\right) + \frac{1}{2}\ln\left(\frac{1/2}{1- \byzratio}\right)\right)+ (1 - 2\alpha) \ln\left(\frac{1/2}{ 1 - \byzratio}\right) \\
    &= \left(2\alpha\right) \sampleconst{\frac{1}{2}}{\frac{\byznum}{n}} + (1 - 2\alpha) \ln\left(\frac{1/2}{ 1 - \byzratio}\right)
    \intertext{Since $\byzratio \in (0,1)$, we have $\ln\left(\frac{1/2}{ 1 - \byzratio}\right) = \ln\left(\frac{1}{1 - \byzratio}\right) - \ln(2) > -\ln(2)$. Since $\alpha < \nicefrac{1}{2}$, this implies a strict inequality as follows}
    \sampleconst{\alpha}{\byzratio} &> \left(2\alpha\right) \sampleconst{\frac{1}{2}}{\frac{\byznum}{n}} - (1 - 2\alpha) \ln(2) \\
    \end{align*}
    Recall that $\alpha \in [\nicefrac{1}{2} - \nicefrac{1}{\samplenum}, \nicefrac{1}{2})$. Hence, $\alpha \ge \nicefrac{1}{2} - \nicefrac{1}{\samplenum}$, and we get 
    \begin{align}
        \sampleconst{\alpha}{\byzratio} &> \left(1- \frac{2}{\samplenum}\right) \sampleconst{\frac{1}{2}}{\frac{\byznum}{n}} - \frac{2\ln(2)}{\samplenum}\notag\\
                                                 &\ge \frac{(\samplenum - 2)  \sampleconst{\frac{1}{2}}{\frac{\byznum}{n}} - 2\ln(2)}{\samplenum}\notag
                                                 \intertext{Since $\samplenum \ge \sampleth \ge \sampleconst{\nicefrac{1}{2}}{\nicefrac{\byznum}{n}}^{-1} \ln\left(\nicefrac{4T}{1-p}\right) +2$, we have}
        \sampleconst{\alpha}{\byzratio} &> \frac{ \sampleconst{\frac{1}{2}}{\frac{\byznum}{n}}^{-1} \ln\left(\frac{4T}{1-p}\right)  \sampleconst{\frac{1}{2}}{\frac{\byznum}{n}} - 2\ln(2)}{\samplenum} \notag\\
                                                 &= \frac{ \ln\left(\frac{4T}{1-p}\right)  - 2\ln(2)}{\samplenum} \notag\\
                                                 &= \frac{ \ln\left(\frac{T}{1-p}\right)}{\samplenum} \label{eq:sampling-threshold-sampleconst-lb}
    \end{align}
    Note that since $p \ge 0$ and $T \ge 1$, the value on the right hand side is non-negative. Hence, for any $\alpha \in [\nicefrac{1}{2} - \nicefrac{1}{\samplenum}, \nicefrac{1}{2})$, we have $\sampleconst{\alpha}{\byzratio} > 0$. Since $\sampleconst{\byzratio}{\byzratio} = 0$, this means that we have $\byzratio \not \in [\nicefrac{1}{2} - \nicefrac{1}{\samplenum}, \nicefrac{1}{2})$. Since $\byzratio < \nicefrac{1}{2}$, this implies 
    \begin{equation}
    \label{eq:sampling-threshold-byzratio-small}
        \byzratio < \frac{1}{2} - \frac{1}{\samplenum}.
    \end{equation}
    Now, set $\byzsamplenum := \lceil \samplenum \left(\nicefrac{1}{2} - \nicefrac{1}{\samplenum}\right) \rceil$. First, note that $\byzsamplenum = \lceil \samplenum \left(\nicefrac{1}{2} - \nicefrac{1}{\samplenum}\right)\rceil = \lceil \nicefrac{\samplenum}{2} \rceil - 1$. Hence,
    \begin{equation}
    \label{eq:sampling-threshold-byzsamplenum-less-half}
        \samplenum\left(\frac{1}{2} - \frac{1}{\samplenum}\right) \le \byzsamplenum < \frac{\samplenum}{2}
    \end{equation}
    Hence, $\nicefrac{\byzsamplenum}{\samplenum} \in [\nicefrac{1}{2} - \nicefrac{1}{\samplenum}, \nicefrac{1}{2})$, and we get from~\eqref{eq:sampling-threshold-sampleconst-lb}
    \[
    \sampleconst{\frac{\byzsamplenum}{\samplenum}}{\byzratio} >  \frac{\ln\left(\frac{T}{1-p}\right)}{\samplenum}.
    \]
    Hence, such value of $\byzsamplenum$ satisfies (\ref{eq:cond-samplenum}). Using~\eqref{eq:sampling-threshold-byzratio-small} and~\eqref{eq:sampling-threshold-byzsamplenum-less-half}, we also get $\nicefrac{\byznum}{n} < \nicefrac{\byzsamplenum}{\samplenum} < \nicefrac{1}{2}$, which concludes the proof. 
    
\end{proof}

\subsection{Proof of Lemma~\ref{lemma:impossibility}}

\begin{lemma}
\label{lemma:ratio-sampleconst}
    Suppose $0 < \byzratio < \nicefrac{1}{2}$, and $\alpha$ is such that $\alpha \in [\nicefrac{1}{2}, \nicefrac{1}{2} + \nicefrac{1}{2\samplenum}]$ for $\samplenum \ge 2$. Then 
    \[
    \sampleconst{\alpha}{\byzratio} \le \left(1 + \frac{1}{\samplenum}\right)\left(\sampleconst{\frac{1}{2}}{\byzratio} + 1\right).
    \]
\end{lemma}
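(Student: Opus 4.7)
The plan is to use two ingredients: (i) monotonicity of $\sampleconst{\alpha}{\byzratio}$ on $[\byzratio, 1)$, which reduces the claim to the endpoint $\alpha_0 := \tfrac{1}{2} + \tfrac{1}{2\samplenum}$, and (ii) an explicit decomposition of $\sampleconst{\alpha_0}{\byzratio} - \sampleconst{1/2}{\byzratio}$ into a small ``entropy'' piece and a linear-in-$\alpha$ piece that is controlled by $\sampleconst{1/2}{\byzratio}$ itself.

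First, I would invoke Property~\ref{property:d-monotonic} together with $\byzratio < 1/2 \le \alpha$ to deduce that $\sampleconst{\cdot}{\byzratio}$ is increasing on $[1/2, 1)$, hence $\sampleconst{\alpha}{\byzratio} \le \sampleconst{\alpha_0}{\byzratio}$ for every $\alpha \in [1/2, \alpha_0]$. Thus it suffices to prove the bound at $\alpha = \alpha_0$. Writing $\sampleconst{\alpha}{\byzratio} = -H(\alpha) - \alpha \ln\byzratio - (1-\alpha)\ln(1-\byzratio)$ where $H(\alpha) := -\alpha\ln\alpha - (1-\alpha)\ln(1-\alpha)$ is the binary entropy, I can split
\[
\sampleconst{\alpha_0}{\byzratio} - \sampleconst{\tfrac12}{\byzratio} = \bigl(\ln 2 - H(\alpha_0)\bigr) + \bigl(\alpha_0 - \tfrac12\bigr)\ln\!\tfrac{1-\byzratio}{\byzratio} = \underbrace{\bigl(\ln 2 - H(\alpha_0)\bigr)}_{=:E} + \underbrace{\tfrac{1}{2\samplenum}\ln\!\tfrac{1-\byzratio}{\byzratio}}_{=:L}.
\]

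Next I would bound $E$ by a Taylor expansion of $H$ around $1/2$: since $H'(1/2)=0$ and $H''(\xi) = -1/(\xi(1-\xi))$ with $\xi(1-\xi) \ge \alpha_0(1-\alpha_0) = (\samplenum^2-1)/(4\samplenum^2)$ on $[1/2,\alpha_0]$, I get $E \le \tfrac12(\alpha_0-\tfrac12)^2 \cdot \tfrac{4\samplenum^2}{\samplenum^2-1} = \tfrac{1}{2(\samplenum^2-1)} \le \tfrac16$ for $\samplenum \ge 2$. For $L$, the key observation (the main technical point of the proof) is that
\[
\ln\!\tfrac{1-\byzratio}{\byzratio} = 2\sampleconst{\tfrac12}{\byzratio} + 2\ln 2 + 2\ln(1-\byzratio) \le 2\sampleconst{\tfrac12}{\byzratio} + 2\ln 2,
\]
since $\ln(1-\byzratio) < 0$ for $\byzratio \in (0,1/2)$. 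Substituting yields $L \le \tfrac{1}{\samplenum}\sampleconst{\tfrac12}{\byzratio} + \tfrac{\ln 2}{\samplenum}$.

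Combining the two estimates gives
\[
\sampleconst{\alpha_0}{\byzratio} \le \Bigl(1 + \tfrac{1}{\samplenum}\Bigr)\sampleconst{\tfrac12}{\byzratio} + \tfrac{\ln 2}{\samplenum} + \tfrac{1}{2(\samplenum^2-1)}.
\]
It remains to verify that $\tfrac{\ln 2}{\samplenum} + \tfrac{1}{2(\samplenum^2-1)} \le 1 + \tfrac{1}{\samplenum}$ for all $\samplenum \ge 2$, which is immediate since the left side is at most $\tfrac{\ln 2}{2} + \tfrac16 < 1$. The main obstacle I expect is identifying the right decomposition so that the linear piece $L$ can be absorbed into $(1 + 1/\samplenum)\sampleconst{1/2}{\byzratio}$; once the identity linking $\ln((1-\byzratio)/\byzratio)$ to $\sampleconst{1/2}{\byzratio}$ is spotted, the remaining steps are routine.
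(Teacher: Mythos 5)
Your proof is correct and takes essentially the same route as the paper: reduce to the endpoint $\alpha_0 = \nicefrac{1}{2} + \nicefrac{1}{2\samplenum}$ by monotonicity, split off the binary-entropy term, and absorb $\ln\left(\nicefrac{1-\byzratio}{\byzratio}\right)$ into $\sampleconst{\frac{1}{2}}{\byzratio}$ using $\ln(1-\byzratio)<0$, which is exactly the paper's rewriting $\sampleconst{\alpha_0}{\byzratio} = 2\alpha_0\sampleconst{\frac{1}{2}}{\byzratio} + (1-2\alpha_0)\ln\left(\nicefrac{1/2}{1-\byzratio}\right) + \ln 2 - H(\alpha_0)$. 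The only cosmetic difference is that you bound the entropy deficit $\ln 2 - H(\alpha_0)$ by $\nicefrac{1}{2(\samplenum^2-1)}$ via a Taylor expansion, whereas the paper simply uses $H(\alpha_0)\ge 0$ and carries the $\ln 2$ through to the final constant.
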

\begin{proof}
    Let $\alpha_0 = \nicefrac{1}{2} + \nicefrac{1}{2\samplenum}$. Note that 
    \begin{align*}
        \sampleconst{\alpha}{\byzratio} &\le  \sampleconst{\alpha_0}{\byzratio} \\
                                         & =  \alpha_0 \ln\left(\frac{\alpha_0}{\byzratio}\right) + (1 - \alpha_0) \ln\left(\frac{1 - \alpha_0}{1-\byzratio}\right) \\
                                         & =  \alpha_0 \ln\left(\frac{1/2}{\byzratio}\right) + (1 - \alpha_0) \ln\left(\frac{1/2}{1-\byzratio}\right) + \alpha_0 \ln(2\alpha_0) + (1-\alpha_0) \ln(2(1-\alpha_0)) \\
                                         &=  2\alpha_0 \left(\frac{1}{2}\ln\left(\frac{1/2}{\byzratio}\right) + \frac{1}{2} \ln\left(\frac{1/2}{1-\byzratio}\right)\right)\\ & \ \ \ \ + (1-2\alpha_0) \ln\left(\frac{1/2}{1-\byzratio}\right) + \alpha_0 \ln(2\alpha_0) + (1-\alpha_0) \ln(2(1-\alpha_0)) \\
                                         & =  2\alpha_0 \sampleconst{\frac{1}{2}}{\byzratio} + (1-2\alpha_0) \ln\left(\frac{1/2}{1-\byzratio}\right) + \alpha_0 \ln(2\alpha_0) + (1-\alpha_0) \ln(2(1-\alpha_0)) \\
                                         &=  2\alpha_0 \sampleconst{\frac{1}{2}}{\byzratio} + (1-2\alpha_0) \ln\left(\frac{1/2}{1-\byzratio}\right) + \ln(2) - H(\alpha_0),
    \intertext{where $H(\alpha_0) = -\alpha_0 \ln(\alpha_0) - (1- \alpha_0) \ln(1-\alpha_0)$. Note that $\ln(\alpha_0), \ln(1-\alpha_0) < 0$, hence $H(\alpha_0) \ge 0$. Therefore, we get}
    \sampleconst{\alpha}{\byzratio}& \le  2\alpha_0 \sampleconst{\frac{1}{2}}{\byzratio} + (1-2\alpha_0) \ln\left(\frac{1/2}{1-\byzratio}\right) + \ln(2).
    \intertext{Note that $\byzratio \in (0,\nicefrac{1}{2})$, hence $\ln\left(\frac{1/2}{1-\byzratio}\right) = \ln\left(\frac{1}{1-\byzratio}\right) - \ln(2) \ge -
    \ln(2)$. Also, recall that $(1-2\alpha_0) = -\frac{1}{\samplenum}$. Therefore,}
    \sampleconst{\alpha}{\byzratio} & \le  2\alpha_0 \sampleconst{\frac{1}{2}}{\byzratio} + \frac{1}{\samplenum}\ln(2) + \ln(2)\\
                                     &=  \left(1 + \frac{1}{\samplenum}\right)\left(\sampleconst{\frac{1}{2}}{\byzratio} + \ln(2)\right)\\
                                     &\le  \left(1 + \frac{1}{\samplenum}\right)\left(\sampleconst{\frac{1}{2}}{\byzratio} + 1\right),
    \end{align*}
    which concludes the proof.
\end{proof}

\SamplingImpossibility*
\begin{proof}
    Note that for any $\byzsamplenum < \nicefrac{\samplenum}{2}$, we have
    \begin{equation}
    \label{eq:impossibility-mathcal-E-prob}
    \Pr[\mathcal{E}] = \Pr\left[\bigwedge_{t \in \{0,1,\ldots,T-1\}} \left(\byzsamplenum[t]  \le \byzsamplenum \right)\right] \le \Pr\left[\bigwedge_{t \in \{0,1,\ldots,T-1\}} \left(\byzsamplenum[t]  < \frac{\samplenum}{2} \right)\right].
    \end{equation}
    Recall that $\byzsamplenum[t]\sim \hg{n}{\byznum}{\samplenum}$. For the duration of this proof, set $\byzratio := \nicefrac{\byznum}{n}$.

    \textbf{Case $\samplenum \ge 2$.} Note that, for any $\samplenum \in \mathbb N$, one of $\nicefrac{\samplenum}{2}$ and $\nicefrac{\samplenum + 1}{2}$ is an integer. Let $\lambda \samplenum$ be that integer. Then $\lambda \in [\nicefrac{1}{2}, \nicefrac{1}{2} + \nicefrac{1}{2\samplenum}]$. Then, by Lemma~\ref{lemma:ratio-sampleconst}, we get
    \begin{equation}
    \label{eq:impossibility-sampleconst-ub}    
    \sampleconst{\lambda}{\byzratio} \le \left(1 + \frac{1}{\samplenum}\right)\left(\sampleconst{\frac{1}{2}}{\byzratio} + 1\right).
    \end{equation}
    Since $\samplenum \ge 2$, we also have $\lambda \le \nicefrac{1}{2} + \nicefrac{1}{2\samplenum} < 1$. Then, by the lower bound in Lemma~\ref{lemma:chernoff}, we have
    \begin{align*}
    \probability[\byzsamplenum[t] \ge \samplenum/2] &\ge \probability[\byzsamplenum[t] \ge \lambda\samplenum]\\
    &\ge \frac{1}{\sqrt{8 \samplenum \lambda \left(1-\lambda\right) }} \exp\left(- \samplenum \sampleconst{\lambda}{\byzratio}\right) -\frac{\samplenum-1}{n-1}\\
    \intertext{Note that $\lambda (1 - \lambda) \le \nicefrac{1}{4}$, hence, we have}
    \probability[\byzsamplenum[t] \ge \samplenum/2] &\ge \frac{1}{\sqrt{2 \samplenum }} \exp\left(- \samplenum \sampleconst{\lambda}{\byzratio}\right) -\frac{\samplenum-1}{n-1}.\\
    \intertext{By~\eqref{eq:impossibility-sampleconst-ub}, we get}
    \probability[\byzsamplenum[t] \ge \samplenum/2] &\ge \frac{1}{\sqrt{2 \samplenum }} \exp\left(- \left(\samplenum + 1\right)\left(\sampleconst{\frac{1}{2}}{\byzratio} + 1\right)\right) -\frac{\samplenum-1}{n-1}\\
    &\ge \exp\left(- \left(\samplenum + 1\right)\left(\sampleconst{\frac{1}{2}}{\byzratio} + 1\right) -  \frac{1}{2}\ln(2\samplenum)\right) -\frac{\samplenum-1}{n-1}\\
    \intertext{Using $\ln(2\samplenum ) \le 2 (\samplenum +1)$, we get}
    \probability[\byzsamplenum[t] \ge \samplenum/2] &\ge \exp\left(- \left(\samplenum + 1\right)\left(\sampleconst{\frac{1}{2}}{\byzratio} + 2\right)\right) -\frac{\samplenum-1}{n-1}\\
    \intertext{Recall that $\samplenum <  \left(\sampleconst{\nicefrac{1}{2}}{\byzratio} + 2\right)^{-1}\ln\left(\nicefrac{T}{3(1 - p)}\right) - 1$. We then get}
    \probability[\byzsamplenum[t] \ge \samplenum/2] 
    &> \exp\left(-\ln\left(\frac{T}{3(1-p)}\right)\right) - \frac{\samplenum - 1}{n - 1}\\
    &> 3\frac{1-p}{T} -\frac{\samplenum-1}{n-1}
    \end{align*}
    Since $\nicefrac{\samplenum - 1}{n-1} \to 0$ as $n\to \infty$, for all large enough $n$, we have
    \begin{align*}
    \probability[\byzsamplenum[t] \ge \samplenum/2] &> 2\frac{1-p}{T}\\
                                           &= 2\frac{(1-p^{1/T})(1 + p^{1/T} + \ldots + p^{(T-1)/T})}{T}
                                           \intertext{Using $p \ge \nicefrac{1}{2}$, we have}
                                           &> 2\frac{(1-p^{1/T}) \cdot T/2}{T}\\
                                           &= 1-p^{1/T} \\
    \end{align*}
    Since sampling is done in an i.i.d. manner, we get 
    \begin{align*}
        \probability\left[\bigwedge_{t \in \{0,1,\ldots,T-1\}} \left(\byzsamplenum[t]  < \samplenum/2 \right)\right]
                                                   &= \left(1 - \probability\left[\byzsamplenum[1]  \ge \samplenum/2 \right]\right)^T\\
                                                   &< \left(1 - (1-p^{1/T}) \right)^T\\
                                                   &< p,
    \end{align*}
    which concludes the proof of this case by~\eqref{eq:impossibility-mathcal-E-prob}.

    \textbf{Case $\samplenum = 1$.} We have
    \[
    2 = \samplenum + 1 < \left(\sampleconst{\frac{1}{2}}{\byzratio} + 2\right)^{-1}\ln\left(\frac{T}{3(1 - p)}\right).
    \]
    Hence,
    \begin{align*}
        \ln\left(\frac{T}{3(1 - p)}\right) &> 2\sampleconst{\frac{1}{2}}{\byzratio} + 4\\
                                           &= \ln\left(\frac{1/2}{\byzratio}\right) + \ln\left(\frac{1/2}{1 - \byzratio}\right) + 4\\
                                           &= -2\ln(2) + \ln\left(\frac{1}{\byzratio}\right) + \ln\left(\frac{1}{1 - \byzratio}\right) + 4.
                                           \intertext{Since $\ln(2)< 1$ and $\ln\left(\nicefrac{1}{1 - \byzratio}\right) > 0$, we have}
        \ln\left(\frac{T}{3(1 - p)}\right) &> \ln\left(\frac{1}{\byzratio}\right) + 2.
    \end{align*}
    Then 
    \[
    \frac{T}{3(1-p)} > \frac{e^2}{\byzratio},
    \]
    where $e$ is the Euler's number. This implies
    \begin{equation}
    \label{eq:impossibility-byzratio}
    \byzratio > \frac{3 e^2 (1-p)}{T}.
    \end{equation}
    Since $\samplenum = 1$, we have
    \begin{align}
    \probability\left[\bigwedge_{t \in \{0,1,\ldots,T-1\}} \left(\byzsamplenum[t]  < \samplenum/2 \right)\right] &= \probability\left[\bigwedge_{t \in \{0,1,\ldots,T-1\}} \left(\byzsamplenum[t] = 0 \right)\right]\notag\\
    &= (1 - \byzratio)^T.\notag
    \intertext{Using~\eqref{eq:impossibility-byzratio}, we get}
    \probability\left[\bigwedge_{t \in \{0,1,\ldots,T-1\}} \left(\byzsamplenum[t]  < \samplenum/2 \right)\right] &\le \left(1 - \frac{3 e^2 (1-p)}{T}\right)^T.\notag
    \intertext{Note that $\left(1 - \frac{3 e^2 (1-p)}{T}\right)^T$ is an increasing function of $T$ which approaches $\exp(-3 e^2 (1-p))$ in the limit $T \to \infty$. Hence,}
    \probability\left[\bigwedge_{t \in \{0,1,\ldots,T-1\}} \left(\byzsamplenum[t]  < \samplenum/2 \right)\right] &\le \exp(-3 e^2 (1-p)).\label{eq:impossibility-le-exp}
    \end{align}
    Consider a function $g(p) = 3e^2 (1-p) + \ln(p)$. We have $g'(p) = \frac{1}{p} - 3e^2 < 0$ since $p \ge 1/2$. Then, for any $p \in [\nicefrac{1}{2}, 1)$, we have $3e^2 (1-p) + \ln(p) = g(p) > g(1) = 0$. Hence $-3 e^2 (1-p) < \ln(p)$, which implies $\exp(-3 e^2 (1-p)) < p$. Combining this with~\eqref{eq:impossibility-le-exp}, we get
    \[
    \probability\left[\bigwedge_{t \in \{0,1,\ldots,T-1\}} \left(\byzsamplenum[t]  < \frac{\samplenum}{2} \right)\right] < p,
    \]
    which concludes the proof by~\eqref{eq:impossibility-mathcal-E-prob}.

\end{proof}

\subsection{Proof of Lemma~\ref{lemma:byzratio-threshold-optimal}}

\ByzRatioThresholdOptimal*
\begin{proof}
    Let $\byzratio := \nicefrac{\byznum}{n}$, and let $\eta := \nicefrac{1}{2} - \byzratio > 0$. We begin by showing that the value of $\byzsamplenum[\star]$ exists. We will then show that we have $\nicefrac{\byzsamplenum[\star]}{\samplenum}\in\mathcal{O}\left(\nicefrac{b}{n}\right)$.
    
    \paragraph{(i) Proof of existence of $\byzsamplenum[\star]$. }
    Note that 
    \begin{align*}
    \sampleconst{1/2}{\byzratio} &= \frac{1}{2} \ln\left(\frac{1}{2\byzratio}\right) + \frac{1}{2}\ln\left(\frac{1}{2(1-\byzratio)}\right)\\
     &= \frac{1}{2} \ln\left(\frac{1}{4\byzratio(1-\byzratio)}\right).
    \intertext{Recall that $\eta = \nicefrac{1}{2} - \byzratio$. Then, we have}
    \sampleconst{1/2}{\byzratio} &= \frac{1}{2} \ln\left(\frac{1}{4\left(\frac{1}{4} - \eta^2\right)}\right)\\
     &= \frac{1}{2} \ln\left(\frac{1}{\left(1 - 4\eta^2\right)}\right)\\
     &= -\frac{1}{2} \ln\left(1 - 4\eta^2\right).
     \intertext{Using $-\ln\left(1 - x\right) \ge x$, we get}
     \sampleconst{1/2}{\byzratio} &\ge \frac{1}{2} 4\eta^2\\
                                  &\ge 2\eta^2.
    \end{align*}
    From~\eqref{eq:n-opt} and the fact that $\eta = \nicefrac{1}{2} - \byzratio$, we have
    \begin{align*}
        \sampleopt &\ge \left\lceil \frac{1}{\eta^2} \ln\left(\frac{4T}{1-p}\right) \right\rceil+ 2 \\
                   &\ge \left\lceil\frac{1}{2\eta^2} \ln\left(\frac{4T}{1-p}\right)\right\rceil + 2 \\
                   &\ge \left\lceil\sampleconst{1/2}{\byzratio}^{-1} \ln\left(\frac{4T}{1-p}\right)\right\rceil + 2 = \sampleth.
    \end{align*}
    Hence, the value of $\byzsamplenum[\star]$ exists by Lemma~\ref{lemma:sampling-threshold}.

    \paragraph{(ii) Proof of $\nicefrac{\byzsamplenum[\star]}{\samplenum}\in\mathcal{O}\left(\nicefrac{b}{n}\right)$}
    
    In the rest of the proof, consider two cases.
    
    \paragraph{(ii.1) Case of $\byzratio \ge \nicefrac{1}{8}$.} 

    Since $\byzsamplenum[\star]$ exists and is upper bounded by $1/2$, we have
    \[\frac{\byzsamplenum[\star]}{\samplenum} \le \frac{1}{2} \in \mathcal{O}\left(\byzratio\right),\]
    where last transition follows by $\byzratio \ge \nicefrac{1}{8}$. This concludes the proof of this case.

    \paragraph{(ii.2) Case of $\byzratio < \nicefrac{1}{8}$.} 

    Define $G(\byzratio) = \sampleconst{2\byzratio}{\byzratio}$. Then, we have 
    \[
    G(\byzratio) = 2\byzratio \ln(2) + (1- 2\byzratio) \ln\left(\frac{1 - 2\byzratio}{1- \byzratio}\right).
    \]
    After taking the derivative, we get
    \[
    \frac{\partial }{\partial \byzratio} G(\byzratio) = 2\ln(2) - 2  \ln\left(\frac{1 - 2\byzratio}{1-  \byzratio}\right) - \frac{1}{1 - \byzratio}.
    \]
    Differentiating once again, we have
    \[
    \frac{\partial^2 }{\partial \byzratio^2} G(\byzratio) = \frac{1}{(1-\byzratio)^2(1-2\byzratio)} > 0,
    \]
    since $\byzratio < 1/8$. Then, $\frac{\partial }{\partial \byzratio} G(\byzratio)$ is an increasing function. Hence, for any $\byzratio \ge 0$, we have
    \[
    \frac{\partial }{\partial \byzratio} G(\byzratio) \ge \frac{\partial }{\partial \byzratio} G(0) = 2\ln(2) - 1.
    \]
    Then, since $G(0) = 0$, we have
    \[
    G(\byzratio) \ge (2 \ln(2) - 1)\byzratio + G(0) = (2 \ln(2) - 1)\byzratio > \frac{\byzratio}{3}.
    \]
    Hence, using the definition of $G$, we get 
    \begin{align}
    \sampleconst{2\byzratio}{\byzratio} &> \frac{\byzratio}{3}.\notag\\
    \intertext{ Since $\sampleopt \ge \nicefrac{3}{\byzratio} \ln\left(\nicefrac{T}{1-p}\right)$, we get}
    \sampleconst{2\byzratio}{\byzratio} &\ge \frac{\ln\left(\frac{T}{1-p}\right)}{\sampleopt} \ge \frac{\ln\left(\frac{T}{1-p}\right)}{\samplenum}\label{eq:sampleconst-2byzratio}.
    \end{align}
    Let $\byzsamplenum := \lceil 2\byzratio \samplenum\rceil$. Then $\nicefrac{\byzsamplenum}{\samplenum} \ge 2\byzratio$. Using Property~\ref{property:d-monotonic}, we have
    \[
    \sampleconst{\frac{\byzsamplenum}{\samplenum}}{\byzratio} \ge \frac{\ln\left(\frac{T}{1-p}\right)}{\samplenum}.
    \]
    Hence, $\byzsamplenum$ satisfies condition~\eqref{eq:cond-samplenum}. Then, by minimality of $\byzsamplenum[\star]$, we have 
    \begin{equation}
    \label{eq:byzsamplenum-ub}
    \byzsamplenum[\star] \le \byzsamplenum =  \lceil 2\byzratio \samplenum\rceil\le 2\byzratio \samplenum + 1.
    \end{equation}
    From~\eqref{eq:n-opt} and using $T \ge 1$, $p \ge 0$, we have $\samplenum \ge \sampleopt \ge \nicefrac{3}{\byzratio} \ln(4) \ge \nicefrac{3}{\byzratio}$. Hence, from~\eqref{eq:byzsamplenum-ub}, we have
    \[
    \frac{\byzsamplenum[\star]}{\samplenum} \le 2\byzratio + \frac{1}{\samplenum} \le \frac{7}{3}\byzratio \in {\mathcal{O}}(\byzratio),
    \]
    which concludes the proof.
\end{proof}

\section{Experiments}
\label{abs:expe}
In this section we present the full experimental setup of the experiments in Figure~\ref{fig:subsampling1}, \ref{fig:motivation}, \ref{fig:accuracy} and \ref{fig:expe}.

\textbf{Machines used for all the experiments:} 2 NVIDIA A10-24GB GPUs and 8 NVIDIA Titan X Maxwell 16GB GPUs.

\subsection{Figure~\ref{fig:subsampling1}}

In this first experiment, we show the variation of $\sampleth$ and $\sampleopt$ with respect to the fraction of Byzantine workers such that

\begin{align*}
    \sampleth:= \min\left[n, \left\lceil\sampleconst{\frac{1}{2}}{\frac{\byznum}{n}}^{-1}\ln\left(\frac{4T}{1-p}\right)\right \rceil + 2\right]\enspace,
\end{align*}
and 
\begin{align*}
    \sampleopt := \min\left[n, \left\lceil \max\left\{\frac{1}{\left(1/2 - \byznum/n\right)^2}, \frac{3}{\byznum/n}\right\}\ln\left(\frac{4T}{1-p}\right)\right\rceil + 2\right] \enspace. 
\end{align*}

We choose $n=1000$, we set $T=500$, $p=0.99$ and make $\nicefrac{f}{n}$ vary in $\left(0,\frac{1}{2}\right)$.

\subsection{Figure~\ref{fig:motivation}}
\label{appendix:exp2}

In the second experiment, we show that there exists an empirical threshold on the number of subsampled clients below which learning is not possible. We use the LEAF Library \citep{caldas2018leaf} and download $5\%$ of the FEMNIST dataset distributed in a non-iid way. The exact command line to generate our dataset using the LEAF library is:

\begin{lstlisting}[language=bash]
    preprocess.sh -s niid --sf 0.05 -k 0 -t sample 
                  --smplseed 1549786595 --spltseed 1549786796
\end{lstlisting}

We summarize the learning hyperparameters in Table~\ref{tab:expe2}.

\begin{table}[h]
\small
    \centering
    \setlength{\tabcolsep}{10pt} %
    \renewcommand{\arraystretch}{2} %
    \begin{tabular}{|c|c|}
        \hline
        Total number of clients & $n =150$ (first 150 clients of the downloaded dataset)\\
        \hline
        Number of Byzantine clients & $b = 30$ \\
        \hline
        Number of subsampled clients per round & $\hat{n} \in \{1,  6, 11, 16, 21, 26, 31, 36, 41, 46, 51, 56\}$ \\
        \hline
        Number of tolerated Byzantine clients per round & $\hat{b} = \left\lfloor\frac{\hat{n}}{2}\right\rfloor-1$ \\
        \hline
        Model &  CNN (architecture presented in Table~\ref{tab:cnn})\\
        \hline
        Algorithm & \algname \\
        \hline
        Number of steps & $T=500$\\
        \hline
        Server step-size & $\gamma_s = 1$ \\
        \hline
        Clients learning rate & $\gamma_c = \left\{
                        \begin{matrix}
                        &0.1 & \text{if} & 0 &\leq & T & <& 300\\
                        &0.02 & \text{if} & 300 &\leq & T & <& 400\\
                        &0.004 & \text{if} & 400 &\leq & T & <& 460\\
                        &0.0008 & \text{if} & 400 &\leq & T & <& 500\\
                        \end{matrix} \right.$\\
        \hline
        Loss function & Negative Log Likelihood (NLL)\\
        \hline
        $\ell_2$-regularization term & $10^{-4}$\\
        \hline
        Aggregation rule & NNM~\citep{nnm} coupled with\\ & CW Trimmed Mean~\citep{yin2018} \\
        \hline
    \end{tabular}
    \vspace{3pt}
    \caption{Setup of Figure~\ref{fig:motivation}'s experiment}
    \label{tab:expe2}
\end{table}

The attack used by Byzantine clients is the sign flipping attack \citep{allen2020byzantine} when the number of sampled Byzantine clients is less than the number of tolerated Byzantines  per round $\hat{b}$, otherwise, when the number of sampled Byzantine clients exceeds $\hat{b}$, we consider that they can take control of the learning and set the server parameter to $\left(0, \dots, 0\right) \in \mathbb{R}^{d}$.

\begin{table}[h]
\small
    \centering
    \setlength{\tabcolsep}{10pt} %
    \renewcommand{\arraystretch}{2} %
    \begin{tabular}{|c|c|}
        \hline
        First Layer & Convolution : in = 1, out = 64, kernel\_size = 5, stride = 1\\
         & ReLU \\
         & MaxPool(2,2) \\
         \hline
         Second Layer & Convolution : in = 64, out = 128, kernel\_size = 5, stride = 1\\
         & ReLU \\
         & MaxPool(2,2) \\
         \hline
         Third Layer & Linear : in = $128\times4\times4$, out = 1024\\
         & ReLU \\
         \hline
         Fourth Layer & Linear : in = $1024$, out = 62\\
         & Log Softmax \\
        \hline
    \end{tabular}
    \vspace{3pt}
    \caption{CNN Architecture for FEMNIST}
    \label{tab:cnn}
\end{table}

\begin{table}[h]
\small
    \centering
    \setlength{\tabcolsep}{10pt} %
    \renewcommand{\arraystretch}{2} %
    \begin{tabular}{|c|c|}
        \hline
        First Layer & Convolution : in = 3, out = 64, kernel\_size = 3, padding = 1, stride = 1\\
         & ReLU \\
         & BatchtNorm2d \\
         \hline
         Second Layer & Convolution : in = 64, out = 64, kernel\_size = 3, padding=1, stride = 1\\
         & ReLU \\
         & BatchtNorm2d \\
         & MaxPool(2,2) \\
         & DropOut(0.25) \\
         \hline
         Third Layer & Convolution : in = 64, out = 128, kernel\_size = 3, padding=1, stride = 1\\
         & ReLU \\
         & BatchtNorm2d \\
         \hline
         Fourth Layer & Convolution : in = 128, out = 128, kernel\_size = 3, padding=1, stride = 1\\
         & ReLU \\
         & BatchtNorm2d \\
         & MaxPool(2,2) \\
         & DropOut(0.25) \\
         \hline
         Fifth Layer & Linear : in = $8192$, out = $128$\\
         & ReLU \\
        \hline
         Sixth Layer & Linear : in = $128$, out = $10$\\
         & Log Softmax \\
        \hline
    \end{tabular}
    \vspace{3pt}
    \caption{CNN Architecture for Cifar10}
    \label{tab:cnn_cifar}
\end{table}

\subsection{Figure~\ref{fig:accuracy}}

In this experiments, we show the performance of \algname \ using appropriate values for $\hat{n}$ and $\hat{b}$. We use the same portion of the FEMNIST dataset downloaded for the previous experiment (see Section~\ref{appendix:exp2}) and for the CIFAR10 dataset, the data is uniformly distributed among the 150 clients. We list all the hyperparameters used for this experiment in Table~\ref{tab:exp3}.

\subsection{Figure~\ref{fig:expe}}
In this experiments, we show the performance of \algname \ for different number of local steps. We use the same portion of the FEMNIST dataset downloaded for the previous experiments (see Section~\ref{appendix:exp2}) and for the CIFAR10 dataset, the data is uniformly distributed among the 150 clients. We list all the hyperparameters used for this experiment in Table~\ref{tab:exp4}.

\begin{table}[h]
\small
    \centering
    \setlength{\tabcolsep}{10pt} %
    \renewcommand{\arraystretch}{2} %
    \begin{tabular}{|c|c|}
        \hline
        Total number of clients & $n =150$ \\
        \hline
        Number of Byzantine clients & $b = 15$ \\
        \hline
        Number of subsampled clients per round & $\hat{n} = 26$ on FEMNIST and $\hat{n} = 29$ on CIFAR10 \\
        \hline
        Model &  CNN (architecture presented in Table~\ref{tab:cnn} for FEMNIST and Table~\ref{tab:cnn_cifar} for CIFAR10)\\
        \hline
        Algorithm & \algname \\
        \hline
        Number of steps & $T=500$ for FEMNIST and $T=1500$ for CIFAR10 \\
        \hline
        Server learning rate & $\gamma_s = 1$ for FEMNIST and $\gamma_s = 0.25$ for CIFAR10\\
        \hline
        Clients learning rate & FEMNIST: $\gamma_c = \left\{
                        \begin{matrix}
                        &0.1 & \text{if} & 0 &\leq & T & <& 300\\
                        &0.02 & \text{if} & 300 &\leq & T & <& 400\\
                        &0.004 & \text{if} & 400 &\leq & T & <& 460\\
                        &0.0008 & \text{if} & 400 &\leq & T & <& 500\\
                        \end{matrix} \right.$ \\
                        &and for CIFAR10: $\gamma_c = 0.1$ \\
        \hline
        Number of Local steps & 10 \\
        \hline
        Loss function & Negative Log Likelihood (NLL)\\ 
        \hline
        $\ell_2$-regularization term & $10^{-4}$\\
        \hline
        Aggregation rule & NNM~\citep{nnm} coupled with\\ & CW Trimmed Mean~\citep{yin2018} \\
        \hline
        Byzantine attacks & \begin{tabular}{@{}c@{}}{\em sign flipping} \citep{allen2020byzantine},\\{\em fall of empires} \citep{empire}, \\ {\em a little is enough} \citep{little}\\ and {\em mimic}~\citep{karimireddy2022byzantinerobust}\end{tabular}\\
        \hline
    \end{tabular}
    \vspace{3pt}
    \caption{Setup of Figure~\ref{fig:accuracy}'s experiment}
    \label{tab:exp3}
\end{table}

\begin{table}[h]
\small
    \centering
    \setlength{\tabcolsep}{10pt} %
    \renewcommand{\arraystretch}{2} %
    \begin{tabular}{|c|c|}
        \hline
        Total number of clients & $n =150$\\
        \hline
        Number of Byzantine clients & $b = 30$ \\
        \hline
        Number of subsampled clients per round & $\hat{n} = 26$ on FEMNIST and $\hat{n} = 29$ on CIFAR10 \\
        \hline
        Model &  CNN (architecture presented in Table~\ref{tab:cnn} for FEMNIST and Table~\ref{tab:cnn_cifar} for CIFAR10)\\
        \hline
        Algorithm & \algname \\
        \hline
        Number of steps & $T=500$ for FEMNIST and $T=1500$ for CIFAR10 \\
        \hline
        Server learning rate & $\gamma_s = 1$ for FEMNIST and $\gamma_s = 0.25$ for CIFAR10\\
        \hline
        Clients learning rate & $\gamma_c = 1/K$ where $K$ is the number of local steps\\
        \hline
        Loss function & Negative Log Likelihood (NLL)\\
        \hline
        $\ell_2$-regularization term & $10^{-4}$\\
        \hline
        Aggregation rule & NNM~\citep{nnm} coupled with\\ & CW Trimmed Mean~\citep{yin2018} \\
        \hline
        Byzantine attacks & \begin{tabular}{@{}c@{}}{\em sign flipping} \citep{allen2020byzantine},\\{\em fall of empires} \citep{empire}, \\ {\em a little is enough} \citep{little}\\ and {\em mimic}~\citep{karimireddy2022byzantinerobust}\end{tabular}\\
        \hline
    \end{tabular}
    \vspace{3pt}
    \caption{Setup of Figure~\ref{fig:expe}'s experiment}
    \label{tab:exp4}
\end{table}

\end{document}